\newtheorem{theorem}{Theorem}[section]
\newtheorem{proposition}[theorem]{Proposition}
\newtheorem{corollary}[theorem]{Corollary}
\newtheorem{definition}[theorem]{Definition}
\newtheorem{lemma}[theorem]{Lemma}
\newtheorem{remark}[theorem]{Remark}
\newtheorem{problem}[theorem]{Problem}
\newtheorem{assumption}[theorem]{Assumption}
\newcommand{\BB}{\mathcal{B}}
\newcommand{\RR}{\mathcal{R}}
\newcommand{\NN}{\mathcal{N}}
\newcommand{\nat}{\mathbb{N}}
\newcommand{\real}{{\mathbb{R}}}
\newcommand{\env}{\mathcal{E}}
\newcommand{\p}{\mathbf{p}}
\newcommand{\q}{\mathbf{q}}
\newcommand{\setdef}[2]{\{#1 \; | \; #2\}}
\newcommand{\vmax}{v_{\max}}
\newcommand{\vmin}{v_{\min}}
\newcommand{\num}{\operatorname{num}}
\newcommand{\den}{\operatorname{den}}
\newcommand{\unit}[1]{\ensuremath{~\mathrm{#1}}}
\newcommand\oprocendsymbol{\hbox{$\square$}}
\newcommand\oprocend{\relax\ifmmode\else\unskip\hfill\fi\oprocendsymbol}
\title{\huge Persistent Robotic Tasks: \\ Monitoring and Sweeping in
  Changing Environments}
\author{Stephen L. Smith \qquad Mac Schwager \qquad Daniela Rus
\thanks{This material is based upon work supported in part by ONR-MURI
  Award N00014-09-1-1051} \thanks{The authors are with the Computer
  Science and Artificial Intelligence Laboratory, Massachusetts
  Institute of Technology, Cambridge MA 02139 (\smith, \schwager,
  \rus).  M. Schwager is also affiliated with the GRASP Laboratory,
  University of Pennsylvania, Philadelphia, PA 19104.}}
\begin{document}
\maketitle
\begin{abstract}

  We present controllers that enable mobile robots to persistently
  monitor or sweep a changing environment. The changing environment is
  modeled as a field which grows in locations that are not within
  range of a robot, and decreases in locations that are within range
  of a robot. We assume that the robots travel on given closed paths.
  The speed of each robot along its path is controlled to prevent the
  field from growing unbounded at any location. We consider the space
  of speed controllers that can be parametrized by a finite set of
  basis functions. For a single robot, we develop a linear program
  that is guaranteed to compute a speed controller in this space to
  keep the field bounded, if such a controller exists. Another linear
  program is then derived whose solution is the speed controller that
  minimizes the maximum field value over the environment. We extend
  our linear program formulation to develop a multi-robot controller
  that keeps the field bounded. The multi-robot controller has the
  unique feature that it does not require communication among the
  robots. Simulation studies demonstrate the robustness of the
  controllers to modeling errors, and to stochasticity in the
  environment.

\end{abstract}

\section{Introduction}
\label{Sec:Introduction}
In this paper we treat the problem of controlling robots to
perpetually act in a changing environment, for example to clean an
environment where material is constantly collecting, or to monitor an
environment where uncertainty is continually growing.  Each robot has
only a small footprint over which to act (e.g. to sweep or to sense).
The difficulty is in controlling the robots to move so that their
footprints visit all points in the environment regularly, spending
more time in those locations where the environment changes quickly,
without neglecting the locations where it changes more slowly.  This
scenario is distinct from most other sweeping and monitoring scenarios
in the literature because the task cannot be ``completed.''  That is
to say, the robots must continually move to satisfy the objective.  We
consider the situation in which robots are constrained to move on
fixed paths, along which we must control their speed.  We consider
both the single robot and multi-robot cases.
Figure~\ref{fig:sample_PM} shows three robots monitoring an
environment using controllers designed with our method.
\begin{figure}
\centering
  \includegraphics[width=0.8\linewidth]{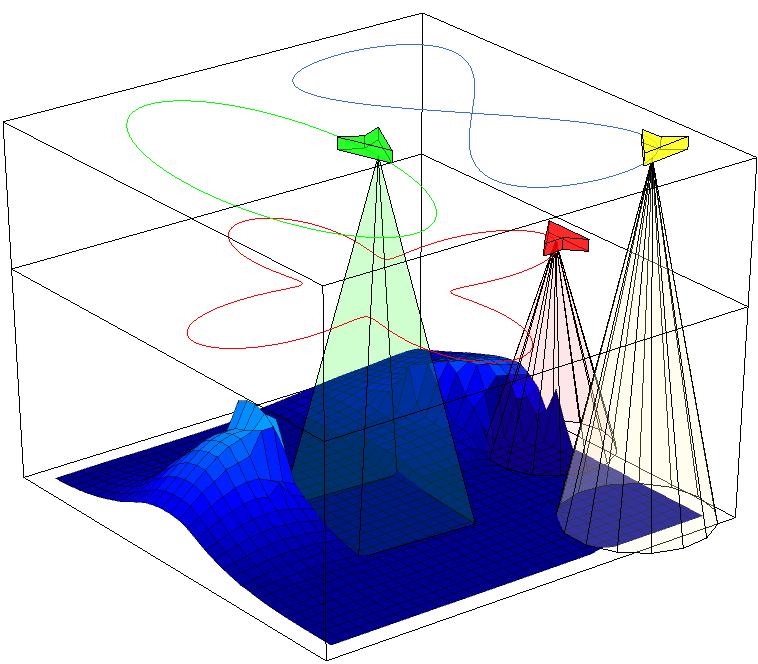}
  \caption{A persistent monitoring task using three robots with
    heterogeneous, limited range sensors.  The surface shows the
    accumulation function, indicating the quantity of material to be
    removed in a cleaning application, or the uncertainty at each
    point in a sensing application.  The accumulation function grows
    when a robot's footprint is not over it, and decreases when the
    footprint is over it.  Each robot controls its speed along its
    prescribed path so as to keep the surface as low as possible
    everywhere.}
  \label{fig:sample_PM}
\end{figure}

We model the changing environment with a scalar valued function
defined, which we call the \emph{accumulation function}. The function
behaves analogously to dust accumulating over a floor.  When a robot's
footprint is not over a point in the environment, the accumulation
function grows at that point at a constant rate, as if the point were
collecting dust.  When a robot's footprint is over the point, the
accumulation function decreases at a constant rate, as if the dust
were being vacuumed by the robot.  The rates of growth and decrease
can be different at different points in the environment.

This model is applicable to situations in which the task of the robot
is to remove material that is collecting in the environment, for
example cleaning up oil in the ocean around a leaking
well~\cite{KakalisVentikosJouranlOfHazardousMaterials08}, vacuuming
dirt from the floor of a
building~\cite{MacKenzieBalchAAAI93RobotVacuuming}, or tending to
produce in an agricultural field~\cite{NC-NA.ea:09}.  It is also
applicable to monitoring scenarios in which the state of the
environment changes at each point, and we want to maintain up-to-date
knowledge of this changing state.  Some examples include monitoring
temperature, salinity, or chlorophyll in the
ocean~\cite{SmithJFR10UnderwaterGliders}, maintaining estimates of
coral reef health~\cite{DunbabinAusConfOnRobAndAuto04ReefRobot}, or
monitoring traffic congestion over a
city~\cite{SrinivasanVSSN04AirborneTrafficSurveillance}.  These
applications all share the property that they can never be completed,
because the environment is always changing.  If the robot were to stop
moving, the oil would collect beyond acceptable levels, or the
knowledge of the ocean temperature would become unacceptably outdated.
For this reason we call these applications \emph{persistent tasks}.
In this paper, we assume that the model is known either from the
physics of the environment, from a human expert, or from an initial
survey of the environment.  However we show analytically and in
simulation that our controllers are robust to model errors.

We focus on the situation in which the robots are given fixed, closed
paths on which to travel, and we have to carry out the persistent task
only by regulating their speed.  This is relevant in many real-world
scenarios.  For example, in the case that the robots are autonomous
cars, they may be constrained to move on a particular circuit of roads
in a city to avoid traffic congestion, or in the case of autonomous
aircraft, they may be constrained to fly along a particular path to
stay away from commercial air traffic, or to avoid being detected by
an adversary.  Even in the case of vacuuming the floor of a building,
the robots may be required to stay along a prescribed path so as not
to interfere with foot traffic.  Furthermore, if we have the freedom
to plan the robot's path, we can employ an off-line planner to
generate paths that are optimal according to some metric (using the
planner in~\cite{SLS-DR:09a} for example), then apply the procedure in
this paper to control the speed along the path.  Decoupling the path
planning from the speed control in this way is a well-established
technique for dealing with complex trajectory planning
problems~\cite{KantZuckerIJRR86PathVelocityDecomposition}.

\subsection{Contributions}
\label{Sec:Contributions}

Our approach to the problem is to represent the space of all possible
speed controllers with a finite set of basis functions, where each
possible speed controller is a linear combination of those basis
functions.  A rich class of controllers can be represented in this
way.  Using this representation as our foundation, the main
contributions of this paper are the following.
\begin{enumerate}
\item We formally introduce the idea of persistent tasks for robots in
  changing environments, and propose a tractable model for designing
  robot controllers.

\item Considering the space of speed controllers parametrized by a
  finite set of basis functions, we formulate a linear program (LP)
  whose solution is a speed controller which guarantees that the
  accumulation function eventually falls below, and remains below, a
  known bound everywhere.  If the LP is infeasible, then no controller
  in the space that will keep the accumulation function bounded.

\item We formulate an LP whose solution is the
  optimal speed controller---that which eventually minimizes the
  maximum of the accumulation function over all locations.  

\item We generalize to the multi-robot case.  We find an LP whose
  solution is a controller which guarantees that the accumulation
  function falls below, and stays below, a known bound.  The
  multi-robot controller does not require communication between
  robots.

\end{enumerate} 
We do not find the optimal controller for the multi-robot case,
however, as it appears that this controller cannot be found as the
solution to an LP.  We demonstrate the performance of the controllers
in numerical simulations, and show that they are robust to stochastic
and deterministic errors in the environment model, and to unmodeled
robot vehicle dynamics.

It is desirable to cast our speed control problem as an LP since LPs
can be solved efficiently with off-the-shelf solvers~\cite{DGL:84}.
This is enabled by our basis function representation. The use of basis
functions is a common, and powerful method for function
approximation~\cite{EWC:00}, and is frequently used in areas such as
compressive sampling~\cite{EJC-MBW:08}, adaptive
control~\cite{RS-JJES:92}, and machine learning~\cite{TP-SS:03}. Our
LP formulations also incorporate both maximum and minimum limits on
the robot's speed, which can be different at different points on the
path. This is important because we may want a robot not to exceed a
certain speed around a sharp turn, for example, while it can go much
faster on a long straightaway. 
  
\subsection{Related Work}
\label{Sec:RelatedWork}
Our work is related to the large body of existing research on
environmental monitoring, sensor sweep coverage, lawn mowing and
milling, and patrolling.  In the environmental monitoring literature
(also called objective analysis in meteorological research
\cite{Gandin63ObjectiveAnalysis} and Kriging in geological research
\cite{CressieMathematicalGeology90Kriging}), authors often use a
probabilistic model of the environment, and estimate the state of that
model using a Kalman-like filter.  Then robots are controlled so as to
maximize a metric on the quality of the state estimate.  For example,
the work in~\cite{GrocholskyPhDThesis02} controls robots to move in
the direction of the gradient of mutual information.  More recently,
the authors of \cite{Lynch08} and~\cite{Yang08} control vehicles to
decrease the variance of their estimate of the state of the
environment.  This is accomplished in a distributed way by using
average consensus estimators to propagate information among the
robots.  Similarly, \cite{CortesTAC09KrigedKalmanFilter} proposes a
gradient based controller to decrease the variance of the estimate
error in a time changing environment.  In~\cite{Zhang10} sensing
robots are coordinated to move in a formation along level curves of
the environmental field.  In~\cite{GrahamCortesTAC10VoronoiTrajectory}
the authors find optimal trajectories over a finite horizon if the
environmental model satisfies a certain spatial separation property.
Also, in \cite{LeNyCDC09PathOptGP} the authors solve a dynamic program
(DP) over a finite horizon to find the trajectory of a robot to
minimize the variance of the estimate of the environment, and a
similar DP approach was employed in~\cite{BourgaultIROS02} over short
time horizons.  Many other works exist in this vein.

Although these works are well-motivated by the uncontested successes
of Kalman filtering and Kriging in real-world estimation
applications, they suffer from the fact that planning optimal
trajectories under these models requires the solution of an
intractable dynamic program, even for a static environment.  One must
resort to myopic methods, such as gradient descent (as
in~\cite{GrocholskyPhDThesis02,Lynch08,Yang08,Zhang10,CortesTAC09KrigedKalmanFilter}),
or solve the DP approximately over a finite time horizon (as
in~\cite{GrahamCortesTAC10VoronoiTrajectory,LeNyCDC09PathOptGP,BourgaultIROS02}).
Although these methods have great appeal from an estimation point of
view, little can be proved about the comparative performance of the
control strategies employed in these works.  The approach we take in
this paper circumvents the question of estimation by formulating a new
model of growing uncertainty in the environment.  Under this model, we
can solve the speed planning problem over \emph{infinite time}, while
maintaining \emph{guaranteed} levels of uncertainty in a
\emph{time-changing} environment.  Thus we have used a less
sophisticated environment model in order to obtain stronger results on
the control strategy.  Because our model is based on the analogy of
dust collecting in an environment, we also solve infinite horizon
sweeping problems with the same method.

Our problem in this paper is also related to sweep coverage, or lawn
mowing and milling problems, in which robots with finite sensor
footprints move over an environment so that every point in the
environment is visited at least once by a robot. Lawn mowing and
milling has been treated in~\cite{EMA-SPF-JSBM:00} and other works.
Sweep coverage has recently been studied in~\cite{IR-VLS-APN-HC:04},
and in \cite{KoenigSzymanskiLiuAnnalsofMathAI01Ants} efficient sweep
coverage algorithms are proposed for ant robots. A survey of sweep
coverage is given in~\cite{HC:01}. Our problem is significantly
different from these because our environment is dynamic, thereby
requiring continual re-milling or re-sweeping. A different notion of
persistent surveillance has been considered in~\cite{BB-JPH-JV:08}
and~\cite{BB-JR-JPH-MAV-JV:10}, where a persistent task is defined as
one whose completion takes much longer than the life of a robot. While
the terminology is similar, our problem is more concerned with the
task (sweeping or monitoring) itself than with power requirements of
individual robots.

A problem more closely related to ours is that of
patrolling~\cite{YC:04,YE-NA-GAK:07}, where an environment must be
continually surveyed by a group of robots such that each point is
visited with equal frequency. Similarly, in~\cite{NN-IK:08} vehicles
must repeatedly visit the cells of a gridded environment. Also,
continual perimeter patrolling is addressed in~\cite{DBK-RWB-RSH:08}.
In another related work, a region is persistently covered
in~\cite{PFH-DS-MWS:07} by controlling robots to move at constant
speed along predefined paths. Our work is different from these,
however, in that we treat the situation in which different parts of
the environment may require different levels of attention. This is a
significant difference as it induces a difficult resource trade-off
problem as one typically finds in queuing theory~\cite{LK:75}, or
dynamic vehicle routing~\cite{DJS-GJvR:93a,SLS-MP-FB-EF:09a}.
In~\cite{MA-PS:06}, the authors consider unequal frequency of visits
in a gridded environment, but they control the robots using a greedy
method that does not have performance guarantees.

Indeed, our problem can be seen as a dynamic vehicle routing problem
with some unique features. Most importantly, we control the speed of
the robots along a pre-planned path, whereas the typical dynamic
vehicle routing problem considers planning a path for vehicles that
move at constant speed.  Also, in our case all the points under a
robot's footprint are serviced simultaneously, whereas typically
robots service one point at a time in dynamic vehicle routing.
Finally, in our case servicing a point takes an amount of time
proportional to the size of the accumulation function at that point,
whereas the service time of points in dynamic vehicle routing is
typically independent of that points wait-time.

The paper is organized as follows. In
Section~\ref{sec:problem_formulation} we set up the problem and define
some basic notions. In Section~\ref{sec:stability_and_optimality} two
LPs are formulated, one of which gives a stabilizing controller, and
the other one an optimal controller. Multiple robots are addressed in
Section~\ref{sec:multi_robot}. The performance and robustness of the
controllers are illustrated in simulations in
Section~\ref{sec:simulations}. Finally, Section~\ref{sec:conclusions}
gives conclusions and extensions.

%%%%%%%%%%%%%%%%%%%%%%%%%%%
\section{Problem Formulation and Stability Notions}
\label{sec:problem_formulation}
%%%%%%%%%%%%%%%%%%%%%%%%%%%

In this section we formalize persistent tasks, introduce the notion of
a field stabilizing controller, and provide a necessary and sufficient
conditions for field stability.

\subsection{Persistent Tasks}

Consider a compact environment $\env \subset \real^2$, and a finite
set of points of interest $Q\subseteq\env$.  The environment contains
a closed curve $\gamma : [0,1] \to \real^2$, where $\gamma(0) =
\gamma(1)$.  (See Figure~\ref{fig:curve_setup} for an illustration.)
The curve is parametrized by $\theta\in[0,1]$, and we assume without
loss of generality that $\theta$ is the arc-length parametrization.
The environment also contains a single robot (we will generalize to
multiple robots in Section~\ref{sec:multi_robot}) whose motion is
constrained along the path $\gamma$.  The robot's position at a time
$t$ can be described by $\theta(t)$, its position along the curve
$\gamma$.  The robot is equipped with a sensor with finite footprint
$\BB(\theta)\subset \env$ (For example, the footprint could be a disk
of radius $r$ centered at the robot's position).\footnote{For a ground
  robot or surface vessel, the footprint may be the robot's field of
  view, or its cleaning surface.  For a UAV flying at constant
  altitude over a 2D environment, the footprint could give the portion
  of the environment viewable by the robot.  }  Our objective is to
control the speed $v$ of the robot along the curve.  We assume that
for each point $\theta$ on the curve, the maximum possible robot speed
is $\vmax(\theta)$ and the minimum robot speed is $\vmin(\theta)>0$.
This allows us to express constraints on the robot speed at different
points on the curve.  For example, for safety considerations, the
robot may be required to move more slowly in certain areas of the
environment, or on high curved sections of the path.  To summarize,
the robot is described by the triple $\RR :=(\BB,\vmin,\vmax)$.

A time-varying field $Z:Q\times \real_{\geq 0} \to \real_{\geq 0}$,
which we call the \emph{accumulation function}, is defined on the
points of interest $Q$.  This field may describe a physical quantity,
such as the amount of oil on the surface of a body of water.
Alternatively, the field may describe the robot's uncertainty about
the state of each point of interest.  We assume that at each point
$\q\in Q$, the field $Z$ increases (or is produced) at a constant rate
$p(\q)$.  When the robot footprint is covering $\q$, it consumes $Z$
at a constant rate $c(\q)$, so that when a point $\q$ is covered, the
net rate of decrease is $p(\q) - c(\q)$.  Thus, $Z$ evolves according
to the following differential equation (with initial conditions
$Z(\q,0)$ and $\theta(0)$):
\begin{equation}
\label{eq:Z_diffeq}
\dot Z(\q,t) =
\begin{cases}
  p(\q), & \text{if $\q\notin \BB\big(\theta(t)\big)$}, \\
  p(\q)-c(\q), &
  \text{if $\q \in \BB\big(\theta(t)\big)$ and $Z(\q,t) > 0$}, \\
  0, &
  \text{if $\q \in \BB\big(\theta(t)\big)$ and $Z(\q,t) = 0$}, 
\end{cases}
\end{equation}
where for each $\q\in Q$, we have $c(\q) > p(\q) > 0$.  In this paper,
we assume that we know the model parameters $p(\q)$ and $c(\q)$.  It
is reasonable to assume knowledge of $c(\q)$ since it pertains to the
performance of the robot.  For example, in an oil cleanup application,
the consumption rate of oil of the robot can be measured in a
laboratory environment or in field trials prior to control design.  As
for the production rate, $p(\q)$, this must be estimated from the
physics of the environment, from a human expert (e.g. an oil mining
engineer in the case of an oil well leak), or it can be measured in a
preliminary survey of the environment. However, the accuracy of the
model is not crucial, as we show analytically and in simulations that
our method has strong robustness with respect to errors in $p(\q)$.
The production function can also be estimated and incorporated into
the controller on-line, though we save the development of an on-line
strategy for future work.

With this notation, we can now formally define a \emph{persistent task}.
\begin{definition}[Persistent Tasks]
  A persistent task is a tuple $(\RR, \gamma, Q, p,c)$, where $\RR$ is
  the robot model, $\gamma$ is the curve followed by the robot, $Q$ is
  the set of points of interest, and $p$ and $c$ are the production
  and consumption rates of the field, respectively.
\end{definition}

\begin{figure}
  \centering
  \includegraphics[width=0.65\linewidth]{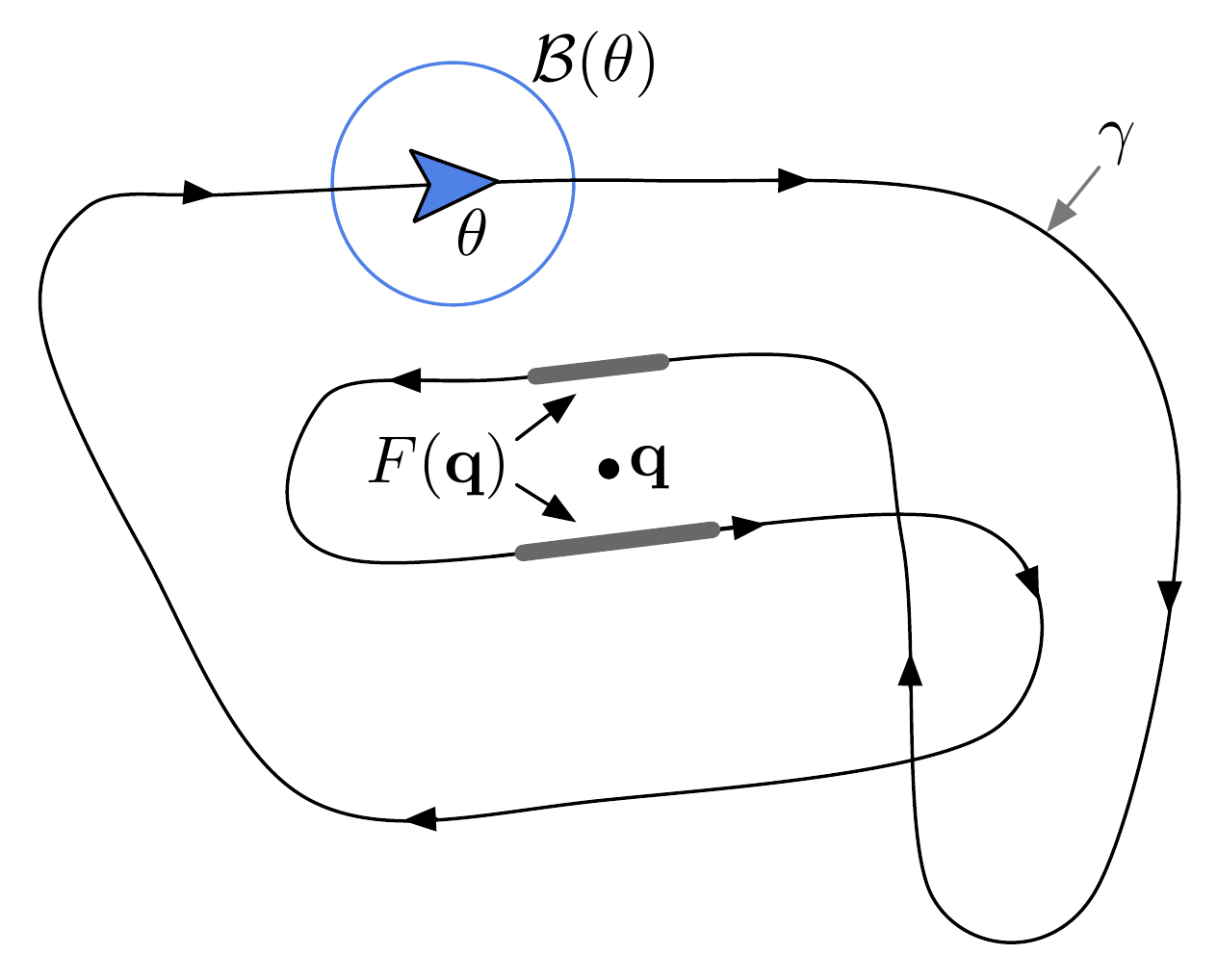}
  \caption{An illustration of a curve $\gamma$ followed by the robot.
    The robot is located at $\theta$ and has footprint $\BB(\theta)$.
    The set $F(\q)$ of robot positions $\theta$ for which the
    footprint covers $q$ are shown as thick grey segments of the curve.}
  \label{fig:curve_setup}
\end{figure}

In general, for a given persistent task, the commanded speed $v$ could
depend on the current position $\theta$, the field $Z$, the initial
conditions $\theta(0)$ and $Z(\q,0)$, and time $t$.  Thus, defining
the set of initial conditions as $\mathrm{IC}:=(\theta(0),Z(\q,0))$, a
general controller has the form $v(\theta,Z,\mathrm{IC},t)$.  For the
reader's convenience, the notation used in this paper is summarized in
Table~\ref{tab:parameters}.

%%%%%%%%%%%%%%%%%%%%%%%%%%%%%%%
\subsection{Field Stability and Feasibility}
\label{sec:stability_and_feasibility}
%%%%%%%%%%%%%%%%%%%%%%%%%%%%%%%

In this section we formalize the problem of stabilizing the field in a
persistent task.  As a first consideration, a suitable controller
should keep the field bounded everywhere, independent of the initial
conditions. This motivates the following definition of stability.

\begin{definition}[Field Stabilizing Controller] 
  \label{def:stability}
  A speed controller field stabilizes a persistent task if the field
  is always eventually bounded, independent of initial conditions.
  That is, if there exists a $Z_{\max}<+\infty$ such that for every
  $\q \in Q$ and initial condition $Z(\q,0)$ and $\theta(0)$, we have
  \[
  \limsup_{t\to+\infty} Z(\q,t) \leq Z_{\max}.
  \]
\end{definition}
Note that in this definition of stability, for every initial
condition, the field eventually enters the interval $[0,Z_{\max}]$.

There are some persistent tasks for which no controller is field
stabilizing.  This motivates the notion of \emph{feasibility}.

\begin{definition}[Feasible Persistent Task]
  A persistent task is feasible if there exists a field stabilizing
  speed controller.
\end{definition} 

As stated above, for a given persistent task, a general speed
controller can be written as $v(\theta,Z,\mathrm{IC},t)$.  However, in the
remainder of the paper we will focus on a small subset of speed
controllers which we call \emph{periodic position-feedback
  controllers}.  In these controllers, the speed only depends on the
robot's current position $\theta\in[0,1]$.  The controllers are
periodic in the sense that the speed at a point $\theta$ is the same
on each cycle of the path.  The controller can be written as
\[
v:[0,1] \to \real_{> 0},
\]
where each $\theta\in[0,1]$ is mapped to a speed $v(\theta)$
satisfying the bounds $\vmin(\theta)\leq v(\theta)\leq \vmax(\theta)$.
These controllers have the advantage that they do not require
information on the current state of the field $Z$, only its model
parameters $p(\q)$ and $c(\q)$.  While it may seem restrictive to
limit our controllers to this special form, the following result shows
that it is not.

\begin{proposition}[Periodic Position-Feedback Controllers]
  \label{prop:periodic_cont}
  If a persistent task can be stabilized by a general controller
  $v(\theta,Z,\mathrm{IC},t)$, then it can be stabilized by a periodic
  position-feedback controller $v(\theta)$.
\end{proposition}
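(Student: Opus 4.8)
The plan is to show that, as far as field stability is concerned, the only relevant feature of a controller is the long-run fraction of cycle-time that the footprint spends over each point, and that any such fraction can be reproduced by a position-feedback controller. Since $v\geq\vmin>0$, under any controller the position $\theta(t)$ advances monotonically around $\gamma$, so the robot completes successive cycles at times $0=t_0<t_1<t_2<\cdots$, and on the $k$-th cycle its instantaneous speed is a well-defined function of position $v_k:[0,1]\to\real_{>0}$. I would work with the inverse speed $u_k:=1/v_k$, which lies in the pointwise box $U:=\{u:1/\vmax(\theta)\le u(\theta)\le 1/\vmin(\theta)\}$. Then the time spent covering $\q$ on cycle $k$ is $\int_{F(\q)}u_k\,\dee\theta$, the cycle duration is $\int_0^1 u_k\,\dee\theta$, and the net change of the (unclipped) field at $\q$ over cycle $k$ is
\[
\Delta_k(\q)=p(\q)\int_0^1 u_k\,\dee\theta-c(\q)\int_{F(\q)}u_k\,\dee\theta .
\]

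Fixing one initial condition, I would extract a single position-feedback profile by averaging. Form the Cesàro averages $\bar u_N:=\frac1N\sum_{k=1}^N u_k\in U$; by linearity $\frac1N\sum_{k=1}^N\Delta_k(\q)=p(\q)\int_0^1\bar u_N\,\dee\theta-c(\q)\int_{F(\q)}\bar u_N\,\dee\theta$. The box $U$ is convex and weak-$*$ compact in $L^\infty[0,1]$, so a subsequence of $\{\bar u_N\}$ converges weak-$*$ to some $u^\star\in U$, and testing against the integrable functions $\mathbf{1}_{[0,1]}$ and $\mathbf{1}_{F(\q)}$ shows both integrals converge along this subsequence. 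Because clipping at zero can only raise the field, the end-of-cycle value obeys $Z(\q,t_N)\ge Z(\q,0)+\sum_{k=1}^N\Delta_k(\q)$; field stability gives $\limsup_N Z(\q,t_N)\le Z_{\max}$, so $\sum_{k=1}^N\Delta_k(\q)$ is bounded above uniformly in $N$ and hence $\limsup_N\frac1N\sum_{k=1}^N\Delta_k(\q)\le0$. Passing to the limit yields, for every $\q\in Q$, the averaged coverage inequality
\[
p(\q)\int_0^1 u^\star\,\dee\theta\le c(\q)\int_{F(\q)}u^\star\,\dee\theta ,
\]
and $v^\star:=1/u^\star$ is a periodic position-feedback controller (again satisfying $\vmin\le v^\star\le\vmax$) whose per-cycle field change is $\le0$ at every point.

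The main obstacle is that this inequality is only \emph{non-strict}, whereas a periodic controller field-stabilizes with a bound uniform over all initial conditions only when it is strict: if the per-cycle change is exactly zero at some $\q$, a sufficiently large $Z(\q,0)$ is never driven into the clipping region and persists forever, violating the uniform-$Z_{\max}$ requirement. To close this gap I would promote the non-strict condition to a strict one using the finiteness of $Q$ and the linearity of $g_\q(u):=c(\q)\int_{F(\q)}u-p(\q)\int_0^1 u$ over the convex set $U$. By a theorem of the alternative, either some $u\in U$ makes every $g_\q(u)$ strictly positive—in which case $v^\star=1/u$ field-stabilizes—or there is a nonzero $\lambda\ge0$ with $\sum_\q\lambda_\q g_\q(u)\le0$ for all $u\in U$; evaluating this at the averaged profile $u^\star$ forces $\lambda$ to be supported on the tight set $\{\q:g_\q(u^\star)=0\}$ and makes the nonnegative combination $\sum_\q\lambda_\q\Delta_k(\q)\ge0$ on every cycle of every trajectory. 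The real work is to show that this second case is incompatible with the existence of \emph{any} field-stabilizing controller: one must verify, through the reflected (clipped) dynamics and the quantifier over all initial conditions, that field stability forces the cumulative surplus $\sum_{k}\Delta_k(\q)$ to diverge to $-\infty$ at every point, so that no such persistently nonnegative weighted combination can exist. Establishing this divergence rigorously is the crux of the argument.
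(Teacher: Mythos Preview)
Your basic idea—write the general controller as a sequence of per-cycle inverse speeds $u_k\in U$ and average—is exactly right, and it is also what the paper does. But the paper avoids almost all of your machinery by a single observation you overlook: you are free to \emph{choose} the initial condition. The paper fixes $\overline{\mathrm{IC}}$ with $Z(\q,0)>Z_{\max}+\epsilon$ for every $\q\in Q$. Since the given controller field-stabilizes, $\limsup_t Z(\q,t)\le Z_{\max}$, so there is a \emph{finite} $k$ (common to all $\q$, as $Q$ is finite) with $Z(\q,t_k)-Z(\q,0)<0$. Your own clipping inequality then gives $\sum_{\ell=1}^k\Delta_\ell(\q)\le Z(\q,t_k)-Z(\q,0)<0$ for every $\q$. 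Setting $u^\star:=\frac1k\sum_{\ell=1}^k u_\ell\in U$ and $v^\star:=1/u^\star$ yields, by linearity of $g_\q$, the \emph{strict} inequality $g_\q(u^\star)>0$ for all $\q$ at once—no weak-$*$ limits, no subsequences, no alternative theorems.

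In other words, the ``main obstacle'' you flag (non-strict versus strict) is an artifact of passing to the limit $N\to\infty$ in the Ces\`aro mean. If instead you stop at a finite $k$ chosen via the large-initial-condition trick, strictness is automatic. Your proposed resolution—showing that the second branch of the alternative forces $\sum_k\Delta_k(\q)\to-\infty$ and hence contradicts stability—can in fact be made to work, but only by invoking the very same trick (take $Z(\q,0)>Z_{\max}$ on the support of $\lambda$ and use $\sum_\q\lambda_\q Z(\q,t_N)\ge\sum_\q\lambda_\q Z(\q,0)$ together with pigeonhole on the finite $Q$). Once you see that, the entire weak-$*$/Gordan detour becomes superfluous: you might as well apply the trick directly and finish in two lines as the paper does.
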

The proof of Proposition~\ref{prop:periodic_cont} is given in
Appendix~\ref{sec:appendix}, and relies on the statement and proof of
the upcoming result in Lemma~\ref{lem:stab_cond}.  Therefore, we
encourage the reader to postpone reading the proof until after
Lemma~\ref{lem:stab_cond}.

We will now investigate conditions for a controller to be field
stabilizing and for a persistent task to be feasible. Let us define a
function which maps each point $\q \in Q$, to the curve positions
$\theta$ for which $\q$ is covered by the robot footprint.  To this
end, we define
\[
F(\q) := \setdef{\theta\in[0,1]}{\q\in\BB(\theta)}.
\]
An illustration of the curve, the robot footprint, and the set $F(\q)$
is shown in Figure~\ref{fig:curve_setup}. (In
Section~\ref{sec:simulations} we discuss how this set can be computed
in practice.)

Given a controller $\theta \mapsto v(\theta)$, we define two
quantities: 1) the cycle time, or period, $T$, and 2) the coverage
time per cycle $\tau(\q)$.  Since $v(\theta)>0$ for all $\theta$, the
robot completes one full cycle of the closed curve in time
\begin{equation}
  \label{eq:rev_time}
  T:= \int_0^{1}\frac{1}{v(\theta)} d\theta.
\end{equation}
During each cycle, the robot's footprint is covering the point $\q$
only when $\theta(t) \in F(\q)$. Thus the point $\q$ is covered for
\begin{equation}
\label{eq:cover_time}
\tau(\q):= \int_{F(\q)}\frac{1}{v(\theta)} d\theta,
\end{equation}
time units during each complete cycle.  

With these definitions we can give a necessary and sufficient
condition for a controller to stabilize a persistent task.  In
Section~\ref{sec:stability_and_optimality}, we develop a method for
testing if this condition can be satisfied by a speed controller.
\begin{lemma}[Stability condition]
  \label{lem:stab_cond}
  Given a persistent task, a controller $v(\theta)$ is field
  stabilizing if and only if
  \begin{equation}
   \label{eq:stability}
   c(\q)\int_{F(\q)} \frac{1}{v(\theta)} d\theta >
   p(\q) \int_0^1\frac{1}{v(\theta)} d\theta
  \end{equation}
  for every $\q\in Q$.  Applying the definitions
  in~\eqref{eq:rev_time} and~\eqref{eq:cover_time}, the condition can
  be expressed as $\tau(\q) > \big( p(\q)/c(\q)\big) T$.
\end{lemma}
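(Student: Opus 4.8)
The plan is to analyze the field dynamics over a single cycle of the curve and show that the net change in $Z(\q,\cdot)$ across one period determines stability. Fix a point $\q\in Q$ and consider the evolution of $Z(\q,t)$ as the robot traverses one full cycle. During the portions of the cycle where $\theta(t)\notin F(\q)$, the field grows at rate $p(\q)$; during the portions where $\theta(t)\in F(\q)$ (and the field is positive), it decreases at net rate $c(\q)-p(\q)$. The key observation is that the \emph{time} spent covering $\q$ during one cycle is exactly $\tau(\q)$ from~\eqref{eq:cover_time}, and the total cycle time is $T$ from~\eqref{eq:rev_time}; these are fixed constants determined by $v(\theta)$, independent of which cycle we are on, since the controller is periodic position-feedback. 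Therefore, \emph{ignoring the floor at zero}, the net change in $Z(\q,\cdot)$ over one cycle is
\[
\Delta(\q) = p(\q)\big(T-\tau(\q)\big) - \big(c(\q)-p(\q)\big)\tau(\q) = p(\q)T - c(\q)\tau(\q).
\]
Condition~\eqref{eq:stability} is precisely the statement $\Delta(\q)<0$, i.e.\ the field strictly decreases over each cycle.

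For the \emph{sufficiency} direction, I would argue that if $\Delta(\q)<0$ for every $\q$, then the value of $Z(\q,\cdot)$ sampled once per cycle (say at the cycle start times) is a sequence that decreases by at least $|\Delta(\q)|$ per cycle until it first drops to a bounded regime near zero, after which the nonnegativity constraint (the third case in~\eqref{eq:Z_diffeq}) prevents it from going negative while the per-cycle dynamics keep it bounded. The care needed here is that once $Z$ hits zero during a covered interval it stalls at zero rather than continuing to decrease, so the ``free'' linear computation of $\Delta(\q)$ is only a valid bound while $Z$ stays positive; I would show that the worst-case value within any cycle is controlled by the value at the start of that cycle plus at most $p(\q)T$ (the most it can grow in one cycle), giving a uniform $Z_{\max}$ and establishing $\limsup_{t\to\infty} Z(\q,t)\le Z_{\max}$ for every initial condition.

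For the \emph{necessity} direction, I would show the contrapositive: if $\Delta(\q)\ge 0$ for some $\q$, then starting from a large initial value $Z(\q,0)$, the field stays positive throughout (since the per-cycle decrease cannot exceed $c(\q)\tau(\q)\le p(\q)T$ when $\Delta(\q)\ge 0$), so the zero-floor never activates and the sampled sequence is nondecreasing, hence $Z(\q,t)$ does not eventually fall below any fixed bound for this choice of initial condition---contradicting stability. Finally, the restatement $\tau(\q)>(p(\q)/c(\q))T$ follows by dividing~\eqref{eq:stability} by $c(\q)>0$ and substituting the definitions.

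The main obstacle I anticipate is the bookkeeping around the nonnegativity floor: the linear per-cycle accounting $\Delta(\q)=p(\q)T-c(\q)\tau(\q)$ is exact only while $Z(\q,t)>0$ throughout the cycle, and one must handle the cycles in which $Z$ touches zero carefully. In the sufficiency argument this floor only helps (it keeps $Z$ from overshooting downward), so the clean step is to prove a uniform within-cycle bound; in the necessity argument the floor is the thing one must rule out, which is why choosing a sufficiently large initial condition is essential to guarantee the field never reaches zero and the clean linear computation applies.
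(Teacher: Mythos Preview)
Your proposal is correct and follows essentially the same approach as the paper: both compute the per-cycle net change $\Delta(\q)=p(\q)T-c(\q)\tau(\q)$ via the inequality $\dot Z(\q,t)\ge p(\q)-c(\q)\mathbf{I}(\theta(t)\in F(\q))$ (with equality when $Z>0$), then use this to get necessity from a large initial condition and sufficiency by showing the sampled values decrease by $\epsilon$ per cycle until entering a bounded regime, with the within-cycle bound $Z(\q,\bar t)\le Z(\q,t)+p(\q)T$. One small point to tighten in your necessity argument: the claim ``the per-cycle decrease cannot exceed $c(\q)\tau(\q)$'' bounds the \emph{net} change, but to rule out the zero-floor activating you need the field to stay positive \emph{throughout} the cycle; the paper handles this by observing that if $Z(\q,t)>(c(\q)-p(\q))T$ then $Z$ cannot reach zero on $[t,t+T]$, so any initial condition above this threshold works.
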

The lemma has a simple intuition.  For stability, the field
consumption per cycle must exceed the field production per cycle, for
each point $\q\in Q$.  We now prove the result.
\begin{proof}
  Consider a point $\q\in Q$, and the set of curve positions $F(\q)$
  for which $\q$ is covered by the robot footprint.  Given the
  speed controller $v$, we can compute the cycle time $T$
  in~(\ref{eq:rev_time}).  Then, let us consider the change in the
  field from $Z(\q,t)$ to $Z(\q,t+T)$, where $t \geq 0$.

  Define an indicator function $\mathbf{I}:[0,1]\times Q\to \{0,1\}$
  as $\mathbf{I} (\theta,\q) = 1$ for $\theta \in F(\q)$ and $0$
  otherwise.  Then, from~(\ref{eq:Z_diffeq}) we have that
  \begin{align*}
    \dot Z(\q,t) \geq p(\q) - c(\q) \mathbf{I} (\theta(t),\q),
 \end{align*}
  for all values of $Z(\q,t)$, with equality if $Z(\q,t)>0$.
  Integrating the above expression over $[t,t+T]$ we see that
  \begin{align}
    Z(\q,t+T) - Z(\q,t)
    &\geq p(\q)T - c(\q) \int_t^{t+T}\mathbf{I}(\theta(\tau),\q) d\tau \nonumber\\
\label{eq:uncertainty_change}    
&= p(\q)T - c(\q)\tau(\q),
  \end{align}
  where $\tau(\q)$ is defined in~\eqref{eq:cover_time}, and the
  equality follows from the fact that $\mathbf{I}(\theta(t),\q)$ is
  simply an indicator function on whether or not the footprint is
  covering $\q$ at a given time.  From~\eqref{eq:uncertainty_change}
  we see that for the field to be eventually bounded by some
  $Z_{\max}$ for all initial conditions $Z(\q,0)$, we require that
  $\tau(\q) > p(\q)/c(\q) T$ for all $\q\in Q$.

  To see that the condition is also sufficient, suppose that $\tau(\q)
  > p(\q)/c(\q) T$.  Then there exists $\epsilon > 0$ such that $
  p(\q)T - c(\q)\tau(\q) = -\epsilon$.  If $Z(\q,t) > \big(c(\q) -
  p(\q) \big)T$, then the field at the point $\q\in Q$ is strictly
  positive over the entire interval $[t,t+T]$, implying that
   \[
   Z(\q,t+T) = Z(\q,t) -\epsilon.
  \]
  Thus, from every initial condition, $Z(\q,t)$ moves below
  $\big(c(\q) - p(\q) \big)T$.  Additionally, note that for each $\bar
  t$ in the interval $[t,t+T]$, we trivially have that $Z(\q,\bar t)
  \leq Z(\q,t) + p(\q) T$.  Thus, we have that there exists a finite
  time $\bar t$ such that for all $t\geq \bar t$,
  \[
  Z(\q,t) \leq \big(c(\q) - p(\q) \big)T +p(\q) T = c(\q) T.
  \]
  Since $Q$ is finite, there exists a single $\epsilon >0$ such that
  for every point $\q\in Q$ we have $\tau(\q) - p(\q)/c(\q) T >
  \epsilon$.  Hence, letting $Z_{\max} = \max_{\q\in Q} c(\q) T$, we
  see that $Z$ is stable for all $\q$, completing the proof.
\end{proof}

In the following sections we will address two problems, determining a
field stabilizing controller, and determining a minimizing controller,
defined as follows:
\begin{problem}[Persistent Task Metrics]
  \label{prob:PM}
  Given a persistent task, determine a periodic position-feedback
  controller $v:[0,1] \to \real_{>0}$ that satisfies the speed
  constraints (i.e., $v(\theta)
  \in\big[\vmin(\theta),\vmax(\theta)\big]$ for all $\theta\in[0,1]$),
  and
  \begin{enumerate}
  \item is field stabilizing; or
  \item minimizes the maximum steady-state field $\mathcal{H}(v)$:
    \[
    \mathcal{H}(v):=\max_{\q\in Q} \left(\limsup_{t\to+\infty} Z(\q,t)\right).
    \]
 \end{enumerate}
\end{problem}
In Section~\ref{sec:stability_and_optimality} we will show that by
writing the speed controller in terms of a set of basis functions,
problems (i) and (ii) can be solved using linear programs.  In
Section~\ref{sec:multi_robot} we will solve problem (i) for multiple
robots.

%%%%%%%%%%%%%%%%%%%%%%%%%%%%%%%%%%%%%%%%%%%%%
\section{Single Robot Speed Controllers: \\ Stability and Optimality}
\label{sec:stability_and_optimality}
%%%%%%%%%%%%%%%%%%%%%%%%%%%%%%%%%%%%%%%%%%%%%

In this paper we focus on a finite set of points of interest $Q
=\{\q_1,\ldots,\q_m\}$. These $m$ locations could be specific regions
of interest, or they could be a discrete approximation of the
continuous space obtained by, for example, laying a grid down on the
environment.  In the Section~\ref{sec:simulations} we will show
examples of both scenarios.  Our two main results are given in
Theorems~\ref{thm:lin_prog} and~\ref{thm:lin_prog_opt}, which show
that a field stabilizing controller, and a controller minimizing
$\mathcal{H}(v)$, can each be found by solving an appropriate linear
program.

To begin, it will be more convenient to consider the reciprocal
speed controller $v^{-1}(\theta):= 1/v(\theta)$, with its
corresponding constraints
\[
\frac{1}{\vmax(\theta)} \leq v^{-1}(\theta) \leq \frac{1}{\vmin(\theta)}.
\]
Now, our approach is to consider a finite set of basis functions
$\{\beta_1(\theta),\ldots,\beta_n(\theta)\}$.  Example basis functions
include (a finite subset of) the Fourier basis or Gaussian
basis~\cite{EWC:00}.  In what follows we will use rectangular
functions as the basis:
\begin{equation}
\label{eq:rect_basis}
\beta_j(\theta) = \begin{cases}
1, & \text{if $\theta\in[(j-1)/n,j/n)$} \\
0, & \text{otherwise},
\end{cases}
\end{equation}
for each $j\in\{1,\ldots,n\}$.  This basis, which provides a piecewise
constant approximation to a curve, has the advantage that we will
easily be able to incorporate the speed constraints $\vmin(\theta)$
and $\vmax(\theta)$ into the controller.

Then let us consider reciprocal speed controllers of the form
\begin{equation}
\label{eq:g_form}
v^{-1}(\theta) = \sum_{j=1}^n \alpha_j \beta_j(\theta),
\end{equation}
where $\alpha_1,\ldots,\alpha_n \in \real$ are free parameters that we
will use to optimize the speed controller.  A rich class of functions
can be represented as a finite linear combination of basis functions,
though not all functions can be represented this way.  Limiting our
speed controller to a linear parametrization allows us to find an
optimal controller within that class, while preserving enough
generality to give complex solutions that would be difficult to find
in an ad hoc manner.  In the following subsection we will consider the
problem of synthesizing a field stabilizing controller.

\subsection{Synthesis of a Field Stabilizing Controller}
\label{sec:stabilizing_speed}

In this section we will show that a field stabilizing speed controller of
the form~\eqref{eq:g_form} can be found through the solution of a
linear program.  This result is summarized in
Theorem~\ref{thm:lin_prog}. We remind the reader that A summary of the
mathematical symbols and their definitions is shown in
Table~\ref{tab:parameters}.

\begin{table*}[htb]
\small
\centering %
\caption{Table of Symbols}%
\label{tab:parameters}
\begin{tabular}{|c|l|}
  \hline
  {\bf Symbol} & {\bf Definition} \\
  \hline
  $\env$ & the environment. \\
  $Q$ & a set of points of interest in the environment $\env$. \\
  $\q$ & a point of interest in $Q$. \\
  $\gamma$ & a curve followed by the robot, and parametrized by $\theta$. \\
  $v(\theta)$ & the speed of the robot along the curve $\gamma$. \\
 $\RR$ & the robot model, consisting of minimum and maximum speeds
  $\vmin(\theta)$ and $\vmax(\theta)$, and a robot footprint $\BB(\theta)$. \\
  $\BB(\theta)$ & The set of points in the environment covered by the
  robot's footprint when positioned at $\theta$. \\
  $F(\q)$ & the values of $\theta\in[0,1]$ along the curve $\gamma$ for which the point $\q$
  is covered by the footprint.  \\
  $Z(\q,t)$ & the field (or accumulation function) at point $\q\in Q$ at time $t$. \\
  $p(\q)$ & the rate of production of the field $Z$ at
  point $\q$. \\
  $c(\q)$ & the rate of consumption of the field $Z$ when
  $\q$ is covered by the robot footprint. \\
  $\beta(\theta)$ & a basis function. \\
  $\alpha$ & a basis function coefficient, or parameter. \\
  $T$ & the time to complete one full cycle of the curve. \\
  $\tau(\q)$ & the amount of time the point $\q\in Q$ is covered per
  cycle. \\
  $\mathcal{H}(v)$ & cost function~\eqref{eq:H_cost} giving the steady-state maximum
  value of $Z(\q,t)$ \\
  \hline
\end{tabular}
\end{table*}
To begin, let us consider reciprocal speed controllers in the form
of~\eqref{eq:g_form}.  Then for $\q_i \in Q$, the stability condition
in Lemma~\ref{lem:stab_cond} becomes
\begin{equation*}
  \sum_{j=1}^n \alpha_j \int_{F(\q_i)} \beta_j(\theta)d\theta >
  \frac{p(\q_i)}{c(\q_i)}  \sum_{j=1}^n \alpha_j \int_0^1 \beta_j(\theta)d\theta
\end{equation*}
Rearranging, we get
\[
\sum_{j=1}^n \alpha_j K(\q_i,\beta_j) > 0,
\]
where we have defined
\begin{equation}
\label{eq:C_def}
K(\q_i,\beta_j) := \int_{F(\q_i)} \beta_j(\theta)d\theta -
  \frac{p(\q_i)}{c(\q_i)}\int_0^1 \beta_j(\theta) d\theta.
\end{equation}
Finally, to satisfy the speed constraints we have that
\begin{equation}
\label{eq:speed_constraints}
\frac{1}{\vmax(\theta)} \leq \sum_{j=1}^n \alpha_j \beta_j(\theta)
\leq \frac{1}{\vmin(\theta)}
\end{equation}
For the rectangular basis in~(\ref{eq:rect_basis}), the speed
constraints become
\[
\frac{1}{\vmax(j)} \leq \alpha_j \leq \frac{1}{\vmin(j)},
\]
where
\begin{align*}
\vmax(j) &= \inf_{\theta\in[(j-1)/n,j/n)} \vmax(\theta), \; \text{and} \\
\vmin(j) &= \sup_{\theta\in[(j-1)/n,j/n)} \vmin(\theta). 
\end{align*}
Thus, we obtain the following result.
\begin{theorem}[Existence of a Field Stabilizing Controller]
  \label{thm:lin_prog}
  A persistent task is stabilizable by a speed controller of the
  form~\eqref{eq:g_form} if and only if the following linear program
  is feasible:
  \begin{align*}
    \text{minimize} \;\;& 0 & \\
    \text{subject to} \;\;& \sum_{j=1}^n \alpha_j K(\q_i,\beta_j)> 0
    &\forall\; i\in\{1,\ldots,m\}\\
    & \frac{1}{\vmax(j)} \leq \alpha_j \leq \frac{1}{\vmin(j)}, &
    \forall\; j\in\{1,\ldots,n\},
  \end{align*}
  where $K(\q_i,\beta_j)$ is defined in~\eqref{eq:C_def}, and
  $\alpha_1,\ldots,\alpha_n$ are the optimization variables.
\end{theorem}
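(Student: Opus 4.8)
The plan is to prove the theorem as a direct chain of equivalences, establishing a correspondence between feasible points of the linear program and field-stabilizing controllers of the form~\eqref{eq:g_form} that respect the speed bounds. Every vector $(\alpha_1,\ldots,\alpha_n)$ determines a reciprocal speed controller $v^{-1}(\theta)=\sum_{j=1}^n \alpha_j\beta_j(\theta)$ via~\eqref{eq:g_form}, and conversely; so it suffices to show that such an $\alpha$ is LP-feasible if and only if the associated controller is admissible (satisfies the speed constraints) and field stabilizing.

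First I would translate the speed constraints. For the rectangular basis~\eqref{eq:rect_basis}, on each subinterval $[(j-1)/n, j/n)$ exactly one basis function is nonzero, so $v^{-1}(\theta)=\alpha_j$ is constant there. Hence the pointwise constraint~\eqref{eq:speed_constraints} holds on this subinterval if and only if $\sup_\theta 1/\vmax(\theta) \leq \alpha_j \leq \inf_\theta 1/\vmin(\theta)$, which by the definitions of $\vmax(j)$ and $\vmin(j)$ is exactly the box constraint $1/\vmax(j)\leq \alpha_j \leq 1/\vmin(j)$. Since $\vmax(\theta)<+\infty$, the lower bound $1/\vmax(j)$ is strictly positive, so any feasible $\alpha$ gives $v^{-1}(\theta)>0$, i.e. a well-defined positive, finite speed $v(\theta)$; this guarantees the cycle time $T$ and coverage times $\tau(\q_i)$ in~\eqref{eq:rev_time}--\eqref{eq:cover_time} are finite, so the hypotheses of Lemma~\ref{lem:stab_cond} are met and its criterion applies.

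Next I would rewrite the stability condition. By Lemma~\ref{lem:stab_cond}, the controller field stabilizes the task if and only if~\eqref{eq:stability} holds for every $\q_i\in Q$. Substituting the expansion~\eqref{eq:g_form} and using linearity of the integral, this inequality becomes $\sum_{j=1}^n \alpha_j \int_{F(\q_i)}\beta_j\,d\theta > (p(\q_i)/c(\q_i))\sum_{j=1}^n \alpha_j \int_0^1 \beta_j\,d\theta$; collecting terms and invoking the definition of $K(\q_i,\beta_j)$ in~\eqref{eq:C_def} yields precisely $\sum_{j=1}^n \alpha_j K(\q_i,\beta_j)>0$. Combining this with the speed-constraint translation above shows that $\alpha$ satisfies all LP constraints if and only if the controller it induces is admissible and field stabilizing, which proves both directions at once: the ``if'' direction produces a stabilizing admissible controller from any LP-feasible point, and the ``only if'' direction extracts an LP-feasible point from any stabilizing admissible controller of the prescribed form.

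I expect no deep obstacle, since Lemma~\ref{lem:stab_cond} already supplies the hard analytic content. The only point demanding care is the faithful reduction of the pointwise speed constraint~\eqref{eq:speed_constraints} to the box constraints without loss of generality in the ``only if'' direction, together with confirming positivity of $v^{-1}$ so that Lemma~\ref{lem:stab_cond} is applicable; both are immediate consequences of the piecewise-constant structure of the rectangular basis and the finiteness of $\vmax$.
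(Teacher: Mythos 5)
Your proposal is correct and follows essentially the same route as the paper, whose ``proof'' is simply the derivation preceding the theorem statement: substituting the expansion~\eqref{eq:g_form} into the stability condition of Lemma~\ref{lem:stab_cond} to obtain $\sum_j \alpha_j K(\q_i,\beta_j)>0$, and specializing the pointwise speed constraints~\eqref{eq:speed_constraints} to box constraints on the $\alpha_j$ via the piecewise-constant rectangular basis. Your added remarks on positivity of $v^{-1}$ and the exact equivalence of the pointwise and box constraints are fine points the paper leaves implicit, not a departure in approach.
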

Hence, we can solve for a field stabilizing controller using a simple linear
program.  The program has $n$ variables (one for each basis function
coefficient), and $2n+m$ constraints (two for each basis function
coefficient, and one for each point of interest in $Q$).  One can
easily solve linear programs with thousands of variables and
constraints~\cite{DGL:84}. Thus, the problem of computing a
field stabilizing controller can be solved for finely discretized
environments with thousands of basis functions.  Note that in the
above lemma, we are only checking feasibility, and thus the cost
function in the optimization is arbitrary.  For simplicity we write
the cost as $0$.

In Theorem~\ref{thm:lin_prog} the cost is set to $0$ to highlight the
feasibility constraints.  However, in practice, an important
consideration is \emph{robustness} of the controller to uncertainty
and error in the model of the field evolution, and in the motion of
the robot.  Robustness of this type can be achieved by slightly
altering the above optimization to maximize the \emph{stability
  margin}.  This is outlined in the following result.

\begin{corollary}[Robustness via Maximum Stability Margin]
\label{rmk:MinCycleController}
The optimization
  \begin{align*}
    \text{maximize} \;\;& B & \\
   & \sum_{j=1}^n \alpha_j K(\q_i,\beta_j) \ge B
    & \forall\; i\in\{1,\ldots,m\}\\
    & \frac{1}{\vmax(j)} \leq \alpha_j \leq \frac{1}{\vmin(j)}, &
    \forall\; j\in\{1,\ldots,n\},
  \end{align*}
  where $\alpha_1,\ldots,\alpha_n$ and $B$ are the optimization
  variables, yields a speed controller which maximizes the stability
  margin, $\min_{\q_i\in Q}\sum_{j=1}^n \alpha_j K(\q_i,\beta_j)$.
  The controller
  \begin{enumerate}
  \item has the largest decrease in $Z(\q_i,t)$ per cycle, and thus
    achieves steady-state in the minimum number of cycles.
  \item is robust to errors in estimating the field production rate.
    If the robot's estimate of the production rate at a field point
    $\q_i\in Q$ is $\bar p(\q_i)$, and the true value is $p(\q_i) \leq
    \bar p(\q_i) +\epsilon$, then the field is stable provided that
    for each $\q_i \in Q$
    \[
    \epsilon < B \cdot c(\q_i) \left(\sum_{j=1}^n \alpha_j \int_0^1
    \beta_j(\theta)d\theta\right)^{-1}.
    \]
  \end{enumerate}
\end{corollary}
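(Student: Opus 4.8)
The plan is to first reduce the optimization to a statement about the stability margin, and then to read off both conclusions from a single identity that rewrites the per-cycle change of $Z$ in terms of the quantity $\sum_{j=1}^n \alpha_j K(\q_i,\beta_j)$ appearing in Theorem~\ref{thm:lin_prog}.

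First I would observe that this is the standard reformulation of a max--min problem. The constraints force $B \le \sum_{j=1}^n \alpha_j K(\q_i,\beta_j)$ for every $i$, so any feasible $B$ satisfies $B \le \min_{\q_i\in Q}\sum_{j=1}^n \alpha_j K(\q_i,\beta_j)$, and at an optimum equality must hold, since otherwise $B$ could be increased. Hence maximizing $B$ over the feasible $\alpha$ is exactly maximizing the stability margin $\min_{\q_i\in Q}\sum_{j=1}^n\alpha_j K(\q_i,\beta_j)$, and a positive optimal $B$ reproduces the strict inequality of Theorem~\ref{thm:lin_prog}, so the resulting controller is field stabilizing.

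For claim (i) I would connect the margin to the per-cycle decrease of $Z$. Substituting $v^{-1}=\sum_j\alpha_j\beta_j$ into the definitions~\eqref{eq:rev_time} and~\eqref{eq:cover_time} gives $T=\sum_j\alpha_j\int_0^1\beta_j(\theta)d\theta$ and $\tau(\q_i)=\sum_j\alpha_j\int_{F(\q_i)}\beta_j(\theta)d\theta$, so that by the definition~\eqref{eq:C_def} of $K$,
\[
c(\q_i)\tau(\q_i)-p(\q_i)T = c(\q_i)\sum_{j=1}^n\alpha_j K(\q_i,\beta_j).
\]
Comparing with~\eqref{eq:uncertainty_change}, the per-cycle decrease of $Z(\q_i,\cdot)$, once the field is strictly positive over a whole cycle, is exactly $c(\q_i)\sum_j\alpha_j K(\q_i,\beta_j)$, i.e. $c(\q_i)$ times the margin at $\q_i$. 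Maximizing $B$ therefore maximizes, in the worst case over $Q$, this per-cycle decrease; and since, by the sufficiency argument in the proof of Lemma~\ref{lem:stab_cond}, $Z(\q_i,\cdot)$ falls by this fixed amount each cycle until it enters the steady-state band, the number of cycles needed to reach that band is inversely proportional to the decrease, and is thus minimized. I expect the one delicate point here to be the weighting by $c(\q_i)$: the optimization maximizes the \emph{unweighted} minimum margin, so the ``minimum number of cycles'' conclusion should be read modulo the consumption rates, and I would state it with the decrease at $\q_i$ carrying the factor $c(\q_i)$.

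Finally, for the robustness claim (ii) I would compare the true margin against the designed one. Writing $\bar K$ for the coefficients built from the estimate $\bar p$ and $K$ for those built from the true $p$, the definition~\eqref{eq:C_def} gives
\[
\sum_{j=1}^n\alpha_j K(\q_i,\beta_j)=\sum_{j=1}^n\alpha_j\bar K(\q_i,\beta_j)-\frac{p(\q_i)-\bar p(\q_i)}{c(\q_i)}\sum_{j=1}^n\alpha_j\int_0^1\beta_j(\theta)d\theta.
\]
Using the optimal value $\sum_j\alpha_j\bar K(\q_i,\beta_j)\ge B$, the hypothesis $p(\q_i)-\bar p(\q_i)\le\epsilon$, and $c(\q_i)>0$ together with $\sum_j\alpha_j\int_0^1\beta_j(\theta)d\theta=T>0$, I would bound the true margin below by $B-\tfrac{\epsilon}{c(\q_i)}\sum_j\alpha_j\int_0^1\beta_j(\theta)d\theta$. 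Requiring this lower bound to be positive yields precisely the stated threshold on $\epsilon$, and then Lemma~\ref{lem:stab_cond} guarantees field stability. This last substitution is routine; the only genuine obstacle is the bookkeeping in part (i) noted above.
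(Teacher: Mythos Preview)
Your proposal is correct and follows the paper's approach: the paper proves (i) in a single sentence by asserting that $\sum_j\alpha_j K(\q_i,\beta_j)$ is the per-cycle decrease of the field at $\q_i$, and proves (ii) by precisely the substitution you describe, rewriting the true stability constraint as $\sum_j\alpha_j K(\q_i,\beta_j) > \tfrac{\epsilon}{c(\q_i)}\sum_j\alpha_j\int_0^1\beta_j(\theta)d\theta$ and invoking the margin $B$. Your caveat about the $c(\q_i)$ factor in (i) is in fact more careful than the paper's one-line argument, which silently identifies the margin with the decrease; your reading is the correct one, and the paper simply glosses over this weight.
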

\begin{proof}
  The first property follows directly from the fact that $\sum_{j=1}^n
  \alpha_j K(\q_i,\beta_j)$ is the amount of decrease of the field at
  point $\q_i$ in one cycle.

  To see the second property, suppose that the optimization is based
  on a production rate $\bar p(\q_i)$ for each $\q_i\in Q$, but the
  true production rate is given by $p(\q_i) := \bar p(\q_i) +
  \epsilon$, where $\epsilon >0$.  In solving the optimization above,
  we obtain a controller with a stability margin of $B >0$. For the
  true system to be stable we require that for each $\q_i\in Q$,
\[
 \sum_{j=1}^n \alpha_j \int_{F(\q_i)} \beta_j(\theta)d\theta >
  \frac{\bar p(\q_i)+\epsilon}{c(\q_i)}  \sum_{j=1}^n \alpha_j \int_0^1 \beta_j(\theta)d\theta.
\]   
This can be rewritten as 
\[
\sum_{j=1}^n \alpha_j K(\q_i,\beta_j) > \frac{\epsilon}{c(\q_i)}
\sum_{j=1}^n \alpha_j \int_0^1 \beta_j(\theta)d\theta,
\]
from which we see that the true field is stable provided that
$\epsilon < B c(\q_i) /\big(\sum_{j=1}^n \alpha_j \int_0^1
\beta_j(\theta)d\theta\big)$ for each $\q_i\in Q$.
\end{proof}

\begin{remark}[Alternative Basis Functions]
  Given an arbitrary set of basis functions, the inequalities
  in~\eqref{eq:speed_constraints} yield an infinite (and uncountable)
  number of constraints, one for each $\theta\in[0,1]$.  This means
  that for some basis functions the linear program in
  Theorem~\ref{thm:lin_prog} can not be solved exactly.  However, in
  practice, an effective method is to simply enforce the constraints
  in~\eqref{eq:speed_constraints} for a finite number of $\theta$
  values, $\theta_1,\ldots,\theta_w$.  Each $\theta_i$ generates two
  constraints in the optimization.  Then, we can tighten each
  constraint by $\xi >0$, yielding $1/v_{\max}(\theta_i) +\xi \leq
  \sum_j \alpha_j \beta_j(\theta_i) \leq 1/v_{\min}(\theta_i)-\xi$ for
  each $i\in\{1,\ldots,w\}$.  Choosing the number of $\theta$ values
  $w$ as large as possible given the available computational
  resources, we can then increase $\xi$ until the controller satisfies
  the original speed constraints.  \oprocend
\end{remark}

\subsection{Synthesis of an Optimal Controller}
\label{sec:optimal_speed}

In this section we look at Problem~\ref{prob:PM} (ii), which is to
minimize the maximum value attained by the field over the finite
region of interest $Q$.  That is, for a given persistent task, our goal is to
minimize the following cost function,
\begin{equation}
\label{eq:H_cost}
\mathcal{H}(v) = \max_{\q\in Q} \left(\limsup_{t\to+\infty} Z(\q,t) \right)
\end{equation}
over all possible speed controllers $v$.  At times we will refer to the
maximum steady-state value for a point $\q$ using a speed controller $v$
as 
\[
\mathcal{H}(\q,v):= \limsup_{t\to+\infty} Z(\q,t)
\]

Our main result of this section, Theorem~\ref{thm:lin_prog_opt}, is
that $\mathcal{H}(v)$ can be minimized through a linear program.
However, we must first establish intermediate results.  First we show
that if $v$ is a field stabilizing controller, then for every initial
condition there exists a finite time $t^*$ such that $Z(\q,t) \leq
\mathcal{H}(v)$ for all $t\geq t^*$.

\begin{proposition}[Steady-State Field]
  \label{prop:steady_state}
  Consider a feasible persistent task and a field stabilizing speed controller. Then,
  there is a steady-state field
  \[
  \bar Z : Q \times [0,1] \to \real_{\geq 0},
  \]
  satisfying the following statements for each $\q\in Q$:
  \begin{enumerate}
  \item for every set of initial conditions $\theta(0)$ and $Z(\q,0)$,
    there exists a time $t^* \geq 0$ such that
    \[
    Z(\q,t) = \bar Z\big(\q,\theta(t)\big),
    \]
    for all $t \geq t^*$.
  \item there exists at least one $\theta \in [0,1]$ such that $\bar
    Z(\q,\theta) = 0$.
  \end{enumerate}
\end{proposition}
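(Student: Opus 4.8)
The plan is to exploit the fact that, for a position-feedback controller $v(\theta)$ with $v>0$, the robot's motion is exactly periodic: starting from any $\theta(0)$ the curve is closed and traversed in time $T$, so $\theta(t+T)=\theta(t)$ for all $t$. Fixing a point $\q$, I would rewrite the clamped dynamics~\eqref{eq:Z_diffeq} as a reflected scalar ODE driven by the \emph{unclamped rate} $a(t):=p(\q)-c(\q)\mathbf{I}(\theta(t),\q)$, where $\mathbf{I}$ is the coverage indicator from the proof of Lemma~\ref{lem:stab_cond}. Note that $a$ is $T$-periodic because $\theta$ is, and that its integral over one period is $\int_t^{t+T} a = p(\q)T - c(\q)\tau(\q) = -\epsilon$, with $\epsilon>0$ being exactly the per-cycle decrease guaranteed by the stability condition. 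Writing $A(t):=\int_0^t a(s)\,ds$, so that $A(t+T)=A(t)-\epsilon$, the reflection at $Z=0$ admits the closed form
\[
Z(\q,t)=\max\Big( Z(\q,0)+A(t),\ \sup_{0\le s\le t}\big(A(t)-A(s)\big)\Big),
\]
which is the crux of the argument, since the second term is independent of the initial field value.

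From this representation both claims follow. For (i), I would observe that $Z(\q,0)+A(t)\to-\infty$ because $A$ drifts down by $\epsilon$ each period, so past some finite $t_1$ the first argument of the $\max$ is dominated by the second (which is $\ge 0$, as is seen by taking $s=t$). Moreover, shifting a candidate $s$ forward by one period raises $A(t)-A(s)$ by exactly $\epsilon$, so for $t\ge T$ the supremum is attained with $s$ in the trailing window $[t-T,t]$; by $T$-periodicity of $a$ this trailing supremum depends on $t$ only through the phase $\theta(t)$. Hence for $t\ge t^*:=\max(t_1,T)$ one has $Z(\q,t)=\bar Z(\q,\theta(t))$, where $\bar Z(\q,\theta):=\sup_{t-T\le s\le t}(A(t)-A(s))$ evaluated at any $t$ with $\theta(t)=\theta$. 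Since this formula no longer contains $Z(\q,0)$, the same $\bar Z$ serves all initial conditions.

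For (ii), I would use the running minimum $M(t):=\inf_{s\le t}A(s)$. Because $A(t)\to-\infty$ and $A$ is continuous, $A$ sets new all-time lows at arbitrarily large ``record'' times $t_r$, at which $A(t_r)\le A(s)$ for every $s\le t_r$ and in particular on $[t_r-T,t_r]$; thus $\bar Z(\q,\theta(t_r))=\sup_{s}(A(t_r)-A(s))=0$. This produces the required phase at which the steady-state field vanishes, and by periodicity such a phase recurs once per cycle. Repeating the construction for each of the finitely many $\q\in Q$ assembles $\bar Z:Q\times[0,1]\to\real_{\ge 0}$.

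The main obstacle is establishing \emph{exact}, finite-time equality $Z(\q,t)=\bar Z(\q,\theta(t))$ rather than mere asymptotic convergence; a naive comparison or contraction argument only yields the latter. The reflection representation is what resolves this, because the clamping at zero erases the initial condition after a bounded number of cycles. I would take care to justify the closed-form solution (for instance, by verifying that it is nonnegative, satisfies~\eqref{eq:Z_diffeq}, and decreases below the free trajectory $Z(\q,0)+A(t)$ only while $Z=0$), and to confirm that the coverage time per period equals $\tau(\q)$ independently of the starting phase of the cycle, which is precisely what makes $A(t+T)=A(t)-\epsilon$ hold uniformly in $t$.
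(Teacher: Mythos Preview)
Your argument is correct and takes a genuinely different route from the paper. The paper first establishes a separate lemma (Lemma~\ref{lem:reduce_to_zero}) showing that from any initial condition there is a time $t^*>T$ with $Z(\q,t^*+aT)=0$ for all integers $a\ge 0$; this is proved by iterating the per-cycle decrement $-\epsilon$ until zero is reached, and then invoking time-invariance and a monotone comparison of solutions to show that once zero is hit it recurs every period. Proposition~\ref{prop:steady_state} is then deduced by defining $\bar Z$ on one period after $t^*$ and arguing by contradiction (again via comparison of two solutions that eventually coincide at a zero) that $\bar Z$ does not depend on the initial conditions. Your approach instead writes down the explicit Skorokhod reflection formula for the clamped scalar ODE and reads off both claims directly from it: the downward drift $A(t+T)=A(t)-\epsilon$ kills the initial-condition term in finite time, the shift argument localizes the running supremum to the trailing window $[t-T,t]$ and hence to the phase $\theta(t)$, and record times of $A$ furnish the zero of $\bar Z$. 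This buys you an explicit closed form for $\bar Z(\q,\theta)=\sup_{t-T\le s\le t}\int_s^t a$, a single self-contained argument with no auxiliary lemma, and automatic independence from $\theta(0)$ since only increments of $A$ over a window of length $T$ appear. The paper's approach, by contrast, is more elementary in that it does not presuppose the reflection representation and works entirely with comparison and uniqueness properties of the piecewise-constant ODE; it is also closer in spirit to the downstream computations in Section~\ref{sec:optimal_speed}, which track $\bar Z$ at the endpoints $y_k$ of the coverage intervals rather than via a global supremum.
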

From the above result we see that from every initial condition, the
field converges in finite time to a steady-state $\bar Z(\q,\theta)$.
In steady-state, the field $Z(\q,t)$ at time $t$ depends only on
$\theta(t)$ (and is independent of $Z(\q,0)$).  Each time the robot is
located at $\theta$, the field is given by $\bar Z(\q,\theta)$.
Moreover, the result tells us that in steady-state there is always a
robot position at which the field is reduced to zero.  In order to
prove Proposition~\ref{prop:steady_state} we begin with the following
lemma.  Recall that the cycle-time for a speed controller $v$ is $T
:=\int_0^11/v(\theta) d\theta$.

\begin{lemma}[Field Reduced to Zero]
  \label{lem:reduce_to_zero}
  Consider a feasible persistent task and a field stabilizing speed
  controller.  For every $\q\in Q$ and every set of initial conditions
  $Z(\q,0)$ and $\theta(0)$, there exists a time $t^* > T$ such that
  \begin{equation}
    \label{eq:periodic_t}
  Z(\q,t^* + a T) = 0,
  \end{equation}
  for all non-negative integers $a$.
\end{lemma}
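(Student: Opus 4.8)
The plan is to exploit the fact that, because $v$ is a periodic position-feedback controller, the robot trajectory $\theta(\cdot)$ is $T$-periodic, and hence the forcing term driving $Z(\q,\cdot)$ in~\eqref{eq:Z_diffeq} is $T$-periodic. Writing $\mu(u) := p(\q) - c(\q)\mathbf{I}(\theta(u),\q)$ and the free accumulated drift $X(t) := \int_0^t \mu(u)\,du$, the first step is to record a Skorokhod-type closed form for the reflected dynamics~\eqref{eq:Z_diffeq} (the field is held at zero exactly when it is covered and empty):
\[
Z(\q,t) = \max\Big\{\, Z(\q,0) + X(t),\ \max_{0\le s\le t}\big(X(t)-X(s)\big)\,\Big\}.
\]
I would verify this identity directly from~\eqref{eq:Z_diffeq} by checking it on the uncovered arcs (where $\mu>0$, so no reflection occurs) and on the covered arcs (where the second term pins $Z$ to zero precisely when the running infimum of $X$ is attained at $t$). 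The second term is nonnegative and independent of the initial field value, while the first term carries the dependence on $Z(\q,0)$.

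The second step is to isolate the negative drift. Since the robot sweeps a full cycle over any window of length $T$, we have $\int_t^{t+T}\mu = p(\q)T - c(\q)\tau(\q) =: -\epsilon < 0$ for every $t$, by $T$-periodicity and Lemma~\ref{lem:stab_cond}. Setting $\psi(t) := X(t) + (\epsilon/T)\,t$ then gives a continuous $T$-periodic function, so that $X(t) = -(\epsilon/T)\,t + \psi(t)$. Let $\theta^*\in[0,1]$ be a position at which $\psi$ attains its minimum $\psi_{\min}$ over one period (such a minimizer exists by continuity and compactness), and let $t_\star$ be a time with $\theta(t_\star)=\theta^*$. Since $\theta(\cdot)$ and $\psi$ are both $T$-periodic, $\psi(t_\star + aT) = \psi_{\min}$ for every nonnegative integer $a$.

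The third step is to show that each such time is a running-minimum time of $X$. For any $s \le t_\star + aT$,
\[
X(s) = -\tfrac{\epsilon}{T}s + \psi(s) \;\ge\; -\tfrac{\epsilon}{T}(t_\star+aT) + \psi_{\min} = X(t_\star + aT),
\]
using $\psi(s)\ge\psi_{\min}$ together with the downward trend $-\tfrac{\epsilon}{T}s \ge -\tfrac{\epsilon}{T}(t_\star+aT)$. Hence $\max_{0\le s\le t_\star+aT}\big(X(t_\star+aT)-X(s)\big) = 0$. Choosing $a_0$ large enough that $Z(\q,0)+X(t_\star+a_0T)\le 0$ (possible since $X(t)\to-\infty$, and the value is only smaller at later such times) and setting $t^* := t_\star + a_0 T > T$, the displayed closed form yields $Z(\q,t^*+aT)=0$ for every $a\ge 0$, which is~\eqref{eq:periodic_t}. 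I expect the main obstacle to be the first step: establishing the reflection identity rigorously for the hybrid dynamics in~\eqref{eq:Z_diffeq}, and recognizing that the correct recurring zero-crossing position is the minimizer of the periodic part $\psi$ (the naive choice of the first time the field vanishes need not recur at the same position each cycle, since $X$ may dip lower at an interior instant). Everything after that reduces to the elementary trend-plus-periodic estimate above.
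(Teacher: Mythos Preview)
Your approach is correct and genuinely different from the paper's. The paper proceeds in two more elementary steps. First, it argues by contradiction that some $t^*>T$ with $Z(\q,t^*)=0$ must exist: otherwise $Z(\q,t+T)-Z(\q,t)=p(\q)T-c(\q)\tau(\q)=-\epsilon$ holds with equality for all $t>T$, forcing $Z$ negative. Second, it shows that \emph{any} such zero recurs, via a comparison principle for the reflected dynamics~\eqref{eq:Z_diffeq}: starting from $Z(\q,t^*-T)\ge 0$ at position $\theta(t^*-T)=\theta(t^*)$ yields $Z=0$ after one period, so by monotonicity in the initial field value, starting from $0$ at the same position yields at most $0$ after one period, hence $Z(\q,t^*+T)=0$. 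Your Skorokhod formula together with the trend-plus-periodic decomposition $X(t)=-(\epsilon/T)t+\psi(t)$ is more constructive: it explicitly identifies the recurring zero position as the minimizer of the periodic part $\psi$, and recurrence then drops out of the running-minimum computation. The paper's argument is shorter and sidesteps verifying the reflection identity, while your route yields more structural information (it pins down exactly which position on the curve hosts the periodic zero, which is in fact what Proposition~\ref{prop:steady_state}(ii) later needs).

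One minor correction to your closing aside: your worry that ``the first time the field vanishes need not recur at the same position each cycle'' is unfounded. The paper's comparison argument shows that \emph{every} zero occurring after time $T$ recurs $T$-periodically, not just the one at the $\psi$-minimizer; the monotonicity of the Skorokhod map in its initial condition is precisely what makes this work. This does not affect your proof, which is self-contained and valid as written.
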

\begin{proof}
  Consider any $\q\in Q$, and initial conditions $Z(\q,0)$ and
  $\theta(0)$, and suppose by way of contradiction that the speed
  controller is stable but $Z(\q,t) > 0$ for all $t > T$.  From
  Lemma~\ref{lem:stab_cond}, if the persistent task is stable, then $c(\q)
  \tau(\q) > p(\q) T$ for all $\q$.  Thus, there exists $\epsilon > 0$
  such that $c(\q) \tau(\q) - p(\q) T > \epsilon$ for all $\q\in
  Q$. From the proof of Lemma~\ref{lem:stab_cond}, we have that
  \[
  Z(\q,t+T) - Z(\q,t) = p(\q) T - c(\q) \tau(\q) = -\epsilon.
  \]
  Therefore, given $Z(\q,0)$, we have that $Z(\q,t^*) = 0$ for some
  finite $t^* > T$, a contradiction.

  Next we will verify that if $Z(\q,t^*) = 0$ for some $t^* > T$, then
  $Z(\q,t^*+T) = 0$.  To see this, note that the differential
  equation~\eqref{eq:Z_diffeq} is piecewise constant.  Given a speed
  controller $v(\theta)$, the differential equation is time-invariant,
  and admits unique solutions.

  Based on this, consider two initial conditions
  for~\eqref{eq:Z_diffeq}, 
  \[
  Z_1(\q,0) := Z(\q,t^*-T) \geq 0, \quad 
  \theta_1(0) := \theta(t^*-T) = \theta(t^*),
  \]
  and 
  \[
  Z_2(\q,0):=Z(\q,t^*) = 0, \quad \theta_2(0) := \theta(t^*).
  \]
  Since~\eqref{eq:Z_diffeq} is time-invariant, we have that $Z_1(\q,T)
  = Z(\q,t^*) = 0$, and $Z_2(\q,T) = Z(\q,t^*+T)$.  In addition, by
  uniqueness of solutions, we also know that $Z_1(\q,0) \geq
  Z_2(\q,0)$ implies that $Z_1(\q,T) \geq Z_2(\q,T)$.  Thus, we have
  that $Z(\q,t^*)= 0 \geq Z(\q,t^*+T)$, proving the desired result.
\end{proof}
The previous lemma shows that from every initial condition there
exists a finite time $t^*$, after which the field at a point $\q$ is
reduced to zero in each cycle.  With this lemma we can prove
Proposition~\ref{prop:steady_state}.

\begin{proof}[Proof of Proposition~\ref{prop:steady_state}] 
  In Lemma~\ref{lem:reduce_to_zero} we have shown that for every set
  of initial conditions $Z(\q,0)$, $\theta(0)$, there exists at time
  $t^* > T$ such that $Z(\q,t^* +aT) = 0$ for all non-negative
  integers $a$. Since $T$ is the cycle-time for the robot, we also
  know that $\theta(t^*+aT) = \theta(t^*)$ for all $a$.
  Since~\eqref{eq:Z_diffeq} yields unique
  solutions,~\eqref{eq:periodic_t} uniquely defines $Z(\q,t)$ for all
  $t \geq t^*$, with
  \[
  Z(\q,t+T) = Z(\q,t) \quad \text{for all $t \geq t^*$.} 
  \]
  Hence, we can define the steady-state profile $\bar Z(\q,\theta)$ as
  \[
  \bar Z\big(\q,\theta(t)\big) := Z(\q,t) \quad \text{for all
    $t\in[t^*,t^*+T)$}.
  \]

  Finally, we need to verify that $\bar Z(\q,\theta)$ is independent
  of initial conditions.  To proceed, suppose by way of contradiction
  that there are two sets of initial conditions $\theta_1(0)$,
  $Z_1(\q,0)$, and $\theta_2(0)$, $Z_2(\q,0)$ which yield different
  steady-state fields $\bar Z_1(\q,\theta)$ and $\bar Z_2(\q,\theta)$.
  That is, there exists $\tilde \theta$ such that $\bar Z_1(\q,\tilde
  \theta) \neq \bar Z_2(\q,\tilde \theta)$. Without loss of
  generality, assume that $\bar Z_1(\q,\tilde \theta) > \bar
  Z_2(\q,\tilde \theta)$.  To obtain a contradiction, we begin by
  showing that this implies $\bar Z_1(\q,\theta) \geq \bar
  Z_2(\q,\theta)$ for all $\theta$.  Note that $Z_1$ and $Z_2$ reach
  their steady-state profiles $\bar Z_1$ and $\bar Z_2$ in finite
  time.  Thus, there exist times $t_1,t_2 \geq 0$ for which
  $\theta_1(t_1) = \theta_2(t_2) = \tilde \theta$, and $Z_1(\q,t_1) =
  \bar Z_1(\q,\tilde \theta)$, and $Z_2(\q,t_2) = \bar Z_2(\q,\tilde
  \theta)$.  Since $Z_1(\q,t_1) > Z_2(\q,t_2)$, and since $Z$ is a
  continuous function of time, either i) $Z_1(\q,t_1 + t) > Z_2(\q,t_2
  + t)$ for all $t\geq 0$, or ii) there exists a time $\bar t > 0$ for
  which $Z_1(\q,t_1 + \bar t) = Z_2(\q,t_2 + \bar t)$, which by
  uniqueness of solutions implies $Z_1(\q,t_1 + t) = Z_2(\q,t_2 + t)$
  for all $t\geq \bar t$.  Thus, $Z_1(\q,t_1 + t) \geq Z_2(\q,t_2 +
  t)$ for all $t \geq 0$, implying that $\bar Z_1(\q,\theta) \geq \bar
  Z_2(\q,\theta)$ for all $\theta$.

  From Lemma~\ref{lem:reduce_to_zero}, there exists a $\bar \theta$
  for which $\bar Z_1(\q,\bar \theta) = 0$.  Since, $\bar
  Z_1(\q,\theta) \geq \bar Z_2(\q,\theta)$ for all $\theta$, we must
  have that $\bar Z_2(\q,\bar\theta)=0$.  However, the value of $Z_1$
  and $Z_2$ at $\bar \theta$ uniquely defines $\bar Z_1$ and $\bar
  Z_2$ for all $\theta$, implying that $\bar Z_1(\q,\theta) = \bar
  Z_2(\q,\theta)$, a contradiction.
\end{proof}

From Proposition~\ref{prop:steady_state} we have shown the existence of a
steady-state field $\bar Z(\q,\theta)$ that is independent of initial
conditions $Z(\q,0)$ and $\theta(0)$.

Now, consider a point $\q\in Q$ and a field stabilizing speed controller
$v(\theta)$, and let us solve for its steady-state field $\bar
Z(\q,\theta)$.  To begin, let us write $F(\q)$ (the set of $\theta$
values for which the point $\q$ is covered by the footprint) as a
union of disjoint intervals
\begin{equation}
\label{eq:F_q}
F(\q) = [x_1,y_1] \cup [x_2,y_2] \cup \cdots \cup [x_{\ell},y_{\ell}],
\end{equation}
where $\ell$ is a positive integer, and $y_k > x_k > y_{k-1}$ for each
$k\in\{1,\ldots,\ell\}$.\footnote{Note that the number of intervals
  $\ell$, and the points $x_1,\ldots,x_{\ell}$ and
  $y_1,\ldots,y_{\ell}$ are a function of $\q$.  However, for
  simplicity of notation, we will omit writing the explicit
  dependence.} Thus, on the intervals $[x_k,y_k]$ the point $\q$ is
covered by the robot footprint, and on the intervals $[y_k,x_{k+1}]$,
the point $\q$ in uncovered.  As an example, in
Figure~\ref{fig:curve_setup}, the set $F(\q)$ consists of two
intervals, and thus $\ell =2$.  An example of a speed controller and
an example of a set $F(\q)$ are shown in
Figures~\ref{fig:speed_profile} and~\ref{fig:steady_state}.

\begin{figure*}
  \centering
  \subfloat[A sample reciprocal speed controller $v(\theta)$]{%
    \includegraphics[width=0.32\linewidth]{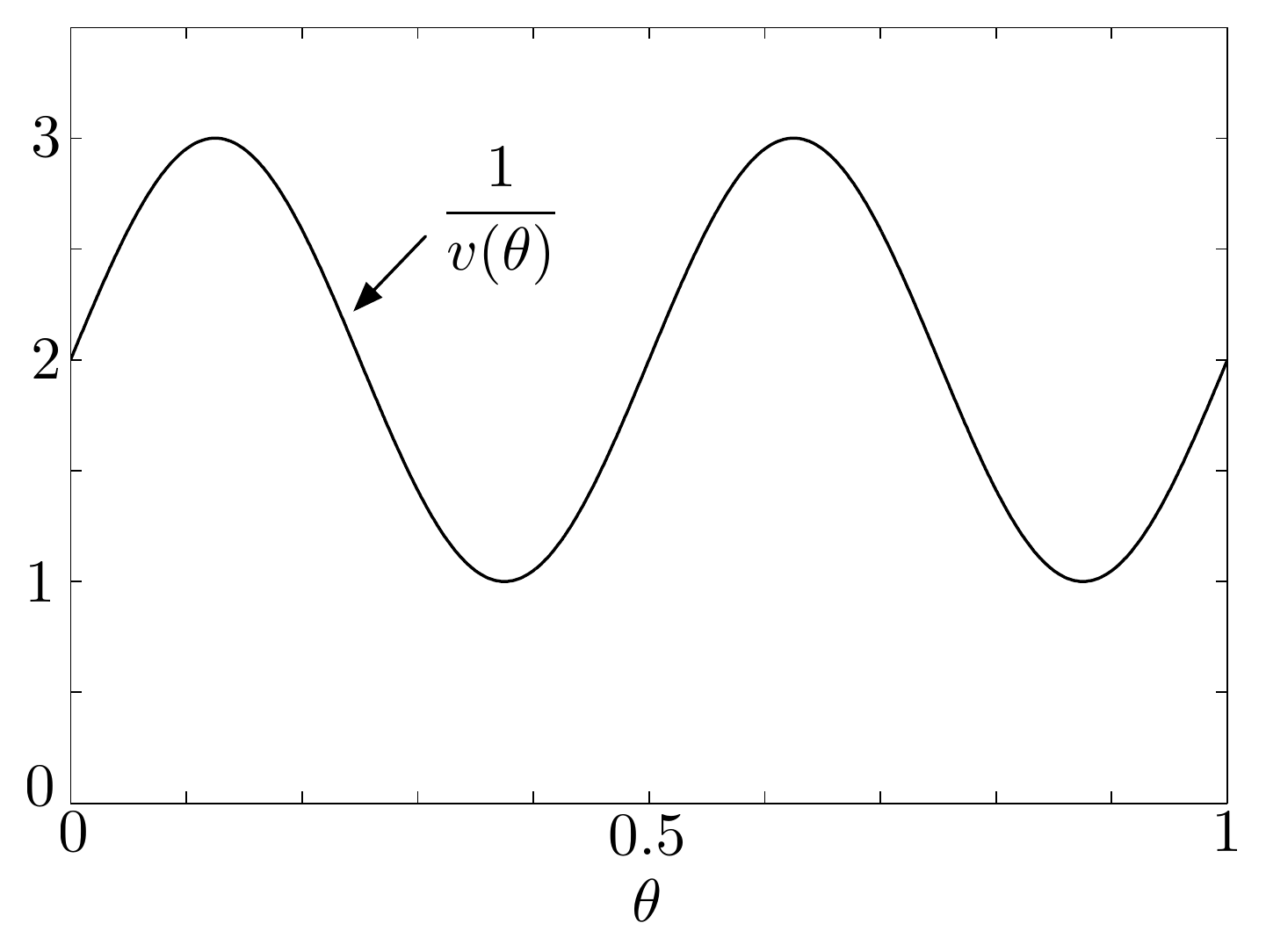}
    \label{fig:speed_profile}
}%
\hfill
 \subfloat[The steady-state field $\bar Z(\q,\theta)$.  The set
  $F(\q)$ consists of three intervals which are shaded on the
  $\theta$-axis.  The steady-state profile is increasing outside of
  $F(\q)$ and decreasing inside $F(\q)$.]{%
\includegraphics[width=0.32\linewidth]{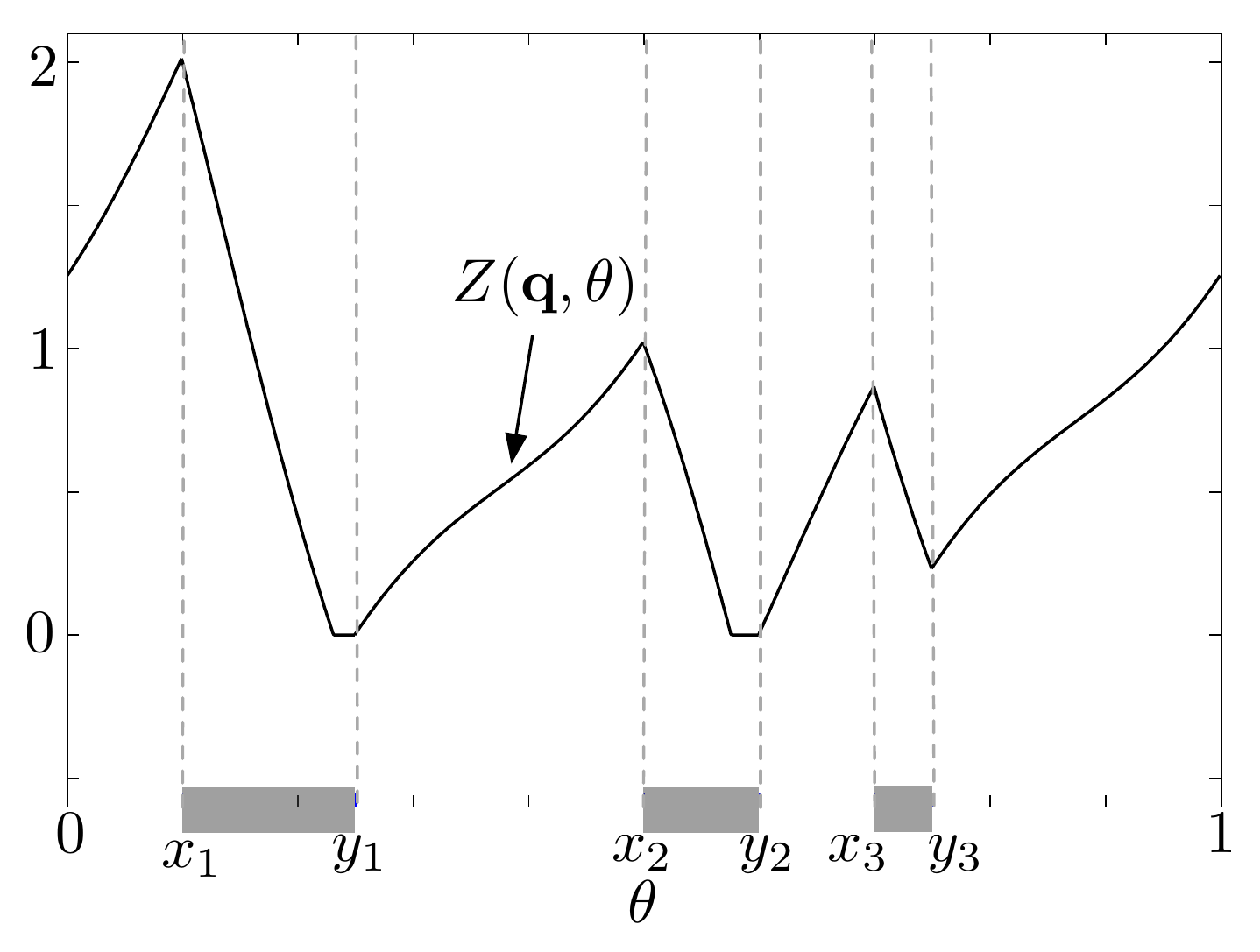} 
\label{fig:steady_state}  
}%
\hfill %
\subfloat[The calculation of quantities $N_{1,2}$, $N_{2,3}$ and
$N_{3,1}$.  The values represent the maximum reduction from $y_{k-1}$
to $y_{k}$.  Thus, $N_{1,2},N_{3,1} <0$ while $N_{2,3} >0$.]{%
    \includegraphics[width=0.32\linewidth]{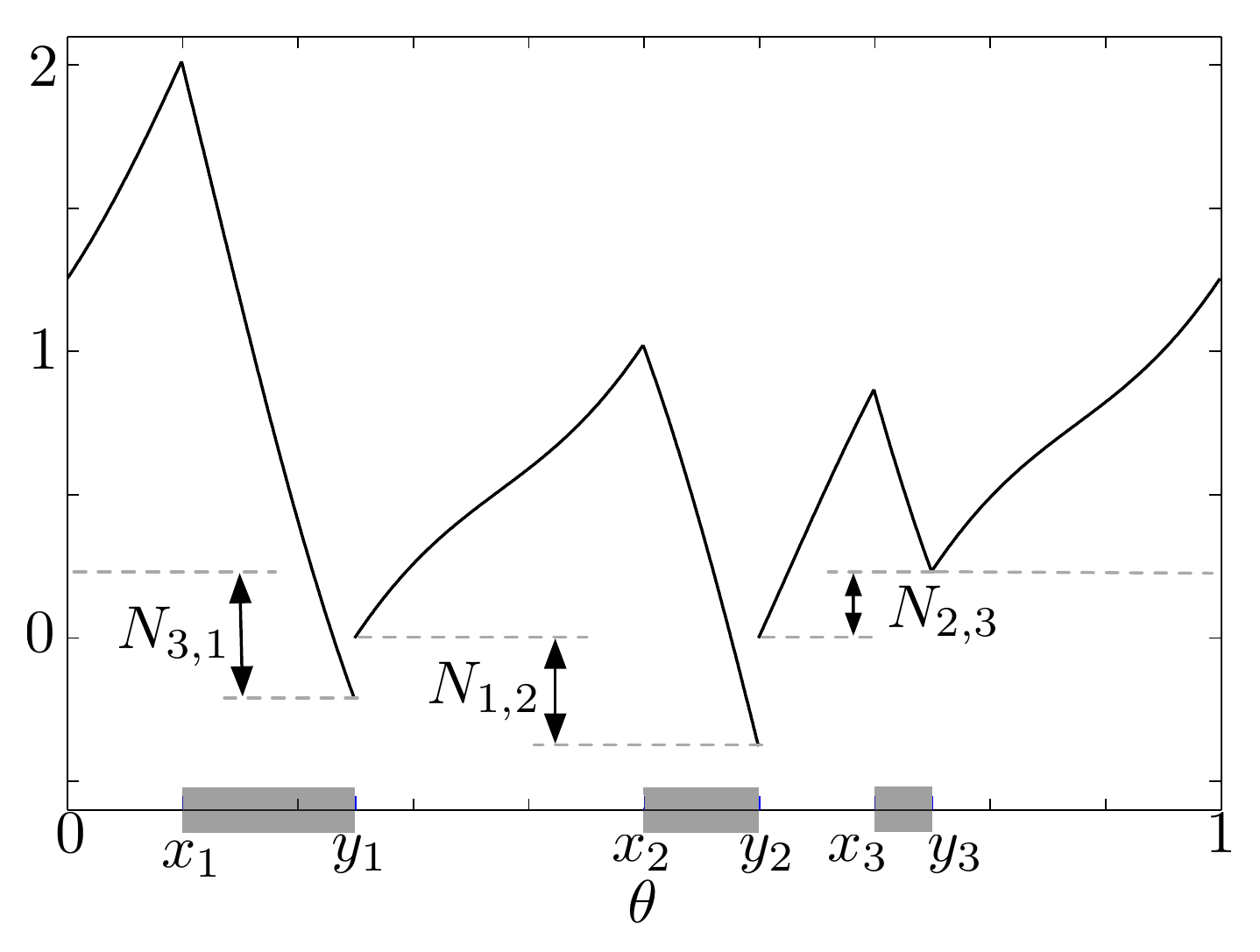} %
   \label{fig:steady_state_below}
  }%
  \caption{The steady-state field for a point $\q\in Q$.  The set of
    $\theta$ values for which $\q$ is covered is given by $F(\q) =
    [x_1,y_1]\cup [x_2,y_2]\cup[x_3,y_3]$.  The field is produced at a
    rate $p(\q)=3$, and is consumed by the footprint at a rate
    $c(\q)=8.5$.}
  \label{fig:steady_state_Z}
\end{figure*}

From differential equation~\eqref{eq:Z_diffeq} we can write
\begin{align}
  \bar Z(\q,x_k) &= \bar Z(\q,y_{k-1}) + p(\q)
  \int_{y_{k-1}}^{x_k}\frac{d\theta}{v(\theta)} \label{eq:x_to_y}\\
  \bar Z(\q,y_k) &= \left(\bar Z(\q,x_{k}) +\big(p(\q) - c(\q) \big)
    \int_{x_{k}}^{y_k}\frac{d\theta}{v(\theta)}\right)^+ \label{eq:y_to_x},
\end{align}
where for $z\in\real$, we define $(z)^+:= \max\{z,0\}$.  Combining
equations~\eqref{eq:x_to_y} and~\eqref{eq:y_to_x} we see that
\begin{multline}
\label{eq:recursive_Z}
\bar Z(\q,y_k) = \left(\bar Z(\q,y_{k-1}) + p(\q)
  \int_{y_{k-1}}^{y_k}\frac{d\theta}{v(\theta)} \right.\\ -\left. c(\q)
  \int_{x_{k}}^{y_k}\frac{d\theta}{v(\theta)}\right)^+.
\end{multline}
For each $b\in\{1,\ldots,\ell\}$, let us define\footnote{In this
  definition, and in what follows, addition and subtraction in the
  indices is performed modulo $\ell$.  Therefore, if $k=1$, then
  $N_{k-1,k} = N_{\ell,1}$.}
\begin{equation}
  \label{eq:N_k-b_k}
N_{k-b,k}(\q) := p(\q)\int_{y_{k-b}}^{y_{k}}\frac{d\theta}{v(\theta)}
-c(\q) \sum_{w=0}^{b-1}
\int_{x_{k-w}}^{y_{k-w}}\frac{d\theta}{v(\theta)}.
\end{equation}
Note that we can write
\[
\bar Z(\q,y_k) = \left( \bar Z(\q,y_{k-1}) + N_{k-1,k}(\q) \right)^+.
\]
and, from~\eqref{eq:recursive_Z} we have
\begin{equation}
\label{eq:Z_upper}
\bar Z(\q,y_k) \geq \bar Z(\q,y_{k-b}) + N_{k-b,k}(\q).
\end{equation}
Moreover,
\begin{multline}
\label{eq:Z_upper_eq}
\bar Z(\q,y_k) = \bar Z(\q,y_{k-b}) + N_{k-b,k}(\q), \\ 
\text{if $Z(\q,y_{k-j}) > 0$ for all $j\in\{1,\ldots,b-1\}$}.
\end{multline}
Thus, we see that the quantity $N_{k-b,k}(\q)$ gives the maximum
reduction in the field between $\theta = y_{k-b}$ and $\theta =
y_{k}$.  An example for $b=1$ is shown in
Figure~\ref{fig:steady_state_below}.  With these definition, we can
characterize the steady-state field at the points $y_k$.

\begin{lemma}[Steady-State Field at Points $y_k$]
  Given a feasible persistent task and a field stabilizing speed controller, consider a
  point $\q\in Q$ and the set $F(\q) = \cup_{k=1}^{\ell} [x_k,y_k]$.
  Then, for each $k\in\{1,\ldots,\ell\}$ we have
  \[
  \bar Z(\q,y_k) = \max_{b\in\{0,\ldots,\ell-1\}} N_{k-b,k}(\q),
  \]
  where $N_{k-b,k}(\q)$ is defined in~\eqref{eq:N_k-b_k} and
  $N_{k,k}(\q) := 0$.
\end{lemma}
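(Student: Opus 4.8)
The plan is to prove the identity by establishing the two inequalities $\bar Z(\q,y_k) \geq \max_{b} N_{k-b,k}(\q)$ and $\bar Z(\q,y_k) \leq \max_{b} N_{k-b,k}(\q)$ separately. The lower bound is essentially immediate from the estimates already assembled, while the upper bound requires locating the most recent position at which the field was genuinely driven to zero; this is where the real work lies.

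For the lower bound, I would simply invoke inequality~\eqref{eq:Z_upper}, which states $\bar Z(\q,y_k) \geq \bar Z(\q,y_{k-b}) + N_{k-b,k}(\q)$. Since the steady-state field is non-negative everywhere, dropping the term $\bar Z(\q,y_{k-b}) \geq 0$ gives $\bar Z(\q,y_k) \geq N_{k-b,k}(\q)$ for every $b\in\{1,\ldots,\ell-1\}$, and the case $b=0$ holds trivially because $N_{k,k}(\q)=0 \leq \bar Z(\q,y_k)$. Taking the maximum over $b\in\{0,\ldots,\ell-1\}$ then yields $\bar Z(\q,y_k) \geq \max_{b} N_{k-b,k}(\q)$.

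For the upper bound, the key point is that the maximum is in fact attained by the value $\bar Z(\q,y_k)$ itself. By Proposition~\ref{prop:steady_state}(ii) the steady-state field vanishes at some $\theta$; since $\bar Z(\q,\cdot)$ increases on the uncovered intervals and decreases on the covered intervals (and is clamped at zero), its minimum over a cycle, and hence any zero, occurs at one of the right endpoints $y_{k^*}$. If $\bar Z(\q,y_k)=0$, the claim follows at once from the lower bound together with $N_{k,k}(\q)=0$, i.e.\ with $b^*=0$. Otherwise, walking backwards from $y_k$ with indices read modulo $\ell$, I would let $b^* \in \{1,\ldots,\ell-1\}$ be the smallest index for which $\bar Z(\q,y_{k-b^*})=0$; such an index exists because $y_k$ itself is positive while some $y$-point is zero, and the values $b=1,\ldots,\ell-1$ visit every $y$-point except $y_k$. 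By minimality of $b^*$ we have $\bar Z(\q,y_{k-j})>0$ for all $j\in\{1,\ldots,b^*-1\}$, so the equality case~\eqref{eq:Z_upper_eq} applies and gives $\bar Z(\q,y_k) = \bar Z(\q,y_{k-b^*}) + N_{k-b^*,k}(\q) = N_{k-b^*,k}(\q)$. Hence $\bar Z(\q,y_k)$ equals one of the terms in the maximum, so $\bar Z(\q,y_k) \leq \max_{b} N_{k-b,k}(\q)$, and combining with the lower bound completes the proof.

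The main obstacle is the upper bound, specifically justifying that the recursion~\eqref{eq:recursive_Z} may be ``reset'' at an honest zero of the field rather than at a point where the $(\cdot)^+$ truncation merely clipped a negative argument. This is precisely why I would appeal to Proposition~\ref{prop:steady_state} to guarantee a true zero somewhere in the cycle, and to the monotonicity of $\bar Z$ on the covered and uncovered intervals to pin that zero at a $y$-point. The modular indexing then ensures that, starting from any $k$, one reaches such a zero within at most $\ell-1$ backward steps, keeping $b^*$ in the required range $\{0,\ldots,\ell-1\}$.
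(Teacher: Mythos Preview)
Your proof is correct and follows essentially the same approach as the paper's: locate a $y$-endpoint where the steady-state field vanishes (via Proposition~\ref{prop:steady_state}), take the smallest backward step $b^*$ reaching such a zero, apply the equality case~\eqref{eq:Z_upper_eq} to get $\bar Z(\q,y_k)=N_{k-b^*,k}(\q)$, and use~\eqref{eq:Z_upper} together with non-negativity for the other inequality. The only difference is organizational---you explicitly separate the two inequalities, while the paper first establishes $\bar Z(\q,y_k)=N_{k-b,k}(\q)$ for the specific $b$ and then rules out any larger $N_{k-d,k}(\q)$ by contradiction---but the logical content is the same.
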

\begin{proof}
  Let us fix $k\in\{1,\ldots,\ell\}$.  Given a field stabilizing controller,
  Proposition~\ref{prop:steady_state} tells us that there exists $\theta$
  such that $\bar Z(\q,\theta) = 0$.  It is clear that this must occur
  for some $\theta \in F(\q)$.  Therefore,
  \[
  \bar Z(\q,y_j) = 0 \quad \text{for some $j\in\{1,\ldots,\ell\}$}.
  \]

  Let $b$ be the smallest non-negative integer such that
  $\bar Z(\q,y_{k-b}) = 0$.  By \eqref{eq:Z_upper_eq} we have
  \[
  \bar Z(\q,y_k) = N_{k-b,k}(\q) \geq 0.
  \]
  If $b = 0$, then the previous equation simply states that $Z(\q,y_k)
  = 0$.  Now, if $N_{k-d,k}(\q) \leq N_{k-b,k}(\q)$ for all
  $d\in\{0,\ldots,\ell\}$, then $\bar Z(\q,y_k) =
  \max_{b\in\{0,\ldots,\ell-1\}} N_{k-b,k}(\q)$, and we have completed the
  proof.

  Suppose by way of contradiction that there is
  $d\in\{0,\ldots,\ell-1\}$ for which $N_{k-d,k}(\q) > N_{k-b,k}(\q)$.
  From~\eqref{eq:Z_upper} we have
  \[
  \bar Z(\q,y_k) \geq \bar Z(\q,y_{k-d}) + N_{k-d,k}(\q) \geq N_{k-d,k}(\q), 
  \]
  where the second inequality comes from the fact that $\bar
  Z(\q,y_{k-d}) \geq 0$. However $\bar Z(\q,y_k) = N_{k-b,k}(\q)$,
  implying that $N_{k-d,k}(\q) \leq N_{k-b,k}(\q)$, a contradiction.
\end{proof}

The above lemma gives the value of the field in steady-state at each
end point $y_k$. The field decreases from $x_k$ to $y_k$ (since these
are the $\theta$ values over which the point $\q$ is covered), and
then increases from $y_k$ to $x_{k+1}$.  Therefore, the maximum
steady-state value is attained at an end point $x_k$ for some $k$.
For example, in Figure~\ref{fig:steady_state}, the maximum is attained
at the point $x_1$.  However, the value at $x_k$ can be easily
computed from the value at $y_{k-1}$ using~\eqref{eq:x_to_y}:
\[
\bar Z(\q,x_{k+1}) = \max_{b\in\{0,\ldots,\ell-1\}} N_{k-b,k}(\q) + p(\q)
\int_{y_{k}}^{x_{k+1}}\frac{d\theta}{v(\theta)}.
\]
From this we obtain the following result.
\begin{lemma}[Steady-State Upper Bound]
  \label{lem:steady-state_upper}
  Given a field stabilizing speed controller $v$, the maximum steady-state
  field at $\q\in Q$ (defined in~\eqref{eq:H_cost}) satisfies
  \[
  \mathcal{H}(\q,v) =
  \max_{\substack{k\in\{1,\ldots,\ell\}\\b\in\{0,\ldots,\ell-1\}}} X_{k,b}(\q),
  \]
  where
  \[
  X_{k,b}(\q) = 
    p(\q)\int_{y_{k-b}}^{x_{k+1}}\frac{d\theta}{v(\theta)} -c(\q)
   \sum_{w=0}^{b-1} \int_{x_{k-w}}^{y_{k-w}}\frac{d\theta}{v(\theta)},
  \]
  and $F(\q) = \cup_{k=1}^{\ell} [x_k,y_k]$ with $y_k > x_k >
  y_{k-1}$ for each $k$.
\end{lemma}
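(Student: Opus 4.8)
The plan is to combine the previous lemma (which gives $\bar Z(\q,y_k)$ at each covered-interval endpoint $y_k$) with the observation that the steady-state maximum over all of $[0,1]$ must occur at one of the endpoints $x_k$ where coverage begins. The key structural fact is already recorded in the text preceding the statement: since $\q$ is covered exactly on the intervals $[x_k,y_k]$, the steady-state profile $\bar Z(\q,\theta)$ strictly increases on each uncovered gap $(y_{k-1},x_k)$ at rate $p(\q)/v(\theta)$ and decreases on each covered interval $(x_k,y_k)$. Hence on each cycle the profile attains its local maxima precisely at the left endpoints $x_k$ (equivalently at $x_{k+1}$ indexed off $y_k$), so
\[
\mathcal{H}(\q,v) = \max_{k\in\{1,\ldots,\ell\}} \bar Z(\q,x_{k+1}).
\]

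First I would invoke equation~\eqref{eq:x_to_y}, which propagates the field from the preceding endpoint $y_k$ across the uncovered gap to $x_{k+1}$, giving
\[
\bar Z(\q,x_{k+1}) = \bar Z(\q,y_k) + p(\q)\int_{y_k}^{x_{k+1}}\frac{d\theta}{v(\theta)}.
\]
Next I would substitute the previous lemma's characterization $\bar Z(\q,y_k) = \max_{b\in\{0,\ldots,\ell-1\}} N_{k-b,k}(\q)$ into this expression. Since the added propagation term $p(\q)\int_{y_k}^{x_{k+1}} d\theta/v(\theta)$ does not depend on the maximizing index $b$, it passes inside the maximum, yielding
\[
\bar Z(\q,x_{k+1}) = \max_{b\in\{0,\ldots,\ell-1\}} \left( N_{k-b,k}(\q) + p(\q)\int_{y_k}^{x_{k+1}}\frac{d\theta}{v(\theta)} \right).
\]
Expanding $N_{k-b,k}(\q)$ via its definition~\eqref{eq:N_k-b_k} and merging the two production integrals using $\int_{y_{k-b}}^{y_k} + \int_{y_k}^{x_{k+1}} = \int_{y_{k-b}}^{x_{k+1}}$ recovers exactly the claimed quantity $X_{k,b}(\q)$.

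Finally I would take the outer maximum over $k\in\{1,\ldots,\ell\}$ to pass from the per-endpoint value to $\mathcal{H}(\q,v)$, producing the stated double maximum over $k$ and $b$. The main obstacle I anticipate is purely bookkeeping rather than conceptual: the modular indexing (addition/subtraction mod $\ell$, flagged in the footnote to~\eqref{eq:N_k-b_k}) must be handled carefully so that the telescoping of production integrals and the range of the consumption sum $\sum_{w=0}^{b-1}$ align correctly across the cyclic boundary, and one must confirm that the $b=0$ term corresponds to starting the reduction at $y_k$ itself (using the convention $N_{k,k}(\q):=0$). I would also briefly justify the reduction to endpoint maxima rigorously—arguing that any interior $\theta\in(y_k,x_{k+1})$ gives a value strictly below $\bar Z(\q,x_{k+1})$ and any $\theta$ in a covered interval is dominated by the preceding $x_k$—so that no candidate for the maximum is missed.
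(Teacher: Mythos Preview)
Your proposal is correct and follows essentially the same approach as the paper: the paper also argues that the steady-state profile increases on each uncovered gap and decreases on each covered interval, so the maximum is attained at some $x_{k+1}$, then propagates from $y_k$ to $x_{k+1}$ via~\eqref{eq:x_to_y}, substitutes the previous lemma's expression for $\bar Z(\q,y_k)$, and reads off $X_{k,b}(\q)$. Your added remarks about the modular indexing and the $b=0$ case are accurate housekeeping that the paper leaves implicit.
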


The above lemma provides a closed form expression (albeit quite
complex) for the largest steady-state value of the field.  Thus,
consider speed controllers of the form
\[
v^{-1}(\theta) = \sum_{j=1}^n \alpha_j \beta_j(\theta),
\]
where $\beta_1,\ldots,\beta_n$ are basis functions (e.g., the
rectangular basis). For a finite field $Q = \{\q_1,\ldots,\q_m\}$, the
terms $N_{k-b,k}(\q_i)$ can be written as
\[
X_{k,b}(\q_i) = \sum_{j=1}^n \alpha_j X_{k,b}(\q_i,\beta_j),
\]
where
\begin{multline}
\label{eq:X_kb}
  X_{k,b}(\q_i,\beta_j) :=
  \\p(\q)\int_{y_{k-b}}^{y_{k}}\beta_j(\theta)d\theta -c(\q)
  \sum_{w=0}^{b-1}
  \int_{x_{k-w}}^{y_{k-w}}\beta_j(\theta)d\theta.
\end{multline}
With these definitions we can define a linear program for minimizing
the maximum of the steady-state field.  We will write $\ell(\q)$ to
denote the number of disjoint intervals on the curve $\gamma$ over
which the point $\q$ is covered, as defined in~\eqref{eq:F_q}.

\begin{theorem}[Minimizing the Steady-State Field]
    \label{thm:lin_prog_opt}
    Given a feasible persistent task, the solution to the following linear
    program yields a speed controller $v$ of the form~\eqref{eq:g_form} that
    minimizes the maximum value of the steady-state
    field~$\mathcal{H}(v)$.
  \begin{align*}
    \text{minimize} \;\;& B & \\
    \text{subject to} \;\;& \sum_{j=1}^n \alpha_j
    X_{k,b}(\q_i,\beta_j) \leq B
    &\forall\; i\in\{1,\ldots,m\},\\[-.4em]
    &&k\in\{1,\ldots,\ell(\q_i)\},\\
    & &b\in\{0,\ldots,\ell(\q_i)-1\}\\
    &\sum_{j=1}^n \alpha_j K(\q_i,\beta_j) > 0
    &\forall\; i\in\{1,\ldots,m\}\\
    & \frac{1}{\vmax(j)} \leq \alpha_j \leq \frac{1}{\vmin(j)},
    &\forall\; j\in\{1,\ldots,n\}.
  \end{align*}
  The optimization variables are $\alpha_j$ and $B$ and the quantities
  $X_{k,b}(\q_i,\beta_j)$ and $K(\q_i,\beta_j)$ are defined
  in~\eqref{eq:X_kb} and~\eqref{eq:C_def}.
\end{theorem}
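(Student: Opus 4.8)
The plan is to combine the closed-form expression for the maximum steady-state field from Lemma~\ref{lem:steady-state_upper} with the standard epigraph reformulation that turns the minimization of a pointwise maximum of linear functions into a linear program. The starting observation is that, once we restrict to reciprocal speed controllers of the form~\eqref{eq:g_form}, namely $v^{-1}(\theta) = \sum_{j=1}^n \alpha_j \beta_j(\theta)$, every quantity appearing in Lemma~\ref{lem:steady-state_upper} is a linear function of the parameter vector $(\alpha_1,\ldots,\alpha_n)$. Indeed, each $X_{k,b}(\q_i)$ is built from integrals of $1/v(\theta)$ over fixed subintervals of $[0,1]$, and since $1/v = \sum_j \alpha_j \beta_j$, linearity of the integral gives $X_{k,b}(\q_i) = \sum_{j=1}^n \alpha_j X_{k,b}(\q_i,\beta_j)$ with the coefficients defined in~\eqref{eq:X_kb}. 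Crucially, the decomposition $F(\q_i) = \cup_{k=1}^{\ell(\q_i)}[x_k,y_k]$ from~\eqref{eq:F_q}, and hence the index ranges for $k$ and $b$, are determined by the footprint geometry alone and do not depend on the speed, so the collection of these linear functionals is finite and fixed in advance.

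First I would assemble the objective. By the definition~\eqref{eq:H_cost}, $\mathcal{H}(v) = \max_{\q_i\in Q}\mathcal{H}(\q_i,v)$, and Lemma~\ref{lem:steady-state_upper} gives $\mathcal{H}(\q_i,v) = \max_{k,b} X_{k,b}(\q_i)$ for each $i$. Combining these, $\mathcal{H}(v) = \max_{i,k,b} \sum_{j=1}^n \alpha_j X_{k,b}(\q_i,\beta_j)$, a pointwise maximum of finitely many linear functions of $\alpha$. Minimizing such a maximum is accomplished by the epigraph trick: introduce an auxiliary variable $B$, minimize $B$, and impose $\sum_j \alpha_j X_{k,b}(\q_i,\beta_j) \le B$ for every admissible triple $(i,k,b)$. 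At any optimal solution the binding constraints force $B = \max_{i,k,b}\sum_j \alpha_j X_{k,b}(\q_i,\beta_j) = \mathcal{H}(v)$, so minimizing $B$ over the feasible set is equivalent to minimizing $\mathcal{H}(v)$ over the admissible controllers.

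Next I would justify the remaining two blocks of constraints. Lemma~\ref{lem:steady-state_upper} (and the underlying Proposition~\ref{prop:steady_state}) applies only to field stabilizing controllers, so the steady-state field, and with it the closed-form for $\mathcal{H}(\q_i,v)$, is guaranteed to exist only when $v$ is stabilizing. By Theorem~\ref{thm:lin_prog}, equivalently the condition of Lemma~\ref{lem:stab_cond}, stability of a controller of the form~\eqref{eq:g_form} is exactly $\sum_j \alpha_j K(\q_i,\beta_j) > 0$ for all $i$, with $K$ defined in~\eqref{eq:C_def}; and admissibility as a speed profile is the box constraint $1/\vmax(j)\le\alpha_j\le 1/\vmin(j)$, derived earlier for the rectangular basis. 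Hence the feasible region of the linear program is precisely the set of stabilizing, speed-feasible controllers of the form~\eqref{eq:g_form}, and over this set the program minimizes $\mathcal{H}(v)$.

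The genuinely substantive content has already been carried out in Lemma~\ref{lem:steady-state_upper}, which reduces the a priori complicated $\limsup$-defined cost to a finite maximum of linear functionals; given that reduction, the theorem is a routine application of the epigraph reformulation. The one technical wrinkle I would flag is the strict inequality $\sum_j \alpha_j K(\q_i,\beta_j) > 0$, which makes the feasible region open and could in principle prevent the infimum of $B$ from being attained. This is the same issue already present in Theorem~\ref{thm:lin_prog}, and I would handle it identically, either by restricting to the stabilizing region where the formula of Lemma~\ref{lem:steady-state_upper} is valid, or by tightening the strict inequalities to $\ge \delta$ for an arbitrarily small $\delta>0$, so that the argument delivers a bona fide minimizing controller rather than merely an infimizing sequence.
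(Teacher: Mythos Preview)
Your proposal is correct and follows essentially the same route as the paper. The paper does not give a separate proof block for this theorem; it builds up Lemma~\ref{lem:steady-state_upper}, observes the linearity $X_{k,b}(\q_i)=\sum_j\alpha_j X_{k,b}(\q_i,\beta_j)$, and then states the theorem as the immediate epigraph reformulation together with the stability and speed constraints from Theorem~\ref{thm:lin_prog}, exactly as you outline.
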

From the above theorem, we can minimize the maximum value of the field
using a linear program.  This optimization has $n+1$ variables ($n$
basis function coefficients, and one upper bound $B$).  The number of
constraints is $m \sum_{i=1}^m\ell(\q_i)^2 + m + 2n$.  In practice,
$\ell(\q_i)$ is small compared to $n$ and $m$, and is independent of
$n$ and $m$.  Thus, for most instances, the linear program has
$O(2n+m)$ constraints.

%%%%%%%%%%%%%%%%%%%%%%%%%%%%%%%%%%
\section{Multi-Robot Speed Controller}
\label{sec:multi_robot}
%%%%%%%%%%%%%%%%%%%%%%%%%%%%%%%%%%
In this section we turn to the multi-robot case.  We find that a field
stabilizing controller can again be formulated as the solution of a
linear program.  Surprisingly, the resulting multi-robot controller
does not rely on direct communication between the robots.  We also
show that the optimal controller (the one that minimizes the steady
state field) for multiple robots cannot be formulated as an LP as in
the single robot case.  Finding the optimal multi-robot controller is
a subject of ongoing work.

The multiple robots travel on fixed paths, but those paths my be
different (or they may be the same) and they may intersect arbitrarily
with one another.  The robots may have different consumption rates,
footprints, and speed limits.  We do not explicitly deal with
collisions in this section, though the controller we propose can be
augmented with existing collision avoidance strategies.  This is the
subject of on going work.

We must first modify our notation to accommodate multiple robots.
Consider $N$ robots with closed paths $\gamma_r:[0,1] \to \real^2$, $r
\in\{1, \ldots,N\}$ where $\gamma_r$ and $\gamma_{r'}$ may intersect
with each other arbitrarily (e.g., they may be the same path, share
some segments, or be disjoint).  We again assume that the
parametrization of each curve is an arc-length parametrization,
normalized to unity.  Robot $r$, which traverses path $\gamma_r$ at
position $\theta_r(t)$, has a consumption rate $c_r(\q)$ over a
footprint $\BB_r\big(\theta_r(t)\big)$, and has a speed controller
$v_r(\theta_r)$ with maximum and minimum speed constraints
$v_{r,\min}(\theta_r)$ and $v_{r,\max}(\theta_r)$, respectively.  Let
$\RR_r = (\BB_r, v_{r,\min}, v_{r,\max})$ be the tuple containing the
parameters for robot $r$ and redefine $\RR := (\RR_1,\ldots,\RR_N)$ to
be the tuple containing all the robots' tuples of parameters.  Also,
the set of points $\theta_r$ from which $\q$ is in robot $r$'s
footprint is denoted $F_r(\q)$.  Furthermore, we let $\gamma :=
(\gamma_1, \ldots , \gamma_N)$ and $c := (c_1, \ldots , c_N)$ be the
tuple of all robot's paths and all robots' consumption functions,
respectively.  The persistent task with $N$ robots is now written
$(\RR,\gamma,Q,p,c)$ as before, and we seek speed controllers
$v_r(\theta_r)$ to keep $Z(\q,t)$ bounded everywhere, as in
Definition~\ref{def:stability}.

We make the assumption that when multiple robots' footprints are over
the same point $\q$, their consumption rates are additive.
Specifically, let $\NN_\q(t)$ be the set of robots whose footprints
are over the point $\q$ at time $t$,
\[
  \NN_\q(t) := \{r \mid \q \in \BB_r\big(\theta_r(t)\big)\}.
\]
Then the rate of change of the function $Z(\q,t)$ is given by
\begin{equation}
\label{eq:multirobot_Z_diffeq}
\dot Z(\q,t) = 
\begin{cases}
  \displaystyle p(\q)-\sum_{r\in\NN_\q(t)}c_r(\q), &  \text{if $Z(\q,t) > 0$}, \\
  \displaystyle \Big(p(\q)-\sum_{r\in\NN_{\q}(t)}c_r(\q)\Big)^+, & \text{if $Z(\q,t) = 0$}.
\end{cases}
\end{equation} 
We can reformulate a stability condition analogous to Lemma
\ref{lem:stab_cond} to suit the multi-robot setting, but we must first
establish the notion of a common period for the speed controllers of all
the robots.  Let $T_r = \int_0^1v_r^{-1}(\theta)d\theta$ be the period
of robot $r$, and let $\tau_r(\q) =
\int_{F_r(\q)}v_r^{-1}(\theta)d\theta$ be the time in that period that
$\q$ is in robot $r$'s footprint.  The existence of a common period
rests on the following technical assumption.
\begin{assumption}[Rational Periods]
  \label{ass:rational_periods}
  We assume that the periods $T_r$ are rational numbers, so that there
  exist integers $\num_r$ and $\den_r$ such that $T_r = \num_r/\den_r$.  
\end{assumption}
An immediate consequence of Assumption \ref{ass:rational_periods} is
that there exists a common period $T$ such that $T/T_r \in \nat$ for
all $r$.  That is, each controller executes a whole number of
cycles over the time interval $T$.  Specifically, letting $T =
\Pi_{r=1}^N\num_r$, we have $T/T_r = \den_r\Pi_{r'=1,r'\not =r}^N\num_{r'}$.
Now, we can state the necessary and sufficient conditions for a
field stabilizing multi-robot controller.

\begin{lemma}[Multi-Robot Stability Condition]
  \label{lem:multirobot_stab_cond}
  Given a multi-robot persistent task, the set of controllers
  $\theta_r \mapsto v_r(\theta_r)$, $r\in\{1,\ldots,N\}$ is
  field stabilizing if and only if
  \begin{equation}
   \label{eq:multirobot_stability}
  \sum_{r = 1}^N\frac{\tau_r(\q)}{T_r}c_r(\q) > p(\q)
\end{equation}
for every $\q\in Q$, where $T_r = \int_0^1v_r^{-1}(\theta)d\theta$ and
$\tau_r(\q) = \int_{F_r(\q)}v_r^{-1}(\theta)d\theta$.
\end{lemma}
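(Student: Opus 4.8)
The plan is to mirror the proof of the single-robot stability condition in Lemma~\ref{lem:stab_cond}, but to measure field production and consumption over the common period $T = \prod_{r=1}^N \num_r$ guaranteed by Assumption~\ref{ass:rational_periods}, rather than over a single robot's period. The crucial structural fact I will exploit is that, because $T/T_r \in \nat$ for every $r$, each robot completes a whole number of cycles over any interval of length $T$, so its total coverage time of a point is phase-independent.

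First I would fix a point $\q\in Q$ and introduce, for each robot $r$, the indicator $\mathbf{I}_r(\theta_r,\q)$ that equals $1$ when $\q\in\BB_r(\theta_r)$ and $0$ otherwise. From the multi-robot dynamics~\eqref{eq:multirobot_Z_diffeq} I obtain the differential inequality
\[
\dot Z(\q,t) \geq p(\q) - \sum_{r=1}^N c_r(\q)\,\mathbf{I}_r\big(\theta_r(t),\q\big),
\]
with equality whenever $Z(\q,t)>0$. Integrating over $[t,t+T]$ gives
\[
Z(\q,t+T) - Z(\q,t) \geq p(\q)T - \sum_{r=1}^N c_r(\q)\int_t^{t+T}\mathbf{I}_r\big(\theta_r(\tau),\q\big)\,d\tau.
\]
The key step is to evaluate the coverage integral: since robot $r$ is periodic with period $T_r$ and $T$ is an integer multiple of $T_r$, over $[t,t+T]$ robot $r$ executes exactly $T/T_r$ full cycles, so its total coverage time of $\q$ is precisely $(T/T_r)\,\tau_r(\q)$, independent of the phase $\theta_r(t)$. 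Substituting yields
\[
Z(\q,t+T) - Z(\q,t) \geq p(\q)T - \sum_{r=1}^N c_r(\q)\,\frac{T}{T_r}\,\tau_r(\q),
\]
with equality when $Z(\q,\cdot)$ stays positive over the interval. Dividing the bracketed right-hand term by $T$ recovers exactly the left-hand side of~\eqref{eq:multirobot_stability}.

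Necessity then follows as in Lemma~\ref{lem:stab_cond}: if~\eqref{eq:multirobot_stability} fails for some $\q$, the right-hand side above is nonnegative, so $Z(\q,t+T)\geq Z(\q,t)$ for all $t$, and the field cannot be bounded independently of $Z(\q,0)$. For sufficiency, I would use finiteness of $Q$ to pick a single $\epsilon>0$ with $p(\q)T - \sum_r c_r(\q)(T/T_r)\tau_r(\q) \leq -\epsilon$ for every $\q$. Since the instantaneous decrease rate is at most $\sum_r c_r(\q) - p(\q)$, any field value exceeding $\big(\sum_r c_r(\q) - p(\q)\big)T$ remains strictly positive across a full period and therefore drops by exactly $\epsilon$; iterating drives the field below this threshold in finite time, and a one-period production bound then yields an explicit $Z_{\max}$, establishing stability in the sense of Definition~\ref{def:stability}.

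I expect the main obstacle to be justifying the coverage-integral identity rigorously, in particular making precise that completing an integer number of cycles forces the total coverage time to equal $(T/T_r)\,\tau_r(\q)$ regardless of each robot's starting position. This is exactly where Assumption~\ref{ass:rational_periods} and the construction of the common period $T$ are indispensable: without integer cycle counts, the per-robot coverage over $[t,t+T]$ would depend on the phases $\theta_r(t)$, and the clean time-averaged condition~\eqref{eq:multirobot_stability} would not emerge.
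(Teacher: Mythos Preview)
Your proposal is correct and follows essentially the same approach as the paper's proof: both integrate the differential inequality over the common period $T$, use the fact that each robot completes $T/T_r$ whole cycles so that its coverage time of $\q$ over $[t,t+T]$ equals $(T/T_r)\tau_r(\q)$ independent of phase, and then argue necessity and sufficiency exactly as in Lemma~\ref{lem:stab_cond} with the threshold $T\big(\sum_r c_r(\q)-p(\q)\big)$ and $Z_{\max}=\max_{\q}T\sum_r c_r(\q)$. The only cosmetic difference is that the paper writes out the coverage integral explicitly as a sum of $T/T_r$ single-cycle integrals, whereas you invoke the identity verbally.
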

The lemma states an intuitive extension of Lemma \ref{lem:stab_cond},
which is that the total consumption per cycle must exceed the total
production per cycle at each point $\q\in Q$.
\begin{proof}
  The proof closely follows the proof of Lemma \ref{lem:stab_cond}.
  Consider the change of $Z(\q,t)$ at a point $\q$ over any time
  interval $T$, where $T$ is a common period of all the rational
  periods $T_r$, so that $T/T_r \in \nat$ for all $r$.  By integrating
  (\ref{eq:multirobot_Z_diffeq}) we have that
\begin{align*}
  Z(\q,t+T)-Z(\q,t)  \ge Tp(\q) - \sum_{r=1}^N\int_t^T \mathbf{I}_r(\tau,\q)c_r(\q)d\tau\\
  = Tp(\q) -
  \sum_{r=\NN_{q}}\sum_{k=1}^{T/T_r}\int_{t+(k-1)T_r}^{t+kT_r}\mathbf{I}_r(\tau,\q)c_r(\q)d\tau\\
  = T\Big(p(\q) - \sum_{r=1}^Nc_r(\q)\frac{\tau_r(\q)}{T_r}\Big),
\end{align*}
where $\mathbf{I}_r(t,\q)$ takes the value $1$ when $\q$ is in the footprint of
robot $r$ and $0$ otherwise.  To simplify notation, define $C(\q) :=
\sum_{r=1}^Nc_r(\q)$ and $\bar{C}(\q) :=
\sum_{r=1}^Nc_r(\q)\tau_r(\q)/T_r$.

First we prove necessity.  In order to reach a contradiction, assume
that the condition in Lemma \ref{lem:multirobot_stab_cond} is false,
but the persistent task is stable.  Then $T\big(p(\q) - \bar{C}(\q)\big) \ge 0$
for some $\q$, implying $Z(\q,t+T) \ge Z(\q,t)$ for some $\q$ and for
all $t$, which contradicts stability.  In particular, for an initial
condition $Z(\q,0) > Z_{\max}$, $Z(\q,Tk)>Z_{\max}$ for all $k =
1,\ldots$.

Now we prove sufficiency.  If the condition is satisfied, there exists
some $\epsilon > 0$ such that $T\big(p(\q)-\bar{C}(\q)\big)=
-\epsilon$.  Suppose that at some time $t$ and some point $\q$,
$Z(\q,t) > T\big(C(\q) - p(\q)\big)$ (if no such time and point
exists, the persistent task is stable).  Then for all times in the interval
$\tau \in [t,t+T]$, $Z(\q,t+\tau)>0$, and by
(\ref{eq:multirobot_Z_diffeq}) we have that $Z(\q,t+T) - Z(\q,t) =
T\big(p(\q) - \bar{C}(\q)\big) = -\epsilon$.  Therefore, after
finitely many periods $T$, $Z(\q,t)$ will become less than
$T\big(C(\q)-p(\q)\big)$.  Now for a time $t$ and a point $\q$ such
that $Z(\q,t) < T\big(C(\q)-p(\q)\big)$, for all times $\tau \in
[t,t+T]$ we have that $Z(\q,t+\tau) \le Z(\q,t) + p(\q)T < TC(\q)$.
Therefore, once $Z(\q,t)$ falls below $T\big(C(\q) - p(\q)\big)$
(which will occur in finite time), it will never again exceed
$TC(\q)$.  Therefore the persistent task is stable with $Z_{\max} = \max_{\q
  \in Q}TC(\q)$.
\end{proof}

\begin{remark}[Justification of Rational Periods]
  \label{rem:justification_of_rational_periods}
  Assumption \ref{ass:rational_periods} is required only for the sake
  of simplifying the exposition.  The rational numbers are a dense
  subset of the real numbers, so for any $\epsilon>0$ we can find
  $\num_r$ and $\den_r$ such that $\num_r/\den_r \le T_r \le
  \num_r/\den_r +\epsilon$.  One could carry the $\epsilon$ through
  the analysis to prove our results in general.  
  \oprocend
\end{remark}
%%%%%%%%%%%%%%%%%%%%%%%%%%%%%%%%
\subsection{Synthesis of Field Stabilizing Multi-Robot Controllers} 
%%%%%%%%%%%%%%%%%%%%%%%%%%%%%%%%
A field stabilizing controller for the multi-robot case can again be
formulated as the solution of a linear program, provided that we
parametrize the controller using a finite number of parameters.  Our
parametrization will be somewhat different than for the single robot
case, however.  Because there are multiple robots, each with its own
period, we must normalize the speed controllers by their periods.  We
then represent the periods (actually, the inverse of the periods) as
separate parameters to be optimized.  In this way we maintain the
independent periods of the robots while still writing the optimization
over multiple controllers as a single set of linear constraints.

Define a normalized speed controller $\bar{v}_r(\theta_r) :=
T_rv_r(\theta_r)$, and the associated normalized coverage time
$\bar{\tau}_r(\q_i) := \int_{F_r(\q_i)}\bar{v}_r^{-1}(\theta) d\theta
= \tau_r(\q_i)/T_r$. We parametrize $\bar{v}_i^{-1}$ as
\[
\bar{v}_r^{-1}(\theta_r) =
\sum_{j=1}^{n_r}\alpha_{rj}\beta_{rj}(\theta_r),
\]
where $n_r$ is the number of basis functions for the $r$th robot, and
$\alpha_{rj}\in\real$ and $\beta_{rj}:[0,1] \to \real_{\ge 0}$ are
robot $r$'s $j$th parameter and basis function, respectively.  It is
useful to allow robots to have a different number of basis functions,
since they may have paths of different lengths with different speed
limits.  Assuming the basis functions are normalized with $\int_0^1
\beta_{rj}(\theta)d\theta = 1$ for all $r$ and $j$, then we can
enforce that $\bar v_r$ is a normalized speed controller by requiring
\[
\sum_{j=1}^{n_r} \alpha_{rj} = 1 \quad \forall\; r\in\{1,\ldots,N\}.
\] 

As before, we could use any number of different basis functions, but
here we specifically consider rectangular functions of the form
(\ref{eq:rect_basis}).  We also define the frequency for robot $r$ as
$f_r:=1/T_r$, and allow it to be a free parameter, so that
\begin{equation}
  \label{eq:multirobot_speed_param}
  v_r(\theta_r) =f_r\bar{v}_r(\theta_r) =
  \frac{f_r}{\sum_{j=1}^{n_r}\alpha_{rj}\beta_{rj}(\theta_r)}.
\end{equation}
From (\ref{eq:multirobot_stability}), for the set of controllers to be
field stabilizing, we require
\[
\sum_{r=1}^N c_r(\q_i) \sum_{j=1}^{n_r}\alpha_{rj}
\int_{F_r(\q_i)}\beta_{rj}(\theta)d\theta > p(\q_i)
\]
for all $r\in\{1,\ldots,N\}$ and $\q_i \in Q$.  Thus, defining
\begin{equation}
\label{eq:K_r_def}
K_r(\q_i,\beta_j) := c_r(\q_i) \int_{F_r(\q_i)}\beta_{rj}(\theta)d\theta,
\end{equation}
the stability constraints become
\[
\sum_{r=1}^N \sum_{j=1}^{n_r}\alpha_{rj}K_r(\q_i,\beta_j) > p(\q_i).
\]

To satisfy the speed constraints, we also require that
$v_{r,\max}^{-1}(\theta_r) \le v_r^{-1}(\theta_r) \le v_{r,\min}^{-1}(\theta_r)$, which
from (\ref{eq:multirobot_speed_param}) leads to
\[
   \frac{f_r}{v_{r,\max}(\theta_r)} \leq
   \sum_{j=1}^n\alpha_{rj}\beta_{rj}(\theta_r) \le   \frac{f_r}{v_{r,\min}(\theta_r)},
\]
for all $\theta_r \in [0,1]$.  For the rectangular basis functions in
(\ref{eq:rect_basis}), this specializes to $f_r/v_{r,\min}(j) \leq
\alpha_{rj} \leq f_r/v_{r,\min}(j)$ for all $r$ and $j$, where
\[
  v_{r,\max}(j) := \inf_{\theta\in [(j-1)/n_r,j/n_r)}v_{r,\max}(\theta_r)
\] and 
\[
   v_{r,\min}(j) := \sup_{\theta\in [(j-1)/n_r,j/n_r)}v_{r,\min}(\theta_r).
\]
This gives a linear set of constraints for stability, which
allows us to state the following theorem.
\begin{theorem}[Field Stabilizing Multi-Robot Controller]
  \label{thm:stab_multi}
  A persistent task is stabilizable by a set of multi-robot speed
  controllers $v_r(\theta_r)$, $r\in \{1,\ldots,N\}$, of the form
  (\ref{eq:multirobot_speed_param}) if and only if the following
  linear program is feasible:
\begin{align*}
  \text{minimize} \;\; & 0 \\
  \text{subject to} \;\; & \sum_{r=1}^N\sum_{j=1}^{n_r}
  \alpha_{rj}K_r(\q_i,\beta_j) > p(\q_i), \\
  &\qquad\qquad\qquad\qquad\qquad\qquad\quad \forall\;
  i\in\{1,\ldots,m\} \\
  &\sum_{j=1}^{n_r} \alpha_{rj} = 1 \qquad \qquad \qquad \qquad \forall\;
    r\in\{1,\ldots,N\} \\
  &f_r > 0 \qquad\qquad\qquad\qquad\qquad \; \forall\;
  r\in\{1,\ldots,N\} \\
 &\frac{f_r}{v_{r,\min}(j)} \leq \alpha_{rj} \leq \frac{f_r}{v_{r,\min}(j)}
\quad \forall \; j\in\{1,\ldots,n_r\}, \\
& \qquad\qquad\qquad\qquad\qquad\qquad\quad \;\;r\in\{1,\ldots,N\}, 
  \end{align*}
  where each $\alpha_{rj}$ and $f_r$ is an optimization variable, and
  $K_r(\q_i,\beta_j)$ is defined in~\eqref{eq:K_r_def}.
\end{theorem}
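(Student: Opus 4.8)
The plan is to show that controllers of the form~\eqref{eq:multirobot_speed_param} are in exact correspondence with the LP variables, and that under this correspondence Lemma~\ref{lem:multirobot_stab_cond} becomes precisely the first family of LP constraints while the physical speed limits become the box constraints. Since the theorem is an ``if and only if'' between the existence of a field stabilizing controller in this family and LP feasibility, it suffices to verify that (a) every feasible LP point yields a controller of the form~\eqref{eq:multirobot_speed_param} that respects the speed limits and satisfies~\eqref{eq:multirobot_stability}, and (b) every such field stabilizing controller arises from a feasible LP point.

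First I would establish the consistency of the parametrization. Given parameters $\alpha_{rj}$ with $\sum_{j=1}^{n_r}\alpha_{rj}=1$ and a frequency $f_r>0$, define $v_r$ by~\eqref{eq:multirobot_speed_param}. Computing the period directly and using the basis normalization $\int_0^1\beta_{rj}=1$ gives $T_r=\int_0^1 v_r^{-1}(\theta)\,d\theta = f_r^{-1}\sum_{j=1}^{n_r}\alpha_{rj}=f_r^{-1}$, so the free variable $f_r$ genuinely equals the frequency $1/T_r$ of the resulting controller --- this is exactly the role of the constraint $\sum_j\alpha_{rj}=1$. Consequently the normalized coverage time is $\tau_r(\q_i)/T_r = f_r\int_{F_r(\q_i)}v_r^{-1}(\theta)\,d\theta = \sum_{j=1}^{n_r}\alpha_{rj}\int_{F_r(\q_i)}\beta_{rj}(\theta)\,d\theta$.

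Substituting this identity into the stability condition~\eqref{eq:multirobot_stability} and recalling the definition~\eqref{eq:K_r_def} of $K_r(\q_i,\beta_j)$ turns~\eqref{eq:multirobot_stability} into $\sum_{r=1}^N\sum_{j=1}^{n_r}\alpha_{rj}K_r(\q_i,\beta_j)>p(\q_i)$ for every $\q_i\in Q$, which is the first constraint block. For the speed limits I would start from $v_{r,\min}(\theta_r)\le v_r(\theta_r)\le v_{r,\max}(\theta_r)$, rewrite it in reciprocal form, substitute $v_r^{-1}=f_r^{-1}\sum_j\alpha_{rj}\beta_{rj}$, and multiply through by $f_r>0$ to obtain $f_r/v_{r,\max}(\theta_r)\le\sum_j\alpha_{rj}\beta_{rj}(\theta_r)\le f_r/v_{r,\min}(\theta_r)$; for the rectangular basis~\eqref{eq:rect_basis} only one basis function is active on each subinterval, so these reduce to the stated box constraints on $\alpha_{rj}$. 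Invoking Lemma~\ref{lem:multirobot_stab_cond} then closes both directions: a feasible point produces a speed-admissible controller satisfying~\eqref{eq:multirobot_stability}, hence field stabilizing, while a field stabilizing controller of the form~\eqref{eq:multirobot_speed_param} has parameters meeting all constraints.

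I expect the main subtlety to be the decoupling of the ``shape'' variables $\alpha_{rj}$ from the ``scale'' variables $f_r$. A naive parametrization $v_r^{-1}=\sum_j\gamma_{rj}\beta_{rj}$, as in the single-robot case, fails here: condition~\eqref{eq:multirobot_stability} involves the ratios $\tau_r(\q)/T_r$ with $T_r=\sum_j\gamma_{rj}$ differing from robot to robot, so it is a sum of distinct ratios of linear forms and cannot be cleared to a single linear inequality. Normalizing each robot's shape by $\sum_j\alpha_{rj}=1$ forces every denominator period to equal $1$, linearizing the stability condition, while the separate frequencies $f_r$ carry exactly the scale information needed to meet the per-robot speed limits. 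Verifying that this split preserves both directions of the equivalence --- in particular that $f_r$ can always be chosen to satisfy the box constraints once the shape is fixed --- is the one place where care is needed.
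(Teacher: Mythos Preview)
Your proposal is correct and follows essentially the same route as the paper: the paper does not give a separate proof block for this theorem but instead derives it in the text immediately preceding the statement, exactly as you outline --- introduce the normalized controller $\bar v_r^{-1}=\sum_j\alpha_{rj}\beta_{rj}$, use $\sum_j\alpha_{rj}=1$ together with the basis normalization to force $T_r=1/f_r$, substitute into the stability condition of Lemma~\ref{lem:multirobot_stab_cond} to obtain the first constraint block, and specialize the speed bounds to the rectangular basis to obtain the box constraints. Your closing remark about why the shape/scale split is needed (the ratios $\tau_r/T_r$ would otherwise make the constraint nonlinear) is precisely the motivation the paper gives for this parametrization.
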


The above linear program has $\sum_{r=1}^N n_r + N$ variables (one for
each basis function coefficient $\alpha_{rj}$, and one for each
frequency $f_r$), and $m + \sum_{r=1}^N 2(n_r+1)$ constraints.  Thus, if we
use $n$ basis functions for each of the $N$ robot speed controllers, then
the number of variables is $(n+1) N$ and the number of constraints is $m+
2N(n+1)$.  Therefore, the size of the linear program grows linearly with
the number of robots.

\begin{remark}[Maximizing Stability Margin]
\label{rmk:MinCycleMultiController}
As summarized in Corollary~\ref{rmk:MinCycleController}, rather than
using the trivial cost function of $0$ in the LP in Theorem
\ref{thm:stab_multi}, one may wish to optimize for a meaningful
criterion.  For example, the controller that gives the minimum number
of common periods to steady-state can be obtained by maximizing $B$
subject to $\sum_{r=1}^N\sum_{j=1}^{n_r} \alpha_{rj}K_r(\q_i,\beta_j)
- p(\q_i) \ge B$ $\forall\; i\in\{1,\ldots,m\}$, in addition to the
other constraints of Theorem \ref{thm:stab_multi}. \oprocend
\end{remark}

\begin{remark}[Minimizing the Steady State Field]
\label{rmk:OptimalSpeedController}
The reader will note that we do not find the speed controller that
minimizes the steady state field for the multi-robot case, as was done
for the single robot case.  The reason is that the quantities called
$N_{k-b,k}$ in the single robot case would depend on the relative
positions of the robots in the multiple robot case.  These quantities
would have to be enumerated for all possible relative positions
between the multiple robots and there are infinite such relative
positions.  Thus the problem cannot be posed as an LP in the same way
as the single robot case.  We are currently investigating alternative
methods for finding the optimal multi-robot controller, such as convex
optimization. \oprocend
\end{remark}

%%%%%%%%%%%%%%%%%%%%%
\section{Simulations}
\label{sec:simulations}
%%%%%%%%%%%%%%%%%

In this section we present simulation results for the single robot and
multi-robot controllers.  The purpose of this section is threefold:
(i) to discuss details needed to implement the speed control
optimizations, (ii) to demonstrate how controllers can be computed for
both discrete and continuous fields, and (iii) to explore robustness
to modeling errors, parameter uncertainty, robot tracking errors, and
stochastic field evolution.

The optimization framework was implemented in
MATLAB$^{\tiny\textregistered}$, and the linear programs were solved
using the freely available SeDuMi (Self-Dual-Minimization) toolbox.
To give the readers some feel for the efficiency of the approach, we
report the time to solve each optimization on a laptop computer with a
$2.66\unit{GHz}$ dual core processor and $4\unit{GB}$ of RAM.  The
simulations are performed by discretizing time, and thus converting
the field evolution into a discrete-time evolution.  To perform the
optimization, we need a routine for computing the set $F(\q)$ for each
field point $\q\in Q$.  This is done as follows.  We initialize a set
$F(\q)$ for each point $\q$, and discretize the robot path into a
finite set $\{\theta_1,\ldots,\theta_n\}$.  For the rectangular basis,
this disretization is naturally given by the set of basis functions.
We iteratively place the robot footprint at each point $\theta_i$,
oriented with the desired robot heading at that point on the curve,
and then add $\theta_i$ to each set $F(\q)$ for which $\q$ is covered
by the footprint.  By approximating the robot footprint with a
polygon, we can determine if a point lies in the footprint efficiently
(this is a standard problem in computational geometry).

Figure~\ref{fig:vehicle_2D_sim} shows a simulation for one ground
robot performing a persistent monitoring task of $10$ points (i.e.,
$|Q| =10$).  The environment is a $70\unit{m}$ by $70\unit{m}$ square,
and the closed path has a length of $300\unit{m}$.  For all points we
set the consumption rate $c(\q) = 1$ (in units
of$\unit{(field~height)/s}$).  For each yellow point $\q$ we set
$p(\q) = 0.15$, and for the single red point we set $p(\q) = 0.35$.
The robot has a circular footprint with a radius of $12\unit{m}$, and
for all $\theta$ the robot has a minimum speed of $\vmin
=0.2\unit{m/s}$ and a maximum speed of $\vmax = 2\unit{m/s}$.  If the
robot were to simply move at constant speed along the path, then $8$
of the $10$ field points would be unstable.  The speed controller
resulting from the optimization in Section~\ref{sec:optimal_speed} is
shown in Figure~\ref{fig:speed_profile_2}.  A total of $150$
rectangular basis functions were used. The optimization was solved in
less than $1/10$ of a second.  Using the speed controller, the cycle
time was $T = 420\unit{s}$.

\begin{figure}
  \centering
  \includegraphics[width=0.325\linewidth]{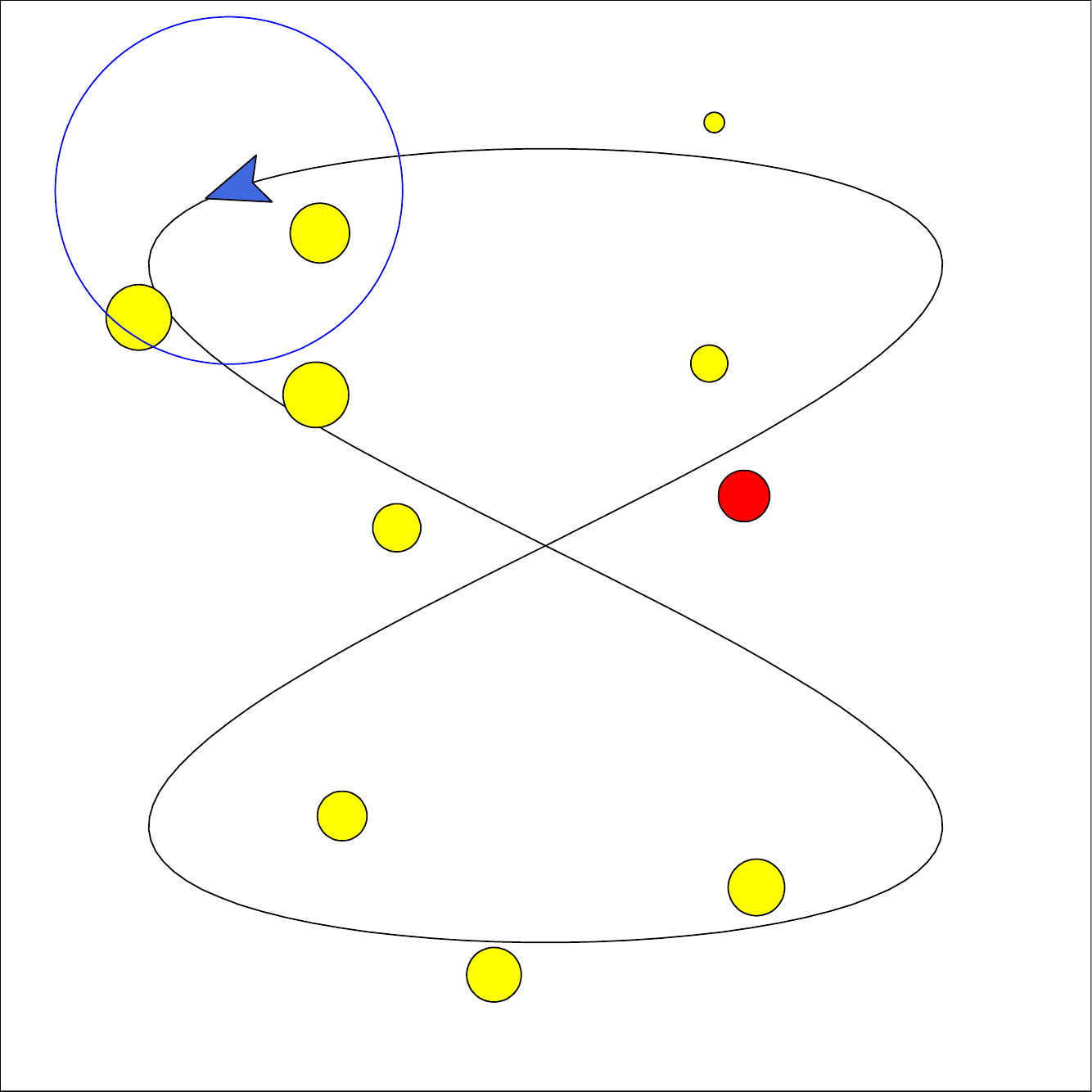} %
  \hfill
  \includegraphics[width=0.325\linewidth]{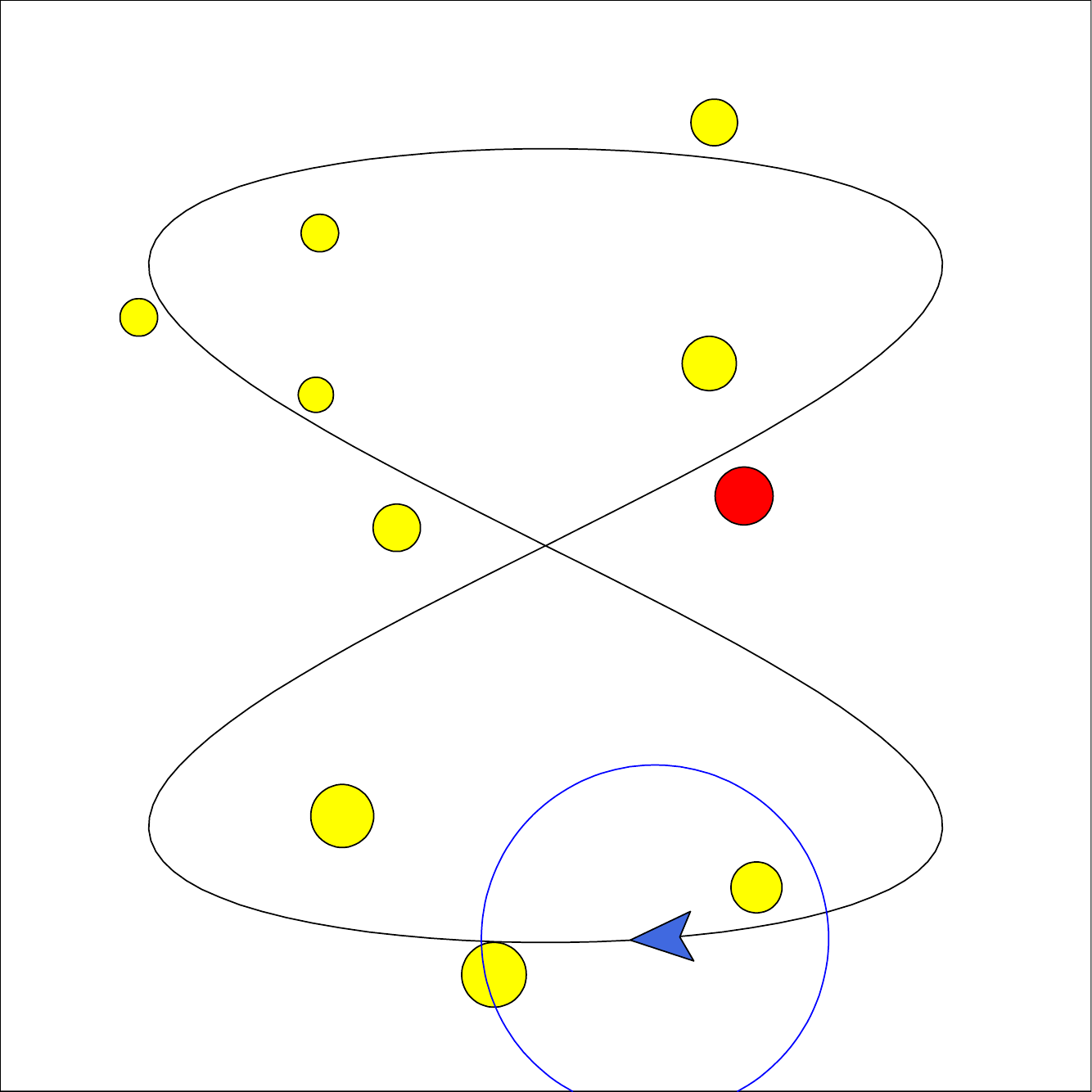} %
  \hfill
  \includegraphics[width=0.325\linewidth]{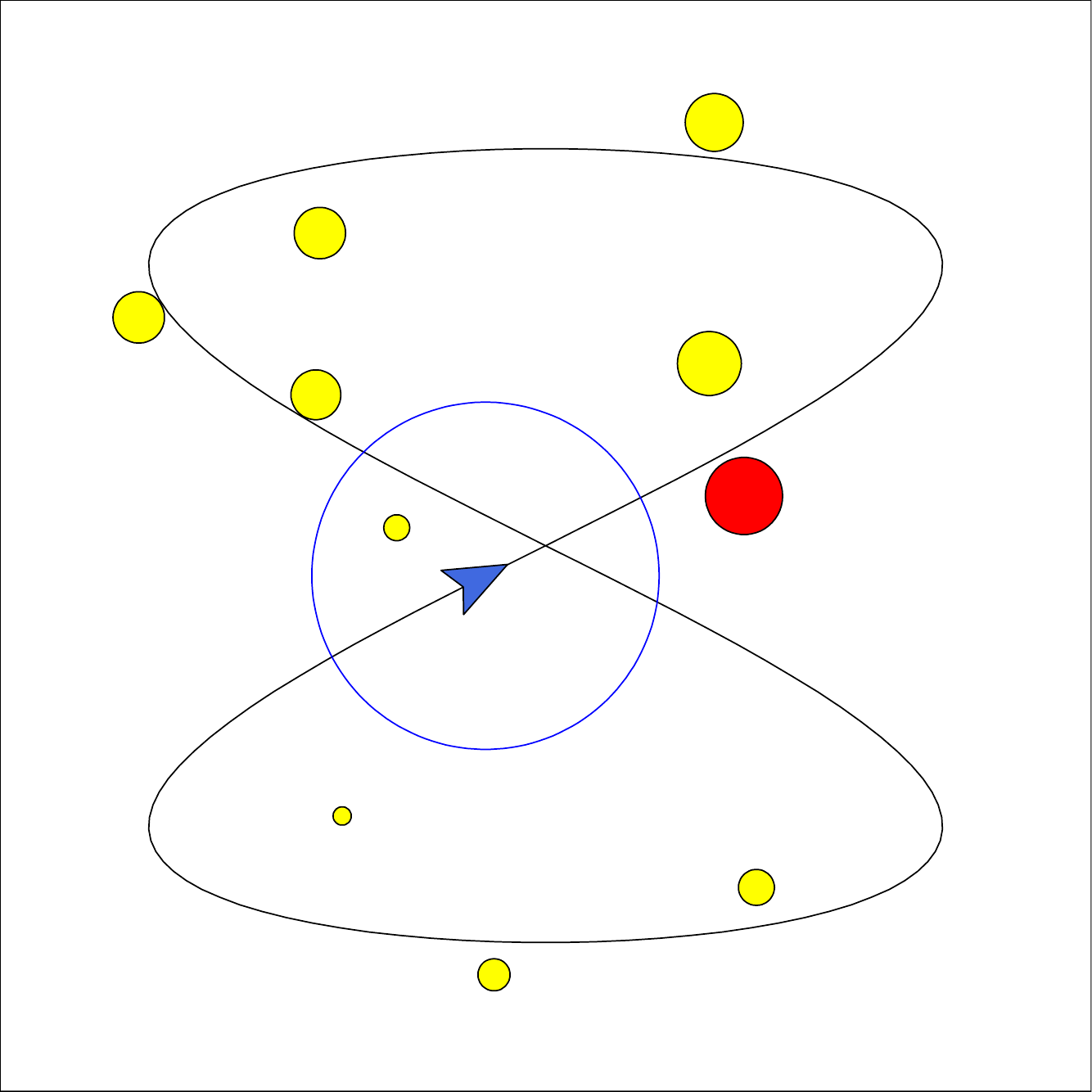}
  \caption{An example with a field consisting of $10$ points.  The
    field $Z(\q,t)$ at each point is indicated by the area of the disk
    centered at the point.  The vehicle footprint is a disk.  The time
    sequence of the three snapshots goes from left to right.  The
    vehicle is moving counter-clockwise around the top half of the
    figure eight and clockwise around the bottom half.  The time
    evolution of the red field point is shown in
    Figure~\ref{fig:2D_sim_speed_steady_state}.}
  \label{fig:vehicle_2D_sim}
\end{figure}
\begin{figure}
  \centering %
 \subfloat[The optimal speed controller.]{%
 \includegraphics[width=0.48\linewidth]{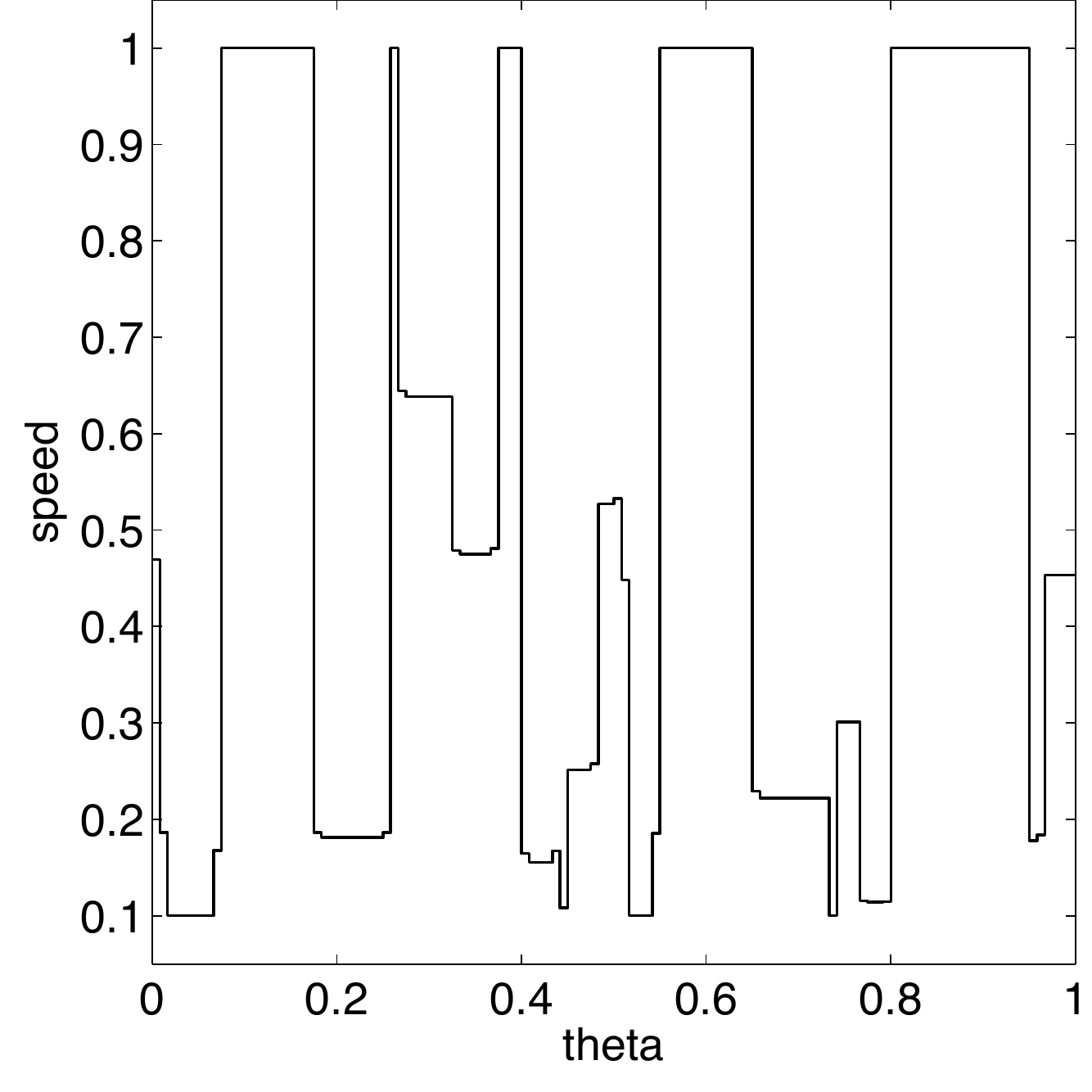}
 \label{fig:speed_profile_2}
 }%
 \hfill
  \subfloat[The field $Z(\q,t)$ at the red (dark) point.]{%
  \includegraphics[width=0.49\linewidth]{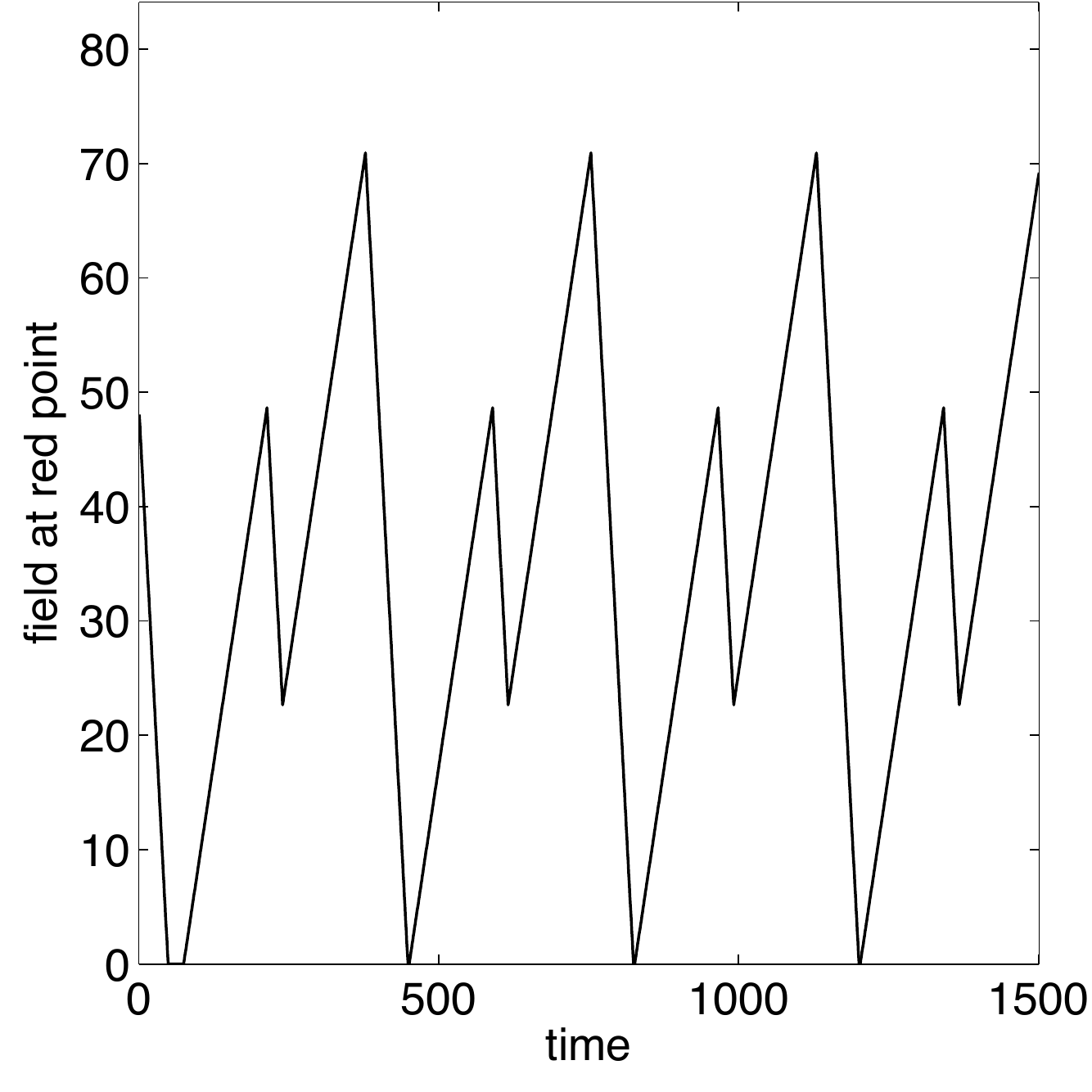}
  \label{fig:steady_state_2}
}%
\caption{The optimal speed controller corresponding to the curve and
  field shown in Figure~\ref{fig:vehicle_2D_sim}.  The minimum speed
  is $\vmin = 0.22\unit{m/s}$ and the maximum speed is $\vmax =
  2\unit{m/s}$.  The field $Z(\q,t)$ is shown for the red (dark) point
  in Figure~\ref{fig:vehicle_2D_sim}. It converges in finite time to a
  steady state.}
  \label{fig:2D_sim_speed_steady_state}
\end{figure}
The field $Z(\q,t)$ for the red (shaded) point in
Figure~\ref{fig:vehicle_2D_sim}, is shown as a function of time in
Figure~\ref{fig:steady_state_2}.  One can see that the field converges
in finite time to a periodic cycle.  In addition, the field goes to
zero during each cycle.  The periodic cycle is the steady-state as
characterized in Section~\ref{sec:optimal_speed}.

\begin{figure*}
  \centering
  \includegraphics[width=0.32\linewidth]{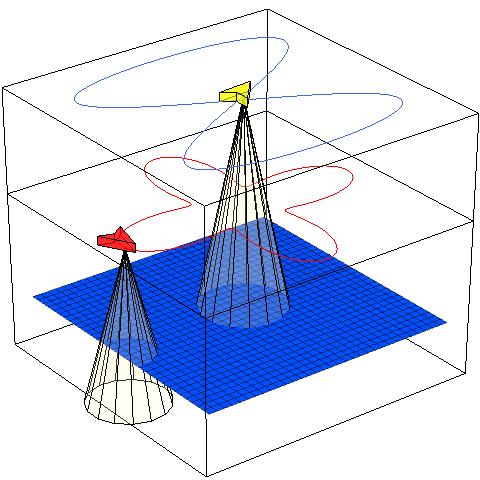} %
  \hfill
  \includegraphics[width=0.32\linewidth]{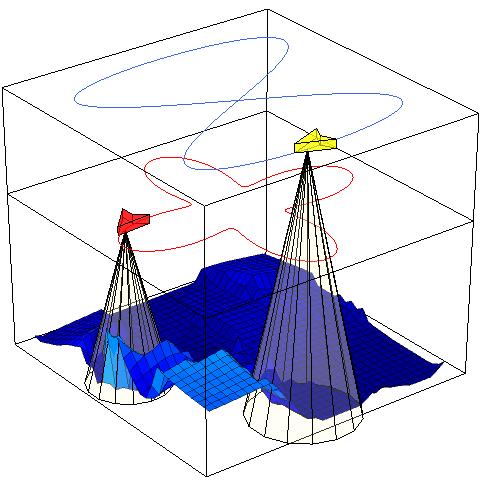} %
  \hfill
  \includegraphics[width=0.32\linewidth]{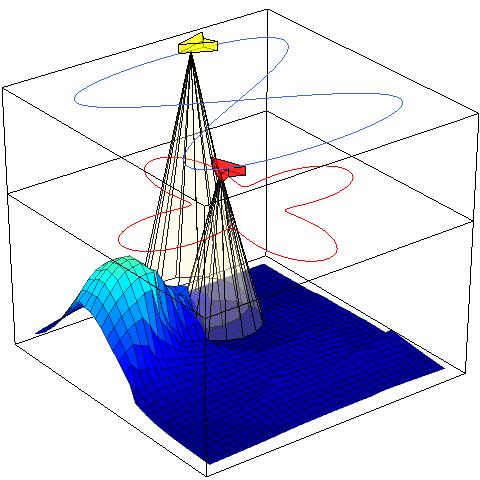}
  \caption{An example of using a discretized approximation of a
    continuous field for two robots.  The grid is $32 \times 32$, and
    the field $Z(\q,t)$ at each point is shown as the surface.  The
    footprint of the higher robot (yellow) is $4/3$ the radius of the
    footprint of the lower (red) robot.  The snapshot sequence goes
    from left to right with the left snapshot showing the initial
    condition of the robots and field.}
  \label{fig:multi_3D_sim}
\end{figure*}
In Figure~\ref{fig:multi_3D_sim}, a simulation is shown for the case
when the entire continuous environment must be monitored by two aerial
robots.  For multiple robots we synthesized a field stabilizing
controller by maximizing the stability margin, as discussed in
Remark~\ref{rmk:MinCycleMultiController}.  The continuous field is
defined over a $690\unit{m}$ by $690\unit{m}$ square, and was
approximated using a $32 \times 32$ grid. For all points $\q\in Q$ we
set the consumption rate $c(\q) = 1$, and the production function is
shown in Figure~\ref{fig:prod_fn}.
\begin{figure}
\centering
  \includegraphics[width=0.6\linewidth]{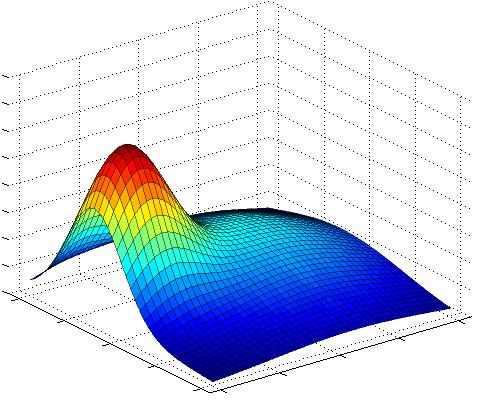}
  \caption{The production function $p(\q)$ for the simulations in
    Figures~\ref{fig:vehicle_3D_sim} and~\ref{fig:multi_3D_sim}.}
  \label{fig:prod_fn}
\end{figure}
Robot $1$ followed a figure-eight path which has a length of
$2630\unit{m}$, while robot $2$ followed a four-leaf clover path with
a length of $2250\unit{m}$.  The footprint for robot $1$, the higher
(yellow) robot had a radius of $100\unit{m}$, and the speed
constraints were given by $v_{\min,1} = 1.5\unit{m/s}$ and $v_{\max,1}
= 15\unit{m/s}$.  The footprint for robot $2$, the lower (red) robot,
had a radius of $133\unit{m}$, and the speed constraints were given by
$v_{\min,2} = 2\unit{m/s}$ and $v_{\max,2} = 20\unit{m/s}$.  The cycle
time for robot $1$ was $T_1 = 519\unit{s}$, and the cycle time for
robot $2$ was $T_2=443\unit{s}$.  A total of $150$ rectangular basis
functions were used for each robot's speed controller.  The
optimization was solved in approximately $10$ seconds.  A snapshot for
a simulation with three robots is shown in Figure~\ref{fig:sample_PM}.
In this case, the green robot flying at a higher altitude has a square
footprint which is oriented with the robot's current heading.

\subsection{A Case Study in Robustness}

In this subsection we demonstrate how we can compute a speed
controller that exhibits robustness to motion errors, modeling
uncertainty, or stochastic fluctuations.  This is important since the
speed controller is computed offline.  It should be noted, however,
that in practice the controller can be recomputed periodically during
the robot's execution, taking into account new information about the
field evolution.

As shown in Corollary~\ref{rmk:MinCycleController} we can maximize
stability margin of a speed controller.  The corollary showed that in
maximizing this metric we obtain some robustness to error.  To explore
the robustness properties of this controller, consider the single
robot example in Figure~\ref{fig:vehicle_3D_sim}.  In this example,
the square $665\unit{m}$ by $665\unit{m}$ environment must be
monitored by an aerial robot. We approximated the continuous field
using a $32 \times 32$ grid.  The consumption rate was set to $c(\q) =
5$, for each field point $\q\in Q$.  The production rate of the field
was given by the function shown in Figure~\ref{fig:prod_fn}.  The
maximum production rate of the field was $0.74$ and the average was
$0.21$.  The robot had a circular footprint with a radius of
$133\unit{m}$, a minimum speed of $\vmin = 1.5\unit{m/s}$ and a
maximum speed of $\vmax = 15\unit{m/s}$.  The path had a cycle length
of $4200\unit{m}$.  If the robot followed the path at constant speed,
then $80$ of the points would be unstable.

For the speed controller, we used $280$ rectangular basis functions,
and solved the optimization as described in
Corollary~\ref{rmk:MinCycleController}, resulting in a stability
margin of $B = 97.8$.  The optimization was solved in approximately
$10$ seconds.  The time for the robot to complete one cycle of the
path using this controller was $T = 439\unit{s}$.
\begin{figure*}
  \centering
  \includegraphics[width=0.32\linewidth]{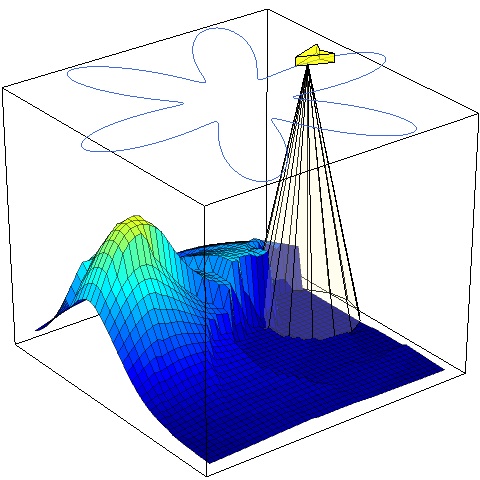} %
  \hfill
  \includegraphics[width=0.32\linewidth]{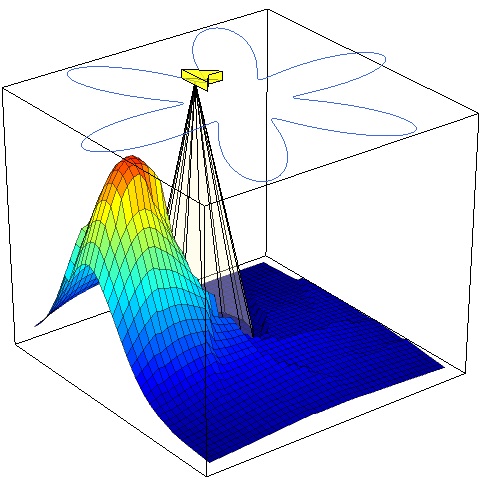} %
  \hfill
  \includegraphics[width=0.32\linewidth]{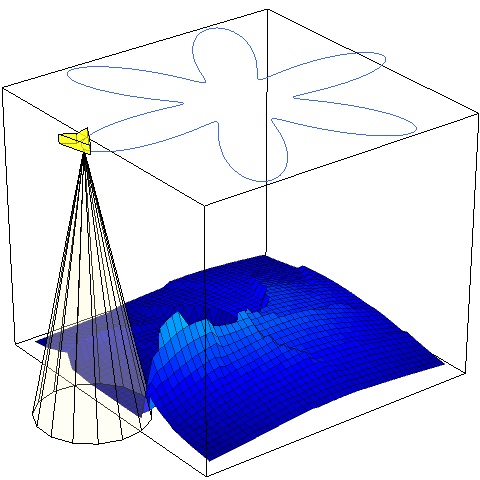}
  \caption{An example of using a discretized approximation of a
    continuous field.  The field $Z(\q,t)$ at each point is shown as
    the surface.  The footprint of the vehicle is a disk, and the
    robot's trajectory is given by the ``six leaf'' clover.  The time
    sequence of the three snapshots goes from left to right.  In the
    rightmost snapshot the vehicle has just finished reducing the
    large peak that forms in the left and center snapshots.}
  \label{fig:vehicle_3D_sim}
\end{figure*}

\textbf{Stochastic field evolution:} Now, suppose that we add
zero-mean noise to the production function.  Thus, the robot based its
speed controller on the ``nominal'' production function $\bar p(\q)$
(shown in Figure~\ref{fig:prod_fn}), but the actual production
function is $p(\q,t) = \bar p(\q) + n(t,\q)$, where $n(t,\q)$ is
noise.  For the simulation, at each time instant $t$, and for each
point $\q\in Q$, we independently draw $n(t,\q)$ uniformly from the
set $[-n_{\max},n_{\max}]$, where $n_{\max} > 0$ is the maximum value
of the noise.

The simulations were carried out as follows.  We varied the magnitude
of the noise $n_{\max}$, and studied the maximum value reached by the
field.  For each value $n_{\max}$, we performed $20$ trials, and in
each trial we recorded the maximum value reached by the field on a
time horizon of $2500\unit{s}$.  In Figure~\ref{fig:noise_exp}, we
display statistics from the $20$ independent trials at each noise
level, namely the mean, minimum, and maximum, as well as the standard
deviation.  With zero added noise, one can see that the mean, minimum,
and maximum all coincide.  As noise is added to the evolution, the
difference between the minimum and maximum value grows.  However, it
is interesting to note that while the performance degrades (i.e., the
mean increases with increasing noise), the system remains stable.
Thus, the simulation demonstrates some of the robustness properties of
the proposed controller.

\begin{figure}
\centering
\includegraphics[width=0.7\linewidth]{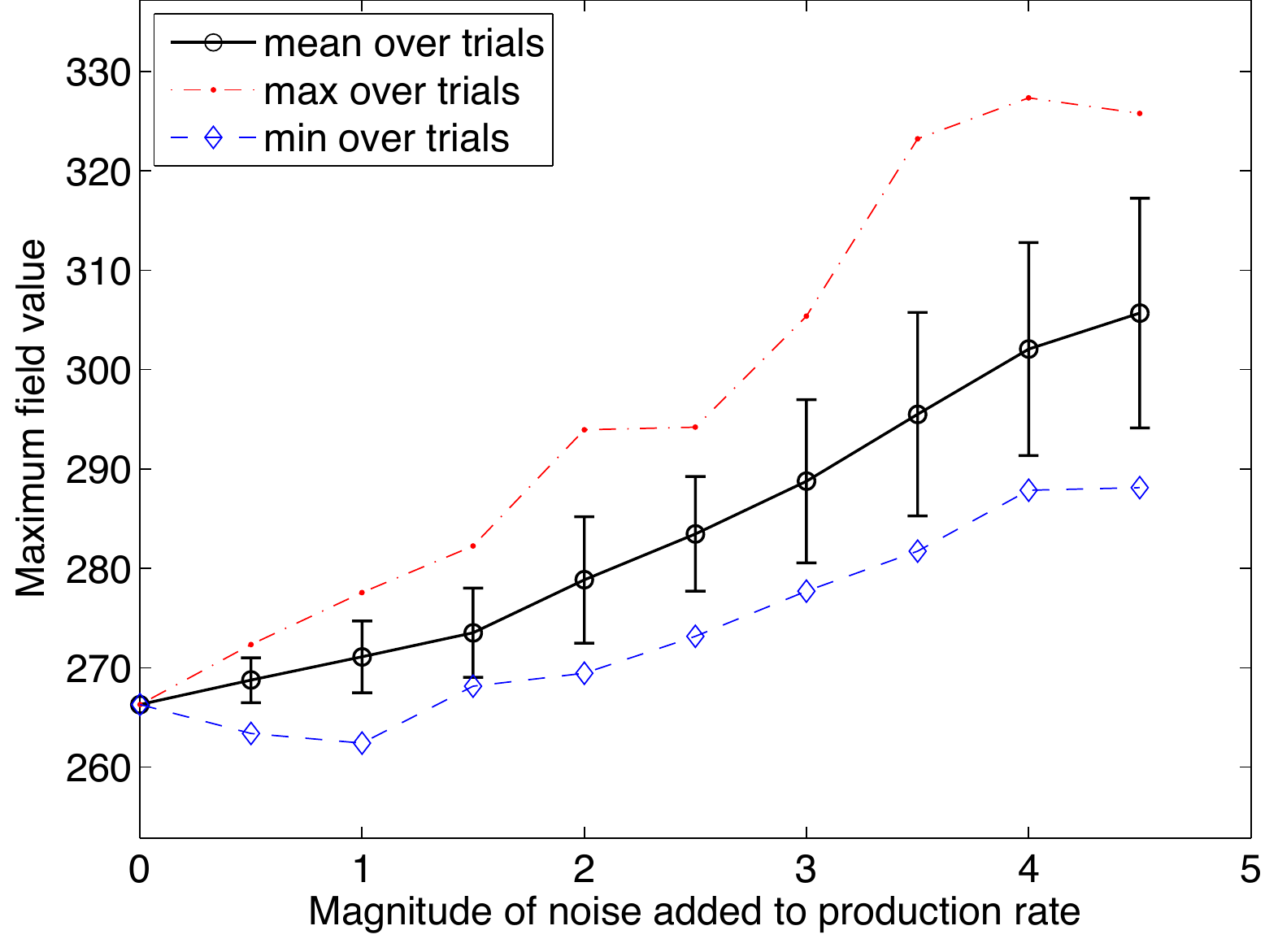}
\caption{The robustness of the speed controller to noise in the
  production rate.  For each value of noise $n_{\max}$, the plot shows
  statistics on the maximum value reached by the field over 20
  independent trials.}
\label{fig:noise_exp}
\end{figure}

\textbf{Parameter errors:} We can also consider robustness to
parameter errors. In particular, consider the case where the robot
bases its optimization on a production rate of $\bar p(\q)$ (shown in
Figure~\ref{fig:prod_fn}), but the actual production rate is given by
$p(\q) = \bar p(\q) +\epsilon$, where $\epsilon >0$ (note that the
field is trivially stable for any $\epsilon \leq 0$). By maximizing
the stability margin, we obtain some level of robustness against this
type of parameter uncertainty.  The amount of error that we can
tolerate is directly related to the stability margin $B = 97.8$, as
shown in Corollary~\ref{rmk:MinCycleController}. In particular, we
obtain $\epsilon < B c(\q_i) /\big(\sum_{j=1}^n \alpha_j \int_0^1
\beta_j(\theta)d\theta\big) = 0.074$. We performed simulations of
monitoring task for successively larger values of $\epsilon$.  From
this data, we verified that the field remains stable for any $\epsilon
\leq 0.07$.  For this example, the average value of $p(\q)$ over all
points $\q$ is $0.21$, and so we can handle uncertainty in the
magnitude of the production rate on the order of $30\%$.

\textbf{Tracking error:} After running the speed optimization, a robot
has a desired trajectory, consisting of the pre-specified path and the
optimized speed along the path.  In practice, this trajectory will be
the input to a tracking controller, which takes into account the robot
dynamics.  Since there are inevitably tracking errors, the stability
margin of the controller is needed to ensure field stability.  As an
example, we considered a unicycle model for an aerial robot. In this
model, the robot's configuration is given by a heading $\phi$ and a
position $(x,y)$.  The control inputs are the linear and angular
speeds: $\dot x = v\cos \phi$, $\dot y = v\sin\phi$, and $\dot \phi =
\omega$. The linear speed had bounds of $\vmin = 1.5\unit{m/s}$ and
$\vmax = 15\unit{m/s}$, and the angular speed was upper bounded by
$0.5\unit{rad/s}$.  We used the same speed controller as in the
previous two examples (maximizing the stability margin).  For
trajectory tracking, we used a dynamic feedback linearization
controller~\cite{GO-ADL-MV:02}.  We chose conservative controller
gains of $0.5$ for the proportional and derivative control in order to
accentuate the tracking error.  The results are shown in
Figure~\ref{fig:tracking}.  Due to the stability margin of $97.8$, the
field remains stable, even in the presence of this tracking error.
However, in simulation, the maximum field height increased by about
$13\%$ from $268$ (as shown for the zero noise case in
Figure~\ref{fig:noise_exp}) to $305$.

\begin{figure}
\centering
\includegraphics[width=0.48\linewidth]{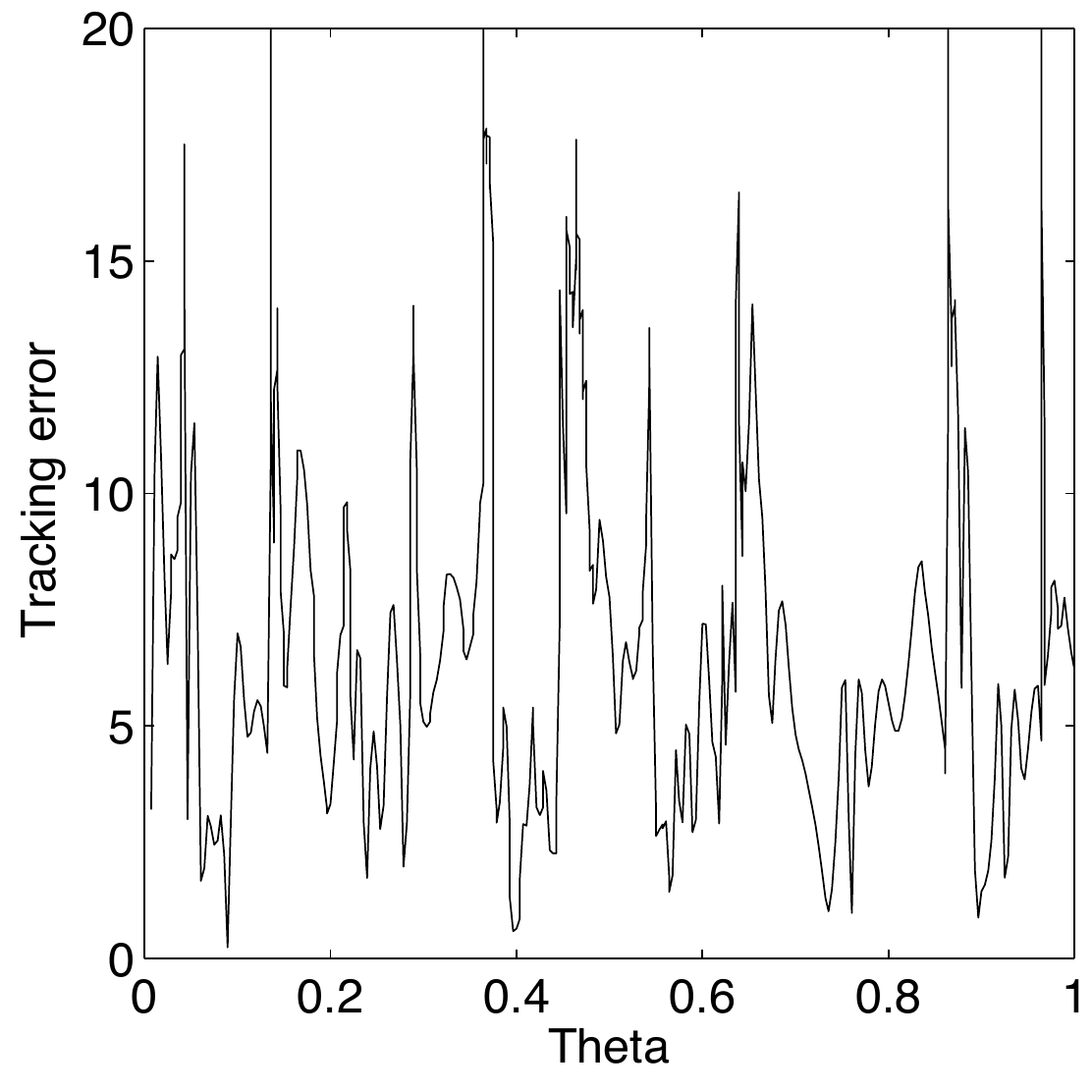} \hfill
\includegraphics[width=0.48\linewidth]{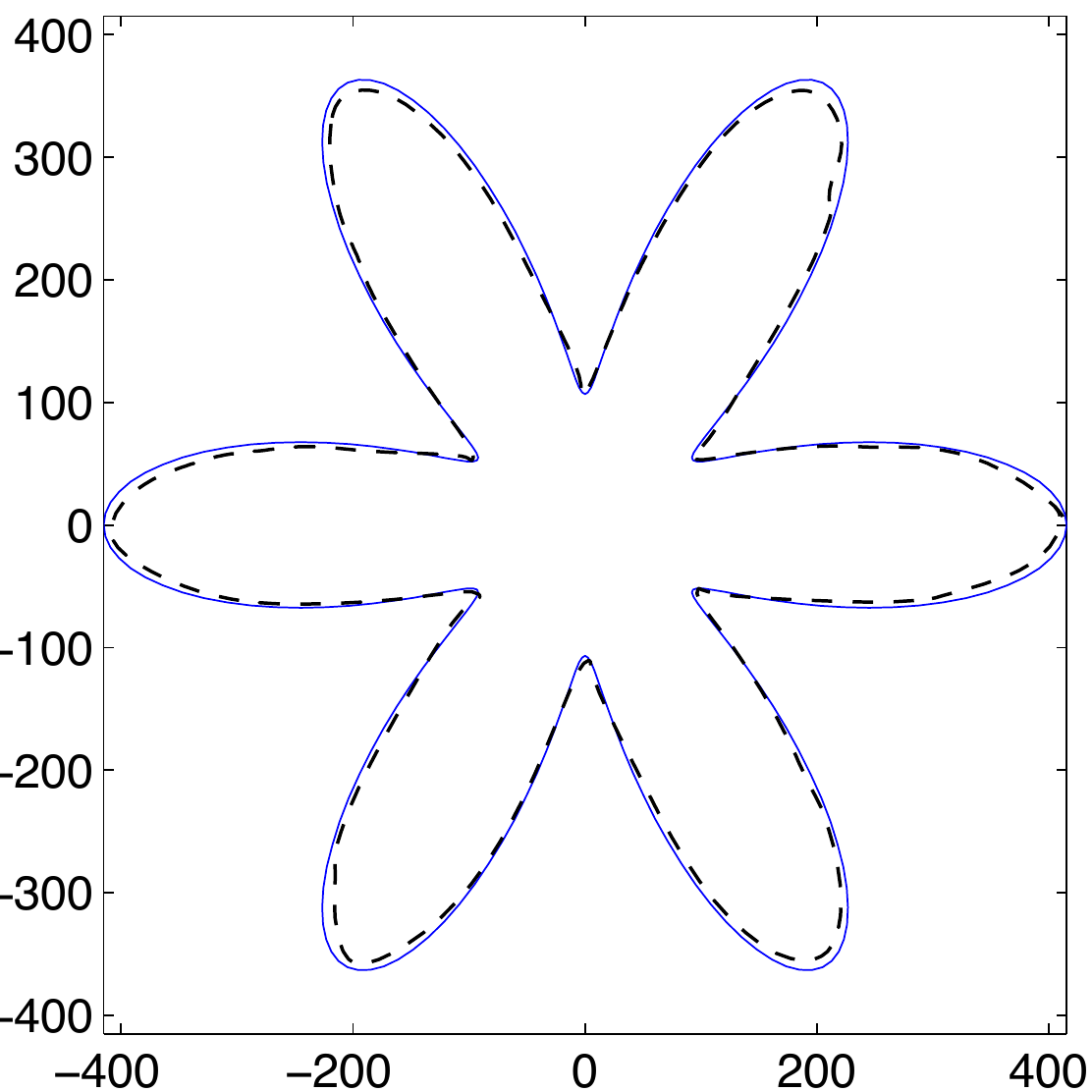}
\caption{The robustness of the speed controller to tracking errors.
  The left figure shows the absolute tracking error in meters.  In the
  right figure, the solid line is the desired path, and the dashed
  line is the path executed by the unicycle robot.}
\label{fig:tracking}
\end{figure}

%%%%%%%%%%%%%%%%%%%%%%%%%
\section{Conclusions, Extensions, and Open Problems}
\label{sec:conclusions}
%%%%%%%%%%%%%%%%%%%%%%%%%
In this paper we proposed a model for persistent sweeping and
monitoring tasks and derived controllers for robots to accomplish
those tasks.  We specifically considered the case in which robots are
confined to pre-specified, closed paths, along which their speed must
be controlled.  We formulated an LP whose solution gives speed
controllers that keep the accumulation bounded everywhere in the
environment for single robots and multiple robots.  For single robots,
we also formulated a different LP to give the optimal controller---the
one that keeps the accumulation function as low as possible
everywhere.  We see this as a solution to one kind of persistent task
for robots, but many open problems remain.

%%%%%%%%%%%%%%%%%%%%%%%%%%%%%%%%%%%
\subsection{Extensions and Open Problems}
\label{sec:defining_robotic_persistence}
%%%%%%%%%%%%%%%%%%%%%%%%%%%%%%%%%%
We are interested in the general problem of solving persistent tasks,
which we broadly define as tasks that require perpetual attention by a
robotic agent.  The main objective of a persistent task is to maintain
the accumulation of some undesirable quantity at as low a value as
possible over an environment using one or multiple robotic agents.
The difficulty of this problem depends on what is known by the robots,
and precisely what the robots' capabilities are.  Let us enumerate
several possible dimensions for extension on this problem.
\begin{itemize}
\item \textbf{Trajectory vs. Path vs. Speed} One might consider
  controlling only the speed over a prescribed path, as we have done
  in this paper, only the path with a prescribed speed, or complete
  trajectory planning.

\item \textbf{Single vs. Multiple Robots} There may be only one robot,
  or there may be a team of robots, potentially with heterogeneous
  capabilities and constraints.

\item \textbf{Known vs. Unknown Production Rate} The robot may know (or
  be able to sense) the production rate, or it may have to learn it
  over time from measurements.

\item \textbf{Constant vs. Time-Varying Production Rate} The production
  rate may be constant in time, or it may change indicating a change
  in the environment's dynamics.

\item \textbf{Finite vs. Continuum Points of Interest} The points of
  interest in the environment may be viewed as a finite set of
  discrete points over which to control the accumulation, or as an
  infinite continuum in which we would like to control the
  accumulation at every point.
\end{itemize}
In this paper we specifically considered \emph{speed} control over a
prescribed path of both \emph{single and multiple} robots in a
\emph{finite} environment with a \emph{known}, \emph{constant}
production rate.

One interesting direction in which to expand this work is to consider
planning full trajectories for robots to carry out persistent tasks.
The high dimensionality of the space of possible trajectories makes
this a difficult problem.  However, if the robot's path is limited by
some inherent constraints, then this problem may admit solutions with
guarantees.  For example, underwater gliders are commonly constrained
to take piecewise linear paths, which can be parametrized with a low
number of parameters.  Another direction of extension is to have a
robot solve the LP for its controller on-line.  This would be useful
if, for example the production rate is not known before hand, but can
be sensed over the sensor footprint of the robot.  Likewise if the
production rate changes in time, it would be useful for a robot to be
able to adjust its controller on-line to accommodate these changes.  A
promising approach for this is to repeatedly solve for the LP in a
receding horizon fashion, using newly acquired information to update
the model of the field evolution.  We continue to study problems of
persistent tasks for robots in these and other directions.

%%%%%%%%%%%%%%%%%%%%%%%%%%
\appendix[Periodic Position-Feedback Controllers]
\label{sec:appendix}
%%%%%%%%%%%%%%%%%%%%%%%%%%

In this appendix we will prove Proposition~\ref{prop:periodic_cont} on
periodic position-feedback controllers.

Consider a general speed controller $(Z,\mathrm{IC},t)\mapsto
v(\theta,Z,\mathrm{IC},t)$ where $\mathrm{IC} := (Z(\q,0),\theta(0))$ is
the set of initial conditions.  Since $\vmin(\theta) >0$ for all
$\theta\in[0,1]$, the value of $\theta$ strictly monotonically
increases from $0$ to $1$ for every valid controller (once it reaches
$1$ it then wraps around to $0$).  In addition, the evolution of $Z$
is deterministic and is uniquely determined by the initial conditions
and the speed controller, as given in~\eqref{eq:Z_diffeq}.  Because of
this, every controller of the form $v(\theta,Z,\mathrm{IC},t)$ can be
written as an infinite sequence of controllers
$v_1(\theta,\mathrm{IC}),v_2(\theta,\mathrm{IC}),\ldots$, where
controller $v_k(\theta,\mathrm{IC})$ is the controller used during the
$k$th period (or cycle).

With the above discussion in place, we are now ready to prove
Proposition~\ref{prop:periodic_cont}

\begin{proof}[Proof of Proposition~\ref{prop:periodic_cont}]
  To begin, consider a feasible persistent monitoring task and a
  field stabilizing controller of the form $v(\theta,Z,\mathrm{IC},t)$, where
  $\mathrm{IC} := (Z(\q,0),\theta(0))$.  Without loss of generality,
  we can assume that $\theta(0) := 0$.  From the discussion above, we
  can write the general controller as a sequence of controllers
  $v_1(\theta,\mathrm{IC}),v_2(\theta,\mathrm{IC}),\ldots$, where
  controller $v_k(\theta,\mathrm{IC})$, $k\in\nat$ is used on the
  $k$th period (or cycle).

  Since the controller is stable, there exists a $Z_{\max}$ such that
  for every set of initial conditions $\mathrm{IC}$, we have
  $\limsup_{t\to+\infty} Z(\q,t) \leq Z_{\max}$.  Let us fix $\epsilon
  > 0$ and fix the initial conditions to a set of values
  $\overline{\mathrm{IC}}$ such that $Z(\q,0) > Z_{\max} +\epsilon$
  for all $\q\in Q$. Now, we will prove the result by constructing a
  periodic position-feedback controller.
  
  Let $t_1 = 0$ and define
  \[
  t_k = t_{k-1} + \int_0^1 \frac{1}{v_k(\theta,\overline{\mathrm{IC}})}d\theta, 
  \]
  for each integer $k \geq 2$.  Thus, controller $v_k$ is used during
  the time interval $[t_k,t_{k+1}]$.  Following the same argument as
  in the proof of Lemma~\ref{lem:stab_cond}, we have that for each
  $\q\in Q$,
  \begin{multline*}
    Z(\q,t_k) - Z(\q,t_1) \geq p(\q) \sum_{\ell = 1}^k \int_0^1
    \frac{1}{v_{\ell}(\theta,\overline{\mathrm{IC}})}d\theta \\- c(\q) \sum_{\ell=1}^k
    \int_{F(\q)} \frac{1}{v_{\ell}(\theta,\overline{\mathrm{IC}})}d\theta.
  \end{multline*}
  
  Now, $Z(\q,t_1) > Z_{\max} +\epsilon$ is the initial condition, and
  $\limsup_{t\to+\infty} Z(\q,t) \leq Z_{\max}$.  Thus, for every
  fixed $\delta \in(0, \epsilon)$, there exists a finite $k$ such that
  $Z(\q,t_k) \leq Z_{\max} + \delta$.  Since this must hold for every
  $\q\in Q$, we see that there exists an integer $k$ such that
  \[
  p(\q) \sum_{\ell = 1}^k \int_0^1 \frac{1}{v_{\ell}(\theta,\overline{\mathrm{IC}})}d\theta -
  c(\q) \sum_{\ell=1}^k \int_{F(\q)}
  \frac{1}{v_{\ell}(\theta,\overline{\mathrm{IC}})}d\theta < 0,
  \]
  for every $\q\in Q$.  Rearranging the previous equation we obtain
  \begin{equation}
    \label{eq:period_depend_stab}
    c(\q)\int_{F(\q)} \sum_{\ell=1}^k \frac{1}{v_{\ell}(\theta,\overline{\mathrm{IC}})}d\theta >
    p(\q)\int_0^1 \sum_{\ell = 1}^k
    \frac{1}{v_{\ell}(\theta,\overline{\mathrm{IC}})}d\theta.
  \end{equation}
  
  Therefore, let us define the periodic controller
  \[
  v(\theta) = k \left(\sum_{\ell = 1}^k
    \frac{1}{v_{\ell}(\theta,\overline{\mathrm{IC}})} \right)^{-1},
  \]
  for each $\theta\in[0,1]$.  Note that if $\vmin(\theta) \leq
  v_{\ell}(\theta,\overline{\mathrm{IC}}) \leq \vmax(\theta)$ for all
  $\theta\in[0,1]$ and all $\ell\in\nat$, then
  \[
  \vmin(\theta) \leq v(\theta) \leq \vmax(\theta).
  \]
  But, combining the definition of $v(\theta)$
  with~\eqref{eq:period_depend_stab}, we immediately see that
  $v(\theta)$ satisfies the stability condition in
  Lemma~\ref{lem:stab_cond}, and thus $v(\theta)$ is a field stabilizing
  position-feedback controller.
  \end{proof}

% Generated by IEEEtran.bst, version: 1.13 (2008/09/30)


\begin{thebibliography}{10}
\providecommand{\url}[1]{#1}
\csname url@samestyle\endcsname
\providecommand{\newblock}{\relax}
\providecommand{\bibinfo}[2]{#2}
\providecommand{\BIBentrySTDinterwordspacing}{\spaceskip=0pt\relax}
\providecommand{\BIBentryALTinterwordstretchfactor}{4}
\providecommand{\BIBentryALTinterwordspacing}{\spaceskip=\fontdimen2\font plus
\BIBentryALTinterwordstretchfactor\fontdimen3\font minus
  \fontdimen4\font\relax}
\providecommand{\BIBforeignlanguage}[2]{{%
\expandafter\ifx\csname l@#1\endcsname\relax
\typeout{** WARNING: IEEEtran.bst: No hyphenation pattern has been}%
\typeout{** loaded for the language `#1'. Using the pattern for}%
\typeout{** the default language instead.}%
\else
\language=\csname l@#1\endcsname
\fi
#2}}
\providecommand{\BIBdecl}{\relax}
\BIBdecl

\bibitem{KakalisVentikosJouranlOfHazardousMaterials08}
N.~M.~P. Kakalis and Y.~Ventikos, ``Robotic swarm concept for efficient oil
  spill confrontation,'' \emph{Journal of Hazardous Materials}, vol. 154, pp.
  880--887, 2008.

\bibitem{MacKenzieBalchAAAI93RobotVacuuming}
D.~MacKenzie and T.~Balch, ``Making a clean sweep: Behavior-based vacuuming,''
  in \emph{Proceedings of the AAAI Fall Symposium: Instantiating Real-world
  Agents}, Raleigh, NC, March 1993.

\bibitem{NC-NA.ea:09}
N.~Correll, N.~Arechiga, A.~Bolger, M.~Bollini, B.~Charrow, A.~Clayton,
  F.~Dominguez, K.~Donahue, S.~Dyar, L.~Johnson, H.~Liu, A.~Patrikalakis,
  T.~Robertson, J.~Smith, D.~Soltero, M.~Tanner, L.~White, and D.~Rus,
  ``Building a distributed robot garden,'' in \emph{IEEE/RSJ Int. Conf. on
  Intelligent Robots \& Systems}, St. Louis, MO, 2009, pp. 1509--1516.

\bibitem{SmithJFR10UnderwaterGliders}
R.~N. Smith, M.~Schwager, S.~L. Smith, B.~H. Jones, D.~Rus, and G.~S. Sukhatme,
  ``Persistent ocean monitoring with underwater gliders: Adapting
  spatiotemporal sampling resolution,'' \emph{Journal of Field Robotics}, 2010,
  {S}ubmitted.

\bibitem{DunbabinAusConfOnRobAndAuto04ReefRobot}
M.~Dunbabin, J.~Roberts, K.~Usher, and P.~Corke, ``A new robot for
  environmental monitoring on the {G}reat {B}arrier {R}eef,'' in
  \emph{Australasian Conference on Robotics and Automation}, Australian
  National University, Canberra, December 2004.

\bibitem{SrinivasanVSSN04AirborneTrafficSurveillance}
S.~Srinivasan, H.~Latchman, J.~Shea, T.~Wong, and J.~McNair, ``Airborne traffic
  surveillance systems: video surveillance of highway traffic,'' in
  \emph{International workshop on Video Surveillance \& Sensor Networks}, New
  York, NY, USA, October 2004, pp. 131--135.

\bibitem{SLS-DR:09a}
S.~L. Smith and D.~Rus, ``Multi-robot monitoring in dynamic environments with
  guaranteed currency of observations,'' in \emph{{IEEE} Conf. on Decision and
  Control}, Atlanta, GA, Dec. 2010, pp. 514--521.

\bibitem{KantZuckerIJRR86PathVelocityDecomposition}
K.~Kant and S.~W. Zucker, ``Toward efficient trajectory planning: The
  path-velocity decomposition,'' \emph{International Journal of Robotics
  Research}, vol.~5, no.~3, pp. 72--89, 1986.

\bibitem{DGL:84}
D.~G. Luenberger, \emph{Linear and Nonlinear Programming}, 2nd~ed.\hskip 1em
  plus 0.5em minus 0.4em\relax Addison-Wesley, 1984.

\bibitem{EWC:00}
E.~W. Cheney, \emph{Introduction to Approximation Theory}, 2nd~ed.\hskip 1em
  plus 0.5em minus 0.4em\relax {AMS} Chelsea Publishing, 2000.

\bibitem{EJC-MBW:08}
E.~J. Candes and M.~B. Wakin, ``An introduction to compressive sampling,''
  \emph{{IEEE} Signal Processing Magazine}, pp. 21--30, Mar. 2008.

\bibitem{RS-JJES:92}
R.~Sanner and J.~Slotine, ``Gaussian networks for direct adaptive control,''
  \emph{IEEE Transactions on Neural Networks}, vol.~3, no.~6, pp. 837--863,
  1992.

\bibitem{TP-SS:03}
T.~Poggio and S.~Smale, ``The mathematics of learning: Dealing with data,''
  \emph{Notices of the American Mathematical Society}, vol.~50, no.~5, pp.
  537--544, 2003.

\bibitem{Gandin63ObjectiveAnalysis}
L.~S. Gandin, \emph{Objective Analysis of Meteorological Fields}.\hskip 1em
  plus 0.5em minus 0.4em\relax Jerusalem: Israeli Program for Scientific
  Translations, 1966, (originally published in Russian in 1963, Gidrometeor,
  Leningrad).

\bibitem{CressieMathematicalGeology90Kriging}
N.~Cressie, ``The origins of kriging,'' \emph{Mathematical Geology}, vol.~22,
  no.~3, pp. 239--252, 1990.

\bibitem{GrocholskyPhDThesis02}
B.~Grocholsky, ``Information-theoretic control of multiple sensor platforms,''
  Ph.D. dissertation, University of Sydney, 2002.

\bibitem{Lynch08}
K.~M. Lynch, I.~B. Schwartz, P.~Yang, and R.~A. Freeman, ``Decentralized
  environmental modeling by mobile sensor networks,'' \emph{IEEE Transactions
  on Robotics}, vol.~24, no.~3, pp. 710--724, June 2008.

\bibitem{Yang08}
P.~Yang, R.~A. Freeman, and K.~M. Lynch, ``Mulit-agent coordination by
  decentralized estimation and control,'' \emph{{IEEE} Transactions on
  Automatic Control}, vol.~53, no.~11, pp. 248--2496, December 2008.

\bibitem{CortesTAC09KrigedKalmanFilter}
J.~Cort{\'e}s, ``Distributed {K}riged {K}alman filter for spatial estimation,''
  \emph{IEEE Transactions on Automatic Control}, vol.~54, no.~12, pp.
  2816--2827, 2009.

\bibitem{Zhang10}
F.~Zhang and N.~E. Leonard, ``Cooperative filters and control for cooperative
  exploration,'' \emph{IEEE Transactions on Automatic Control}, vol.~55, no.~3,
  pp. 650--663, 2010.

\bibitem{GrahamCortesTAC10VoronoiTrajectory}
R.~Graham and J.~Cort{\'e}s, ``Adaptive information collection by robotic
  sensor networks for spatial estimation,'' \emph{IEEE Transactions on
  Automatic Control}, 2010, {S}ubmitted.

\bibitem{LeNyCDC09PathOptGP}
J.~L. Ny and G.~J. Pappas, ``On trajectory optimization for active sensing in
  gaussian process models,'' in \emph{{IEEE} Conf. on Decision and Control and
  Chinese Control Conference}, Shanghai, China, December 2009, pp. 6282--6292.

\bibitem{BourgaultIROS02}
F.~Bourgault, A.~A. Makarenko, S.~B. Williams, B.~Grocholsky, and H.~F.
  Durrant-Whyte, ``Information based adaptive robotic exploration,'' in
  \emph{IEEE/RSJ Int. Conf. on Intelligent Robots \& Systems}, 2002, pp.
  540--545.

\bibitem{EMA-SPF-JSBM:00}
E.~M. Arkin, S.~P. Fekete, and J.~S.~B. Mitchell, ``Approximation algorithms
  for lawn mowing and milling,'' \emph{Computational Geometry: Theory and
  Applications}, vol.~17, no. 1-2, pp. 25--50, 2000.

\bibitem{IR-VLS-APN-HC:04}
I.~Rekleitis, V.~Lee-Shue, A.~P. New, and H.~Choset, ``Limited communication
  multi-robot team based coverage,'' in \emph{{IEEE} Int. Conf. on Robotics and
  Automation}, New Orleans, LA, Apr. 2004, pp. 3462--3468.

\bibitem{KoenigSzymanskiLiuAnnalsofMathAI01Ants}
S.~Koenig, B.~Szymanski, and Y.~Liu, ``Efficient and inefficient ant coverage
  methods,'' \emph{Annals of Mathematics and Artificial Intelligence}, vol.~31,
  pp. 41--76, 2001.

\bibitem{HC:01}
H.~Choset, ``Coverage for robotics \textendash{} {A} survey of recent
  results,'' \emph{Annals of Mathematics and Artificial Intelligence}, vol.~31,
  no. 1-4, pp. 113--126, 2001.

\bibitem{BB-JPH-JV:08}
B.~Bethke, J.~P. How, and J.~Vian, ``Group health management of {UAV} teams
  with applications to persistent surveillance,'' in \emph{{A}merican {C}ontrol
  {C}onference}, Seattle, WA, Jun. 2008, pp. 3145--3150.

\bibitem{BB-JR-JPH-MAV-JV:10}
B.~Bethke, J.~Redding, J.~P. How, M.~A. Vavrina, and J.~Vian, ``Agent
  capability in persistent mission planning using approximate dynamic
  programming,'' in \emph{{A}merican {C}ontrol {C}onference}, Baltimore, MD,
  Jun. 2010, pp. 1623--1628.

\bibitem{YC:04}
Y.~Chevaleyre, ``Theoretical analysis of the multi-agent patrolling problem,''
  in \emph{IEEE/WIC/ACM Int. Conf. Intelligent Agent Technology}, Beijing,
  China, Sep. 2004, pp. 302--308.

\bibitem{YE-NA-GAK:07}
Y.~Elmaliach, N.~Agmon, and G.~A. Kaminka, ``Multi-robot area patrol under
  frequency constraints,'' in \emph{{IEEE} Int. Conf. on Robotics and
  Automation}, Roma, Italy, Apr. 2007, pp. 385--390.

\bibitem{NN-IK:08}
N.~Nigram and I.~Kroo, ``Persistent surveillance using multiple unmannded air
  vehicles,'' in \emph{IEEE Aerospace Conf.}, Big Sky, MT, May 2008, pp. 1--14.

\bibitem{DBK-RWB-RSH:08}
D.~B. Kingston, R.~W. Beard, and R.~S. Holt, ``Decentralized perimeter
  surveillance using a team of {UAVs},'' \emph{IEEE Transactions on Robotics},
  vol.~24, no.~6, pp. 1394--1404, 2008.

\bibitem{PFH-DS-MWS:07}
P.~F. Hokayem, D.~Stipanovi{\'c}, and M.~W. Spong, ``On persistent coverage
  control,'' in \emph{{IEEE} Conf. on Decision and Control}, New Orleans, LA,
  Dec. 2007, pp. 6130--6135.

\bibitem{LK:75}
L.~Kleinrock, \emph{Queueing Systems. Volume I: Theory}.\hskip 1em plus 0.5em
  minus 0.4em\relax John Wiley, 1975.

\bibitem{DJS-GJvR:93a}
D.~J. Bertsimas and G.~J. {van~Ryzin}, ``Stochastic and dynamic vehicle routing
  in the {E}uclidean plane with multiple capacitated vehicles,''
  \emph{Operations Research}, vol.~41, no.~1, pp. 60--76, 1993.

\bibitem{SLS-MP-FB-EF:09a}
S.~L. Smith, M.~Pavone, F.~Bullo, and E.~Frazzoli, ``Dynamic vehicle routing
  with priority classes of stochastic demands,'' \emph{SIAM Journal on Control
  and Optimization}, vol.~48, no.~5, pp. 3224--3245, 2010.

\bibitem{MA-PS:06}
M.~Ahmadi and P.~Stone, ``A multi-robot system for continuous area sweeping
  tasks,'' in \emph{{IEEE} Int. Conf. on Robotics and Automation}, Orlando, FL,
  May 2006, pp. 1724--1729.

\bibitem{GO-ADL-MV:02}
G.~Oriolo, A.~D. Luca, and M.~Vendittelli, ``{WMR} control via dynamic feedback
  linearization: design, implementation, and experimental validation,''
  \emph{IEEE Transactions on Control Systems Technology}, vol.~10, no.~6, pp.
  835--852, 2002.

\end{thebibliography}
\end{document}